\renewcommand\footnotetextcopyrightpermission[1]{}
  \providecommand\BibTeX{{%
    \normalfont B\kern-0.5em{\scshape i\kern-0.25em b}\kern-0.8em\TeX}}}
\DeclarePairedDelimiterX{\infdivx}[2]{(}{)}{%
  #1\;\delimsize\|\;#2%
}
\newcommand{\trace}{\text{tr}}
\newcommand{\calA}{\mathcal{A}}
\newcommand{\calB}{\mathcal{B}}
\newcommand{\calN}{\mathcal{N}}
\newcommand{\calO}{\mathcal{O}}
\newcommand{\calX}{\mathcal{X}}
\newcommand{\calY}{\mathcal{Y}}
\newcommand{\calD}{\mathcal{D}}
\newcommand{\R}{\mathbb{R}}
\newcommand{\Pb}{\mathbb{P}}
\newcommand{\closer}[3]{{\kern-#1ex{#2}\kern-#3ex}}
\mathchardef\mhyphen="2D
\DeclareMathOperator{\E}{\mathbb{E}}
\DeclareMathOperator{\Var}{\mathbb{V}}
\newcommand\reallywidehat[1]{\arraycolsep=0pt\relax%
\begin{array}{c}
\stretchto{
  \scaleto{
    \scalerel*[\widthof{\ensuremath{#1}}]{\kern-.5pt\bigwedge\kern-.5pt}
    {\rule[-\textheight/2]{1ex}{\textheight}} %WIDTH-LIMITED BIG WEDGE
  }{\textheight} % 
}{0.5ex}\\           % THIS SQUEEZES THE WEDGE TO 0.5ex HEIGHT
#1\\                 % THIS STACKS THE WEDGE ATOP THE ARGUMENT
\rule{-1ex}{0ex}
\end{array}
}
\definecolor{azure}{rgb}{0.0, 0.5, 1.0}
\definecolor{airforceblue}{rgb}{0.36, 0.54, 0.66}
\definecolor{darkgreen}{rgb}{0.0, 0.2, 0.13}
\pgfplotsset{compat=1.14}
\definecolor{mediumgray}{gray}{0.7}
\definecolor{lightgray}{gray}{0.85}
\definecolor{lightlightgray}{gray}{0.9}
\definecolor{C1}{HTML}{1F77B4}
\definecolor{C2}{HTML}{FF7F0E}
\definecolor{C3}{HTML}{2CA02C}
\definecolor{C4}{HTML}{D62728}
\definecolor{C5}{HTML}{9467BD}
\colorlet{C1light}{C1!70!white}
\colorlet{C2light}{C2!70!white}
\colorlet{C3light}{C3!70!white}
\colorlet{C4light}{C4!70!white}
\colorlet{C5light}{C5!70!white}
\colorlet{C1lighter}{C1!50!white}
\colorlet{C2lighter}{C2!50!white}
\colorlet{C3lighter}{C3!50!white}
\colorlet{C4lighter}{C4!50!white}
\colorlet{C5lighter}{C5!50!white}
\colorlet{C1vlight}{C1!20!white}
\colorlet{C2vlight}{C2!20!white}
\colorlet{C3vlight}{C3!20!white}
\colorlet{C4vlight}{C4!20!white}
\colorlet{C5vlight}{C5!20!white}
\colorlet{linkcolor}{violet}
\newtheorem{thm}{Theorem}
\newtheorem{lem}{Lemma}
\newtheorem{prop}{Proposition}
\crefname{enumi}{}{}
\crefname{enumii}{}{}
\newcommand*{\addFileDependency}[1]{% argument=file name and extension
\typeout{(#1)}% latexmk will find this if $recorder=0
% however, in that case, it will ignore #1 if it is a .aux or 
% .pdf file etc and it exists! If it doesn't exist, it will appear 
% in the list of dependents regardless)
%
% Write the following if you want it to appear in \listfiles 
% --- although not really necessary and latexmk doesn't use this
%
\@addtofilelist{#1}
%
% latexmk will find this message if #1 doesn't exist (yet)
\IfFileExists{#1}{}{\typeout{No file #1.}}
}\makeatother
\tikzset{
	-Latex,auto,node distance =1 cm and 1 cm,semithick,
	state/.style ={ellipse, draw, minimum width = 0.7 cm},
	point/.style = {circle, draw, inner sep=0.04cm,fill,node contents={}},
	bidirected/.style={Latex-Latex,dashed},
	el/.style = {inner sep=2pt, align=left, sloped}
}
\newcommand{\customlabel}[2]{%
   \protected@write \@auxout {}{\string \newlabel {#1}{{#2}{\thepage}{#2}{#1}{}} }%
   \hypertarget{#1}{}%
}
\begin{document}

%%
%% The "title" command has an optional parameter,
%% allowing the author to define a "short title" to be used in page headers.
\title{Conformal Counterfactual Inference under Hidden Confounding}

%%
%% The "author" command and its associated commands are used to define
%% the authors and their affiliations.
%% Of note is the shared affiliation of the first two authors, and the
%% "authornote" and "authornotemark" commands
%% used to denote shared contribution to the research.
\author{Zonghao Chen}
\authornote{Work done during an internship at ByteDance Research}
\authornote{Equal contribution}
\affiliation{%
\institution{University College London}
  % \streetaddress{P.O. Box 1212}
  % \city{Dublin}
  % \state{Ohio}
  \country{UK}
  % \postcode{43017-6221}
}
\email{zonghao.chen.22@ucl.ac.uk}

\author{Ruocheng Guo}
\authornotemark[2] % Reference the first note
\affiliation{%
  \institution{ByteDance Research}
  % \streetaddress{P.O. Box 1212}
  % \city{Dublin}
  % \state{Ohio}
  \country{UK}
  % \postcode{43017-6221}
}
\email{ruocheng.guo@bytedance.com}

\author{Jean-Fran\c cois Ton}
\affiliation{%
  \institution{ByteDance Research}
  % \streetaddress{P.O. Box 1212}
  % \city{Dublin}
  % \state{Ohio}
  \country{UK}
  % \postcode{43017-6221}
}
\email{jeanfrancois@bytedance.com}

\author{Yang Liu}
\affiliation{%
  \institution{ByteDance Research}
  % \streetaddress{P.O. Box 1212}
  % \city{Dublin}
  % \state{Ohio}
  \country{USA}
  % \postcode{43017-6221}
}
\email{yang.liu01@bytedance.com}
%%
%% By default, the full list of authors will be used in the page
%% headers. Often, this list is too long, and will overlap
%% other information printed in the page headers. This command allows
%% the author to define a more concise list
%% of authors' names for this purpose.
\renewcommand{\shortauthors}{Chen et al.}

%%
%% The abstract is a short summary of the work to be presented in the
%% article.
% \begin{abstract}
 
% \end{abstract}
\begin{abstract}
% Counterfactual inference at an individual level is the fundamental yet challenging task in the field of causal inference and confidence intervals provide critical information for decision-making in high-stake applications.
Personalized decision making requires the knowledge of potential outcomes under different treatments, and confidence intervals about the potential outcomes further enrich this decision-making process and improve its reliability in high-stakes scenarios.
Predicting potential outcomes along with its uncertainty in a counterfactual world poses the foundamental challenge in causal inference. 
% Existing methods leverage conformal inference to predict confidence intervals in counterfactual inference. However, they rely on the strong ignorability assumption.
Existing methods that construct confidence intervals for counterfactuals either rely on the assumption of strong ignorability that completely ignores hidden confounders, or need access to un-identifiable lower and upper bounds that characterize the difference between observational and interventional distributions.
In this paper, to overcome these limitations, we first propose a novel approach wTCP-DR based on transductive weighted conformal prediction, which provides confidence intervals for counterfactual outcomes with marginal converage guarantees, even under hidden confounding.
With less restrictive assumptions, our approach requires access to a fraction of interventional data (from randomized controlled trials) to account for the covariate shift from observational distributoin to interventional distribution.
Theoretical results explicitly demonstrate the conditions under which our algorithm is strictly advantageous to the naive method that only uses interventional data.
Since transductive conformal prediction is notoriously costly, we propose wSCP-DR, a two-stage variant of wTCP-DR, based on split conformal prediction with same marginal coverage guarantees but at a significantly lower computational cost.
After ensuring valid intervals on counterfactuals, it is straightforward to construct intervals for individual treatment effects (ITEs).
We demonstrate our method across synthetic and real-world data, including recommendation systems, to verify the superiority of our methods compared against state-of-the-art baselines in terms of both coverage and efficiency.

% of individual treatment effects (ITEs) aims to predict a confidence interval with high probability to cover the ground truth
% , which provides critical information for decision-making in high-stake applications.
% Existing methods for conformal inference of ITEs rely on either (1) the strong ignorability assumption which can be violated in real-world applications or (2) bounds characterizing the distribution shift between observational data and counterfactuals which cannot be estimated from data under hidden confounding.
%
% In this work, to overcome these limitations, we propose novel conformal inference methods that predict confidence intervals for ITEs with guaranteed high probability to cover the ground truth.
%
% First, we propose methods that utilize a limited amount of interventional data to ensure coverage guarantees and to be more efficient than directly application of split conformal prediction on interventional data.
%
% Then, we devise practical approaches to eliminate the need for interventional data through the integration of unlabeled data.
% %
% Experiments across synthetic and real-world data verify the effectiveness of our methods in terms of coverage and efficiency.
\end{abstract}

%%
%% The code below is generated by the tool at http://dl.acm.org/ccs.cfm.
%% Please copy and paste the code instead of the example below.
%%
% \begin{CCSXML}
% <ccs2012>
%  <concept>
%   <concept_id>00000000.0000000.0000000</concept_id>
%   <concept_desc>Do Not Use This Code, Generate the Correct Terms for Your Paper</concept_desc>
%   <concept_significance>500</concept_significance>
%  </concept>
%  <concept>
%   <concept_id>00000000.00000000.00000000</concept_id>
%   <concept_desc>Do Not Use This Code, Generate the Correct Terms for Your Paper</concept_desc>
%   <concept_significance>300</concept_significance>
%  </concept>
%  <concept>
%   <concept_id>00000000.00000000.00000000</concept_id>
%   <concept_desc>Do Not Use This Code, Generate the Correct Terms for Your Paper</concept_desc>
%   <concept_significance>100</concept_significance>
%  </concept>
%  <concept>
%   <concept_id>00000000.00000000.00000000</concept_id>
%   <concept_desc>Do Not Use This Code, Generate the Correct Terms for Your Paper</concept_desc>
%   <concept_significance>100</concept_significance>
%  </concept>
% </ccs2012>
% \end{CCSXML}

% \ccsdesc[500]{Do Not Use This Code~Generate the Correct Terms for Your Paper}
% \ccsdesc[300]{Do Not Use This Code~Generate the Correct Terms for Your Paper}
% \ccsdesc{Do Not Use This Code~Generate the Correct Terms for Your Paper}
% \ccsdesc[100]{Do Not Use This Code~Generate the Correct Terms for Your Paper}

%%
%% Keywords. The author(s) should pick words that accurately describe
%% the work being presented. Separate the keywords with commas.
\keywords{}

%% A "teaser" image appears between the author and affiliation
%% information and the body of the document, and typically spans the
%% page.

% \received{20 February 2007}
% \received[revised]{12 March 2009}
% \received[accepted]{5 June 2009}

%%
%% This command processes the author and affiliation and title
%% information and builds the first part of the formatted document.

\maketitle

\section{Introduction}

% \rc{paragraph 1: intro of conformal causal inference. the importance of conf.}

\begin{figure}[t]
\begin{minipage}{0.45\textwidth}
\centering
\includegraphics[width=\linewidth]{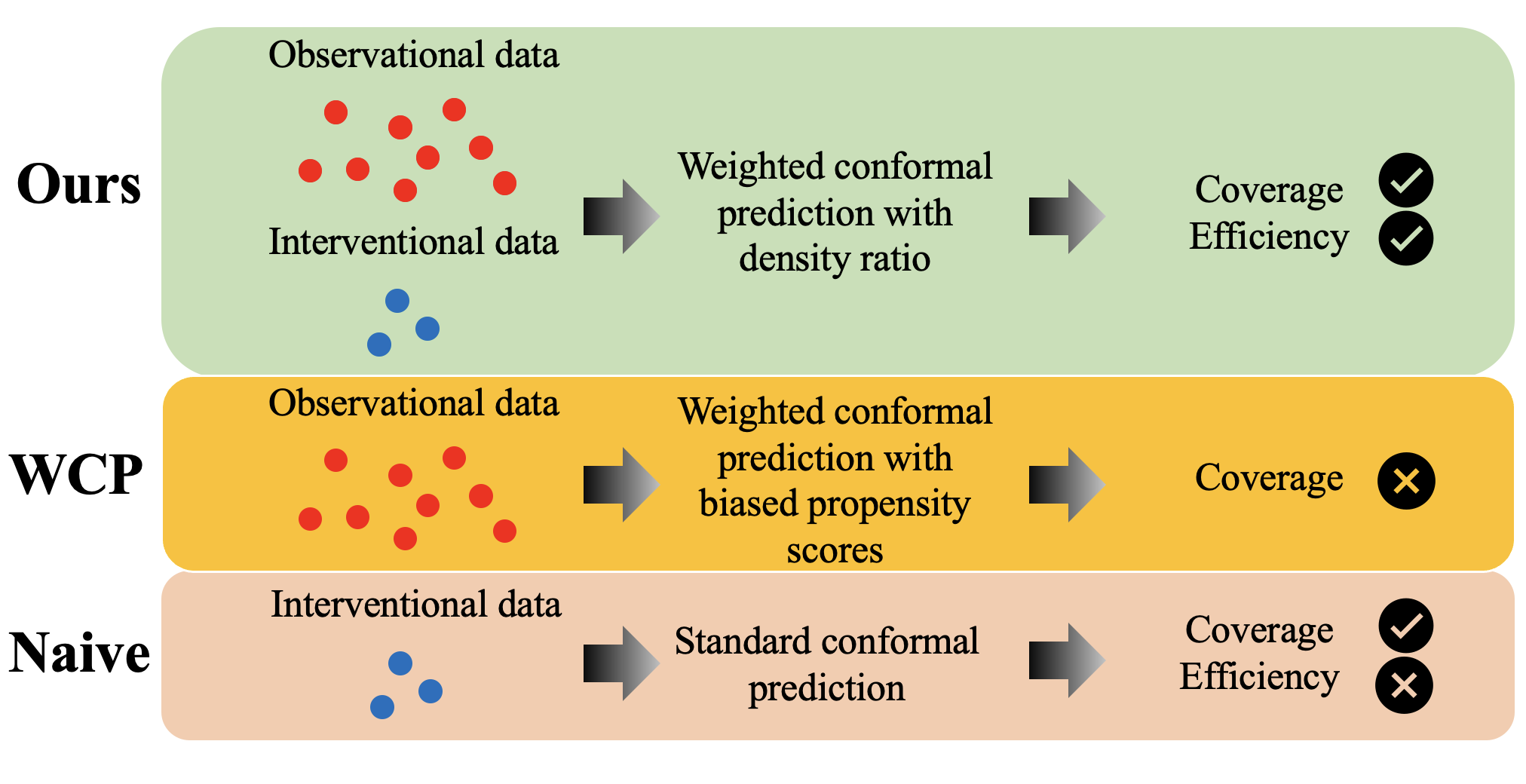}
\end{minipage}
\caption{Under hidden confounding, our proposed methods wTCP-DR and wSCP-DR incorporate a small set of interventional data for density ratio based weighted conformal prediction, which provides marginal coverage guarantee along with high efficiency (small confidence interval). In contrast, WCP~\cite{lei2021conformal} cannot guarantee coverage as hidden confounding leads to biased estimate of propensity scores. The Naive method suffers from low efficiency as it only uses the small set of interventional data.}
\label{fig:intro}
\end{figure}

Estimating the heterogeneous causal effects of an intervention (e.g., a medicine) on an important outcome (e.g., health status) of different individuals is a fundamental problem in a variety of influential research areas, including economics, healthcare and education~\cite{zhou2022attendance,wendling2018comparing, breen2015heterogeneous}.
In the growing area of machine learning for causal inference, this problem has been casted as estimating individual treatment effect (ITE) and most existing work focuses on developing machine learning models to improve the point estimate of ITE~\cite{chipman2010bart,hill2011bayesian,wager2018estimation,johansson2016learning,shalit2017estimating,louizos2017causal,kunzel2019metalearners,shi2019adapting,yao2018representation,curth2021nonparametric}.
However, point estimates is not enough to ensure safe and reliable decision-making in high-stake applications where failures are costly or may endanger human lives, and hence uncertainty quantification and confidence intervals allow machine learning models to express confidence in the correctness of their predictions.
% \rc{Add an example.}  

% \rc{paragraph 2. existing work in UQ for causal inference and conformal causal inference and their drawbacks}

Pioneering work~\cite{hill2011bayesian,alaa2017bayesian} provides confidence intervals for ITEs through Bayesian machine learning models such as Bayesian Additive Regression Trees~\cite{chipman2010bart} and Gaussian Process~\cite{williams2006gaussian}.
However, these approaches cannot be easily generalized to popular machine learning models for causal inference on various input data types, including but not limited to text~\cite{veitch2020adapting,feder2022causal} and graphs~\cite{guo2020learning,ma2022learning}. 
Recently, built upon conformal prediction~\cite{vovk2005algorithmic,vovk2009line}, Lei and Candes~\cite{lei2021conformal} propose the first conformal prediction method for counterfactual outcomes and ITEs, which can provide confidence intervals with guaranteed marginal coverage in a model-agnostic fashion.
This means that, given any machine learning model that estimates the potential outcomes under treatment, conformal prediction acts as a post-hoc wrapper that provides confidence intervals guaranteed to contain the ground truth of potential outcomes and ITEs above a specified probability under marginal distribution.
Unfortunately however, Lei and Candes~\cite{lei2021conformal} require the assumption of strong ignorability that excludes the possibility of hidden confounders, which cannot be verified given data~\cite{tchetgen2020introduction,pearl2018book} and can be violated in many real-world applications.
For example, the socio-economic status of a patient, which is likely to be unavailable due to privacy concerns, is a common unobserved confounding factor that affects both patient's access to treatment and one's health condition.
Similarly, under the strong ignorability assumption, \cite{alaa2023conformal} propose to use meta-learners~\cite{kunzel2019metalearners,horvitz1952generalization,kennedy2020towards} in conformal prediction of ITEs.
% With these advanced ITE estimators, strong empirical results are achieved.
Recently, Jin et al.~\cite{jin2023sensitivity} take hidden confounding into consideration for conformal prediction of ITEs from a sensitivity analysis aspect.
However, their method needs access to the upper and lower bounds of the density ratio between the observational distribution and the interventional distribution to characterize the covariate shift from observational to interventional distribution.
% , where the true propensity score ratio is not estimable with hidden confounders. 

To address these limitations and provide confidence intervals that have finite-sample guarantees even without the strong ignorability assumption, we propose \emph{\underline{w}eighted \underline{T}ransductive \underline{C}onformal \underline{P}rediction with \underline{D}ensity \underline{R}atio estimation} (wTCP-DR) that is based on weighted transductive conformal prediction.
With less restrictive assumptions, wTCP-DR needs access to both observational and a fraction of interventional data (e.g., data collected from randomized control trials)~\cite{li2021unifying,chen2021autodebias}. 
In contrast to the weighted conformal prediction method proposed by~\cite{lei2021conformal} which uses propensity score as the reweighting function, our algorithm computes the reweighting function by learning the density ratio of the interventional and observational distribution using the data provided.
The benefits of our proposed method are as follows:
(i) wTCP-DR does not require strong ignorability assumption and provides a confidence interval with coverage guarantee even under the presence of confounding. 
(ii) wTCP-DR works well under an imbalanced number of interventional and observational data, i.e., when interventional data is of smaller size than observational data due to the higher cost of collecting interventional data.
%
% We propose both transductive and split conformal prediction methods with this reweighting mechanism and conduct comprehensive theoretical analysis.
Although wTCP-DR is computationally expensive due to the nature of transductive conformal prediction, we also propose a variant of wTCP-DR, called \emph{\underline{w}eighted \underline{S}plit \underline{C}onformal \underline{P}rediction with \underline{D}ensity \underline{R}atio estimation} (wSCP-DR)
which preserves all the advantages of wTCP-DR but at a lower computational cost.
We briefly describe how our methods are different from the method proposed by~\cite{lei2021conformal} and the Naive method in Fig.~\ref{fig:intro}.
% Furthermore, in specific application domains (e.g., recommendation systems) where the latent features are learned from labels, we show that our method can perform well even without labeled interventional data.
%

% Then, we develop practical algorithms that only need a set of unlabeled data from $P_X$, which are shown to have strong empirical performance. \rc{add a figure for our method.}

% Example of hidden confounding in real life.

% The answer we give is yes, but with the additional requirement of interventional data, possibly from randomized controlled trials. 
The paper is organized as follows. 
Section 2 gives a description of the problem setting and provides necessary background on conformal prediction.
Section 3 describes our novel algorithm wTCP-DR which provides a confidence interval on counterfactual outcomes at an individual level with marginal coverage guarantee.
Section 4 proposes wSCP-DR which is a more implementable variant of wTCP-DR.
Section 5 applies wTCP-DR and wTCP-DR to provide confidence intervals for estimating individual treatment effects.
Section 6 demonstrates our method across synthetic and real-world data, including recommendation systems, to verify our methods in terms of both coverage and efficiency.
Section 7 discusses related work in the literature.
Section 8 concludes the paper.

\section{Preliminaries}
\subsection{Problem setting}

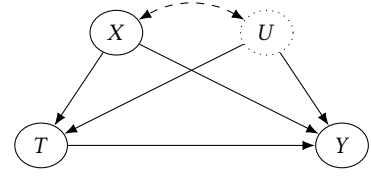
\begin{figure}[t]
\vspace{-5pt}
\begin{minipage}{0.49\linewidth}
\centering
\begin{tikzpicture}
\centering
    % x node set with absolute coordinates
    \node[state] (t) at (-2,0) {${T}$};
    \node[state] (y) at (2,0) {${Y}$};

    \node[state,dotted] (z) at (1,1.5) {${U}$};

    \node[state] (x) at (-1,1.5) {$X$}; 

    % Directed edge
    \path (x) edge (t);
    \path (x) edge (y);
    \path (t) edge (y);
    
    \path (z) edge (y);

    \path (z) edge (t);
    
    % Bidirected edge
    \path[bidirected] (z) edge[bend right=30] (x);
\end{tikzpicture}
\end{minipage}
\caption{Example causal graph with hidden confounding. $X$: Observed covariates, $U$: Hidden confounders, $T$: Treatment, $Y$: Outcome. Direct edges denote causal relations and the bidirectional edge signifies possible correlation.}
\label{fig:fig2}
\end{figure}

We consider the standard potential outcome (PO) framework~\cite{neyman1923application,rubin2005causal} with a binary treatment. Let $T \in\{0,1\}$ be the treatment indicator, $x \in \mathcal{X} \subset \R^d$ be the observed covariates, and $y \in \calY \subset \mathbb{R}$ be the outcome of interest. 
We use $X, Y$ to denote random variables in $\calX, \calY$.
For each subject $i$, let $\left(Y_i(0), Y_i(1)\right)$ be the pair of potential outcomes under control $T=0$ and treatment $T=1$, respectively.
We assume that the data generating process satisfies the following widely used assumptions: 1) Consistency: $Y_i=Y_i(T_i)$, which means the observed outcome $Y_i$ is the same as the potential outcome $Y_i(T_i)$ with the observed treatment $T_i$. (2) Positivity: $0< \Pb (T=1 \mid X=x)<1, \forall x \in \mathcal{X}$, which means that any subject has a positive chance to get treated and controlled.
We would like to emphasize that we are \emph{not} assuming strong ignorability, i.e., there might exist potential hidden confounding $U$ that affects treatment $T$ and outcome $Y$ at the same time.
See Fig.~\ref{fig:fig2} for an example causal graph.

Under this framework, the joint distribution under intervention $do(T=t)$ is $P_{X,Y(t)}=P_{Y(t)\mid X} \times P_{X}$ and that for observational data is $P_{X, Y\mid T=t} = P_{Y\mid X,T=t} \times P_{X\mid T=t}$.
Note that the difference between conditional distribution $P_{Y(t) \mid X} $ and $ P_{Y \mid X,T=t}$ is due to potential hidden confounding, and the difference between $P_X$ and $P_{X \mid T=t}$ is due to intervention.
Throughout this work, we stick to the notation of probability density (mass) functions instead of probability measures. We use superscript $I$ for interventional distribution and $O$ for observational distribution.
For a given treatment $t \in \{0, 1\}$, we assume there are $n$ observational and $m$ interventional samples: 
\begin{align}\label{eq:samples}
\begin{aligned}
    (x^{O, T=t}_i, y^{O, T=t}_i)_{i=1}^n &\sim p^O_t(x,y)= p^O(y\mid x,t )p(x\mid t) \\
    (x^{I, T=t}_i, y^{I, T=t}_i)_{i=n+1}^{n+m} &\sim p^I_t(x,y)=p^I(y\mid x,t) p(x)
\end{aligned}
\end{align}
Given a predetermined target coverage rate of $1-\alpha$, our goal is to construct confidence interval $C$ for potential outcome under treatment $t$ at a new test sample $x_{n+m+1} \sim p(x)$, such that $C(x_{n+m+1})$ ensures marginal coverage:
% \begin{align*}
    $\Pb \left(y_{n+m+1} \in C(x_{n+m+1}) \right) \geq 1 - \alpha$, 
% \end{align*}
where the probability is over $(x_{n+m+1}, y_{n+m+1}) \sim p_t^I(x,y)$.

\subsection{Background: Conformal Prediction}
% \rc{try to make 2.2 and 2.3 shorter as it is not our contribution.}
Conformal prediction (CP) is a distribution-free framework that provides finite-sample marginal coverage guarantees. 
Transductive and split CP are two approaches to conformal prediction and we briefly introduce both since we will be using them in 
Section 3. 

\noindent\textbf{Split Conformal Prediction (SCP).}
Given a dataset $\calD=(x_i, y_i)_{i=1}^n \sim P_{X,Y}$, SCP starts by splitting $\calD$ into two disjoint subsets: a training set $\calD_t$, and a calibration set $\calD_c$. Then, a regression estimator $\widehat{\mu}$ is trained on $\calD_t$ and conformity scores $s(x,y)$  are computed for $(x,y) \in \calD_c$ where typically $s(x,y) = |y - \widehat{\mu}(x)|$.
The empirical distribution of the conformity scores are defined as $\widehat{F} = \frac{1}{|\calD_c|} \sum_{i=1}^{|\calD_c|} \delta_{s(x_i,y_i)}$ and the confidence interval for the target sample $x_{n+1}$ is 
\begin{align}\label{eq:scp}
    C_{\text{SCP}}(x_{n+1}) = [ \widehat{\mu}(x_{n+1}) - q_{\widehat{F}}, \widehat{\mu}(x_{n+1}) + q_{\widehat{F}} ]
\end{align}
where $q_{\widehat{F}} = \operatorname{Quantile} ( (1-\alpha)(1+\frac{1}{|\calD_c|}); \widehat{F} )$. 
\cite{lei2018distribution} has proved that under exchangeability of $\calD$, $C_{\text{SCP}}(x_{n+1})$ is guaranteed to satisfy marginal coverage. 
Futhermore, if ties between conformity scores occur with probability zero, then
\begin{align}\label{eq:scp_bound}
    1-\alpha \leq \Pb \left(y_{n+1} \in C_{\text{SCP}} \left(x_{n+1} \right)\right) \leq 1 - \alpha + \frac{1}{|\calD_c|}
\end{align}
Note that the upper bound ensures that the confidence interval is nonvacuuous, i.e., the interval width does not go to infinity.

\noindent\textbf{Transductive Conformal Prediction (TCP).}
Given a same dataset $\calD$ as above, TCP takes a different approach by looping over all possible values $\overline{y}$ in the domain $\calY$.
For $\overline{y} \in \calY$, TCP first constructs an augmented dataset $\calD_{(x_{n+1}, \overline{y})} = \calD \cup \{x_{n+1}, \overline{y}\}$. Then, a regression estimator $\widehat{\mu}_{\overline{y}}$ is trained on $\calD_{(x_{n+1}, \overline{y})}$ and the conformity scores read $s^{\overline{y}}_i = |y_i - \widehat{\mu}_{\overline{y}}(x_i)|$ for $i=1, \cdots, n$ and $s^{\overline{y}}_{n+1} = |\overline{y} - \widehat{\mu}_{\overline{y}}(x_{n+1})|$. 
With empirical distribution defined as $\widehat{F} = \frac{1}{n+1} \sum_{i=1}^{n} \delta_{s^{\overline{y}}_i } + \frac{1}{n+1} \delta_{\infty}$, the interval for the target sample $x_{n+1}$ is 
\vspace{-5pt}
\begin{align}\label{eq:tcp}
    C_{\text{TCP}}(x_{n+1}) = \{ \overline{y} \in \calY: s^{\overline{y}}_{n+1} \leq q_{\widehat{F}} \}
\end{align}
where $q_{\widehat{F}} = \operatorname{Quantile}( (1-\alpha); \widehat{F})$. The same lower and upper bound guarantee as $\eqref{eq:scp_bound}$ has been proved in \cite{lei2018distribution}.

TCP is computationally more expensive as it requires fitting $\widehat{\mu}$ for every fixed $\overline{y} \in \calY$. 
The discretization of $\calY$ comes as a tradeoff between computational costs and accuracy of the conformal interval.
For these reasons, SCP is more widely used due to its simplicity, however, SCP is less sample efficient by splitting the dataset into a training set and a calibration set.
Cross-conformal prediction can be used to improve efficiency for SCP \citep{vovk2015cross}. 

\subsection{Weighted Conformal Prediction}
When calibration and test data are independent yet not drawn from the same distribution, \cite{tibshirani2019conformal} propose a weighted version of conformal prediction.
In this section, we discuss a more specific setting of \cite{tibshirani2019conformal} where the dataset are merged from two different distributions, $\calD = \{(x_i, y_i)_{i=1}^n \sim P_{X, Y} \} \cup \{ (x_i, y_i)_{i=n+1}^{n+m} \sim P^\prime_{X, Y} \}$ and the test sample $x_{n+m+1}$ is sampled from $ P^\prime_{X}$.
Define the density ratio as $r(x,y) = \frac{dP^\prime_{X,Y}}{dP_{X,Y}}(x, y)$, then $(x_i, y_i)_{i=1}^{n+m+1}$ are weighted exchangeable with weight functions $w(x,y) = 1$ if $(x,y) \sim P_{X,Y}$ and $w(x, y) = r(x,y)$ if $(x,y) \sim P_{X,Y}^\prime$.
For $\overline{y} \in \calY$, define the normalized weights $p_i$ as:
\vspace{-5pt}
\begin{align}\label{eq:normalized_weights}
    p_i = \frac{\sum\limits_{\sigma: \sigma(n+m+1)=i} \prod\limits_{j=n+1}^{n+m+1} r (x_{\sigma(j)}, y_{\sigma(j)} )}{\sum\limits_\sigma \prod\limits_{j=n+1}^{n+m+1} r (x_{\sigma(j)}, y_{\sigma(j)})}
\end{align}
where the summations are taken over permutations $\sigma$ of $1, \cdots, n+m+1$ (see \citep[Lemma 3]{tibshirani2019conformal}). 
Here in Eq.~\eqref{eq:normalized_weights}, we use an abuse of notation that $y_{n+m+1} = \overline{y}$ for symmetry reason.
With the conformity scores $s^{\overline{y}}_i$ computed in the same way as TCP and the weighted empirical distribution of the conformity scores defined as $\widehat{F} = \sum_{i=1}^{n+m} p_i \delta_{s^{\overline{y}}_i} + p_{n+m+1} \delta_{\infty}$, the conformal interval for the target sample is:
\begin{align}\label{eq:wtcp}
    C_{\text{w-TCP}}(x_{n+m+1}) = \{ \overline{y} \in \calY: s^{\overline{y}}_{n+m+1} \leq q_{\widehat{F}} \}
\end{align}
where $q_{\widehat{F}} = \operatorname{Quantile} (1-\alpha ; \widehat{F} )$. The lower bound guarantee is proven in \cite{tibshirani2019conformal} and the upper bound is proven in \cite{lei2021conformal} under extra assumptions. 
When $m=0$, $p_i$ becomes $r (x_i, y_i ) / \sum_{j=1}^{n+1} r (x_j, y_j )$, which is more commonly used in the literature~\cite{taufiq2022conformal, lei2021conformal, kunzel2019metalearners}. 
When $m > 1$, the computational cost of $p_i$ is $m C_{n+m+1}^m = \calO(mn^m)$. 
\section{Conformal Prediction of Counterfactuals: WTCP-DR}
In this section, we formally introduce our proposed method \emph{\underline{w}eighted \underline{T}ransductive \underline{C}onformal \underline{P}rediction with \underline{D}ensity \underline{R}atio estimation} (wTCP-DR).
Since our method considers $T=0$ and $T=1$ separately, we fix $T=t$ in this section and drop the dependence on $T$ in Eq.~\eqref{eq:samples} for simplicity of notations.
Recall there are $n$ observational and $m$ interventional samples and the test sample is $x_{n+m+1}$. 
\begin{align}
\begin{aligned}
    (x^{O}_i, y^{O}_i)_{i=1}^n &\sim p^O(x,y)= p^O(y\mid x,t )p(x\mid t) \\
    (x^{I}_i, y^{I}_i)_{i=n+1}^{n+m} &\sim p^I(x,y) = p^I(y\mid x,t) p(x)
\end{aligned}
\end{align}

\noindent\textbf{The Naive Method.}\customlabel{naive_approach}{naive approach}
We first introduce a straightforward method: constructing confidence interval for the potential outcome \emph{only} from interventional data $(x^{I}_i, y^{I}_i)_{i=n+1}^{n+m}$ using standard split conformal prediction of Eq.~\eqref{eq:scp} as $(x^{I}_i)_{i=n+1}^{n+m}$ come from the same distribution as the test sample $x^{I}_{n+m+1}$. 
The algorithm is detailed in \Cref{alg:naive}.
From Eq.~\eqref{eq:scp_bound} we know that
\begin{align}\label{eq:naive_coverage}
    1 - \alpha + \frac{1}{m + 1} \geq \Pb(y \in C_{\text{naive}}(x)) \geq 1 - \alpha
\end{align}
This approach can be inefficient because it completely ignores $n$ observational data and typically $n $ is larger than $ m$.

\begin{algorithm}
\caption{Naive algorithm}
\label{alg:naive}
\begin{algorithmic}[1] % Enable line numbering
\Require level $\alpha$, interventional data $\mathcal{D}^I=(x_i^I, y_i^I )_{i=n+1}^{n+m}$ split into a training fold $\mathcal{D}^I_t$ and a calibration fold $\mathcal{D}^I_c$, target sample $x_{n+m+1}^I$.
\State Fit regression model $\hat{\mu}$ on $\mathcal{D}^I_t$.
\For{each sample $(x_i, y_i) \in \mathcal{D}^I_c$}
    \State Compute the conformity score $s_i = |\hat{\mu} (x_i) - y_i|$.
\EndFor
\State Construct empirical distribution of conformity scores $\widehat{F} = \frac{1}{|\mathcal{D}^I_c|} \sum_{i=1}^{\left|\mathcal{D}^I_c\right|} \delta_{s_i}$.
\State Compute $q_{\widehat{F}} = \operatorname{Quantile} ( (1-\alpha)(1 + \frac{1}{|\mathcal{D}_c|}); \widehat{F})$.
\Ensure $C_{naive}(x_{n+m+1}^I) = [\hat{\mu}(x_{n+m+1}^I)-q_{\widehat{F}}, \hat{\mu}(x_{n+m+1}^I) + q_{\widehat{F}}]$
\end{algorithmic}
\end{algorithm}

To combine both $m$ interventional data and $n$ observational data, it is necessary to take distribution shift into consideration. 
Therefore, weighted conformal prediction of Eq.~\eqref{eq:wtcp} is naturally suitable for such tasks, and the key challenge is to identify the normalized weights in Eq.~\eqref{eq:normalized_weights}, i.e., to identify the density ratio
\begin{align}\label{eq:density_ratio}
    r(x,y) := \frac{p^I(x,y)}{p^O(x,y)} = \frac{p^I(y\mid x,t) p(x) }{ p^O(y\mid x,t ) p(x\mid t)}
\end{align}

Under the unconfoundedness assumption of \cite{lei2021conformal}, $p^I(y\mid x,t)$ equals $p^O(y\mid x,t )$ so $r(x,y)$ is as simple as estimating the propensity score $p(x) / p(x \mid t)$.
When hidden confouding exists, propensity score is not enough to account for the distribution shift. Our method proposes to learn $r(x,y)$ from data, as detailed next.

\noindent\textbf{Weighted Transductive Conformal Prediction with Density Ratio estimation (wTCP-DR).} \customlabel{our_approach}{our approach}
The key of weighted conformal prediction is the density ratio $r(x,y)$, and fortunately there exists a rich literature of density ratio estimation~\cite{sugiyama2012density}, including moment matching~\cite{gretton2009covariate}, probabilistic classification and ratio matching. 
Since probabilistic classification using neural networks is more flexible and better exploits nonlinear relations in the data~\cite{yamada2013relative}, so we only introduce probabilistic classification here and refer the readers to \cite{sugiyama2012density} for a comprehensive review.

By assigning labels $z=1$ to observational data $(x^O_i, y^O_i)$ and assigning labels $z=0$ to interventional data $(x^I_i, y^I_i)$, we construct a new dataset for learning the density ratio.
\begin{align*}
    \calD_{\text{DR}} = \{(x^{O}_i, y^{O}_i, z_i)_{i=1}^n,  (x^{I}_i, y^{I}_i, z_i)_{i=n+1}^{n+m}\}
\end{align*}
For any nonlinear binary classification algorithm like logistic regression with nonlinear features, random forests or neural networks that output estimated probabilities of class membership $\hat{p}(z=1\mid x, y)$ and $\hat{p}(z=0\mid x, y)$, the density ratio can be approximated by:
\begin{align}
\begin{aligned}
    \frac{p^I(x,y)}{p^O(x,y)} &= \frac{p(x,y\mid z = 0)}{p(x,y \mid z=1)} = \frac{p(z = 0 \mid x, y) / p(z=0)}{p( z = 1\mid x, y) / p(z=1)} \\
    &\approx \frac{p(z=1)}{p(z=0)} \frac{\hat{p}(z=0\mid x, y)}{\hat{p}(z=1\mid x, y)}
\end{aligned}
\end{align}
Since $\frac{p(z=1)}{p(z=0)} $ is a constant and will cancel out when computing the normalized weights in Eq.~\eqref{eq:normalized_weights}, we 
denote $\hat{r}(x,y)= \frac{\hat{p}(z=0\mid x, y)}{\hat{p}(z=1\mid x, y)}$ as the estimated density ratio, so the corresponding estimated normalized weights of Eq.~\eqref{eq:normalized_weights} are:
\begin{align}\label{eq:estimated_normalized_weights}
\hat{p_i} = \frac{\sum\limits_{\sigma: \sigma(n+m+1)=i} \prod\limits_{j=n+1}^{n+m+1} \hat{r} (x_{\sigma(j)}, y_{\sigma(j)} )}{\sum\limits_\sigma \prod\limits_{j=n+1}^{n+m+1} \hat{r} (x_{\sigma(j)}, y_{\sigma(j)})}
\end{align}
% Note that in TCP, $y_{n+m+1}^I$ is selected from a pre-defined grid over $\calY$.
Unfortunately, Eq.~\eqref{eq:estimated_normalized_weights} requires $m C_{n+m+1}^m = \calO(mn^m)$ times of evaluating $\hat{r}$ which is computationally impractical for $m > 1$. 
As a result, we only use observational data when computing the normalized weights (i.e. $m=1$) and use interventional data for computing the density ratio $\hat{r}$, so the estimated normalized weights become
\begin{align}\label{eq:easy_estimated_normalized_weights}
    \hat{p_i} = \frac{ \hat{r}(x_i, y_i) }{\sum_{j=1}^n \hat{r}(x_j, y_j) + \hat{r}(x_{n+m+1}, y_{n+m+1})}
\end{align}
for $i=\{1 , \cdots, n\} \cup \{ n+m+1 \}$.
See \Cref{alg:wtcp_dr} for a complete description of our method.

By using estimated normalized weights $\hat{p_i}$ rather than the oracle normalized weights $p_i$ to reweight the empirical distribution of conformity scores $\widehat{F}$, our approach introduces an extra source of error, as quantified below.
\begin{prop}[Prosample 4.2 from \cite{taufiq2022conformal}] \label{prop:ours_coverage}
Under the assumptions that $p^O(x,y)$ and $ p^I(x,y)$ are absolutely continuous with each other and that ${ [\E_{p^O(x,y)} \hat{r}(x,y)^2]}^{1/2} < M$ then the confidence interval $C_{\text{wTCP-DR}}$ constructed from \Cref{alg:wtcp_dr} satisfies
\begin{align}\label{eq:ours_coverage}
\begin{aligned}
    &1 - \alpha + c n^{-1/2} + \Delta_r \geq \Pb \left( y \in C_{\text{wTCP-DR}}(x) \right) \geq 1 - \alpha - \Delta_r
\end{aligned}
\end{align}
where $c$ is a constant and $\Delta_r = \E_{p^O(x,y)} |r(x,y) - \hat{r}(x, y)| $ is the approximation error of the density ratio.
\end{prop}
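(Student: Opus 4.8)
The plan is to read the statement as a \emph{robustness} result for weighted conformal prediction: with the oracle density ratio $r(x,y)=p^I(x,y)/p^O(x,y)$ the interval of Eq.~\eqref{eq:easy_estimated_normalized_weights} already attains the nominal level, and substituting the estimate $\hat r$ perturbs the coverage by an amount controlled by $\Delta_r=\E_{p^O(x,y)}|r(x,y)-\hat r(x,y)|$. I would therefore split the argument into an \emph{oracle step} and a \emph{perturbation step}, which is precisely the structure of Proposition~4.2 in \cite{taufiq2022conformal}; the cleanest route is to verify that our counterfactual setting meets its hypotheses and invoke it, so most of the work is checking the reduction. Note first that $r$ is well defined exactly because we assume $p^O$ and $p^I$ are mutually absolutely continuous, and that the tilted measure $\hat r\cdot p^O$ is integrable by the second-moment bound $[\E_{p^O}\hat r^2]^{1/2}<M$.

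For the oracle step I would first establish coverage of the interval built from the \emph{true} normalized weights $p_i$ (Eq.~\eqref{eq:easy_estimated_normalized_weights} with $\hat r$ replaced by $r$). By weighted exchangeability of the merged observational-plus-test sample (Lemma~3 of \cite{tibshirani2019conformal}), the weighted-quantile construction yields the lower bound $\Pb_{\mathrm{oracle}}(y\in C)\ge 1-\alpha$. For the upper bound I would use the standard fact (as in \cite{lei2021conformal}) that weighted conformal coverage exceeds $1-\alpha$ by at most the expected largest normalized weight $\E[\max_i p_i]$. This is where the rate $cn^{-1/2}$ enters: writing $p_i=r_i/\sum_j r_j$, the denominator concentrates around $n\,\E_{p^O}[r]=n$ while a tail/union bound using $\E_{p^O}[r^2]\le M^2$ gives $\max_i r_i=O(\sqrt n)$ with high probability, so $\E[\max_i p_i]=O(n^{-1/2})$. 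Hence $1-\alpha\le \Pb_{\mathrm{oracle}}(y\in C)\le 1-\alpha+cn^{-1/2}$.

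For the perturbation step I would bound $|\Pb(y\in C_{\text{wTCP-DR}})-\Pb_{\mathrm{oracle}}(y\in C)|$ by $\Delta_r$. The idea is that coverage is a monotone functional of the weight vector through the weighted empirical CDF $\widehat F$, and that swapping $r$ for $\hat r$ changes the induced tilted distribution of conformity scores by a total-variation amount controlled by the $L^1$ density-ratio error: since $p^I=r\,p^O$, one has $\int|r\,p^O-\hat r\,p^O|=\E_{p^O}|r-\hat r|=\Delta_r$, so the resulting shift in the $(1-\alpha)$-quantile $q_{\widehat F}$, and therefore in the coverage event $\{s_{n+m+1}\le q_{\widehat F}\}$, is at most $\Delta_r$ in expectation. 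Combining the oracle bounds with this two-sided $\Delta_r$ perturbation reproduces both inequalities of Eq.~\eqref{eq:ours_coverage}.

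I expect the perturbation step to be the main obstacle. The normalized weights $\hat p_i$ are \emph{self-normalized}: the estimate $\hat r$ appears in every numerator and in the shared denominator, so perturbing $\hat r$ propagates nonlinearly and simultaneously through all $n+1$ weights. The delicate point will be to convert this coupled, random perturbation of the weight vector into a single aggregate bound in terms of the population quantity $\Delta_r$ \emph{without} reintroducing dependence on $n$; this is exactly what the second-moment control $M$ buys (keeping the self-normalizing denominator concentrated) together with absolute continuity (ensuring $r$, and hence the comparison measure, is finite almost everywhere).
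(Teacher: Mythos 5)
The paper does not actually prove this proposition: it is imported verbatim (as ``Proposition 4.2'') from \cite{taufiq2022conformal}, and the only proof the paper supplies in its appendix is for Theorem~1. Your two-step decomposition --- oracle coverage for the true weights via weighted exchangeability, with upper bound $1-\alpha+\E[\max_i p_i]=O(n^{-1/2})$ under the second-moment condition, followed by a perturbation bounded by $\Delta_r$ when $r$ is replaced by $\hat r$ --- is exactly the structure of the cited proof (which in turn builds on Theorem~3 of \cite{lei2021conformal}), so in that sense you have correctly reconstructed the intended argument rather than found a new one.

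The one place your sketch is genuinely loose is the perturbation step, and you have half-diagnosed it yourself. Arguing that the $(1-\alpha)$-quantile $q_{\widehat F}$ shifts by ``at most $\Delta_r$'' does not give you what you need: a bound on a quantile shift controls the coverage change only through the density of the score distribution near that quantile, which you have not assumed anything about. The standard route avoids quantiles entirely: for any threshold $t$, the two weighted empirical CDFs built from $\{p_i\}$ and $\{\hat p_i\}$ differ by at most $\tfrac{1}{2}\sum_i |p_i-\hat p_i|$ in absolute value, so the coverage probabilities differ by at most $\E\bigl[\sum_i |p_i-\hat p_i|\bigr]$ up to constants; one then shows this expectation is controlled by $\Delta_r$ by handling the shared self-normalizing denominator (essentially the same telescoping manipulation the paper carries out explicitly for $|p_{n+m+1}-\hat p_{n+m+1}|$ in Step one of the proof of Theorem~1, which yields $\Delta_r/(\Delta_r+1)+\calO_p(n^{-1/2})$). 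Without that lemma your perturbation step is an assertion, not a proof; with it, your outline matches the source. A minor additional caution: your bound $\E[\max_i p_i]=O(n^{-1/2})$ involves the expectation of a ratio with a random denominator, so you need either a truncation or a high-probability lower bound on $\sum_j r_j$ before dividing.
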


\setlength{\textfloatsep}{0pt}
\begin{algorithm}[h]
\caption{Weighted Transductive Conformal Prediction with Density Ratio Estimation (wTCP-DR)}
\label{alg:wtcp_dr}
\begin{algorithmic}[1] % Enable line numbering
\Require level $\alpha$, observational data $\mathcal{D}^O=(x_i^O, y_i^O )_{i=1}^n$ and interventional data $\mathcal{D}^I = (x_i^I, y_i^I )_{i=n+1}^{n+m}$, test sample $x_{n+m+1}^I$.
\State Initialize $C_{\text{wTCP-DR}}(x_{n+m+1}^I) = \varnothing$.
\State Estimate the density ratio $\hat{r}$ using $\mathcal{D}^O$ and $\mathcal{D}^I$.
\For{$\overline{y} \in \mathcal{Y}$}
    \State Construct augmented dataset $\mathcal{D}_{\overline{y}} = \mathcal{D}^O \cup \{ x_{n+m+1}^I, \overline{y}\}$.
    \State Fit a regression model $\hat{\mu}$ on $\mathcal{D}_{\overline{y}}$.
    \State Compute conformity scores $s^{\overline{y}}_i = |\hat{\mu}(x_i^O) - y_i^O|$ for $i=1, \cdots, n$ and $s^{\overline{y}}_{n+m+1} = |\hat{\mu}(x^I_{n+m+1}) - \overline{y}|$.
    \State Compute the normalized weights $\hat{p}_i$ as in Eq.~\eqref{eq:easy_estimated_normalized_weights} ($y_{n+m+1}$ is replace with $\overline{y}$).
    \State Construct weighted empirical distribution of conformity scores $\widehat{F} = \sum_{i=1}^{n} \hat{p}_i \delta_{s^{\overline{y}}_i} + \hat{p}_{n+m+1} \delta_{\infty}$.
    \State Compute quantile $q_{\widehat{F}} = \operatorname{Quantile} (1-\alpha; \widehat{F})$.
    \If{$s_{n+m+1} \leq q_{\widehat{F}}$}
        \State $C_{\text{wTCP-DR}}(x_{n+m+1}^I) = C_{\text{wTCP-DR}}(x_{n+m+1}^I) \cup \{ \overline{y} \}$.
    \EndIf
\EndFor
\Ensure $C_{\text{wTCP-DR}}(x_{n+m+1}^I)$.
\end{algorithmic}
\end{algorithm}

By comparing Eq.~\eqref{eq:ours_coverage} and Eq.~\eqref{eq:naive_coverage}, we can see that when we have access to the oracle density ratio $r(x,y)$, i.e $\Delta_r = 0$, then wTCP-DR obtains a tighter upper bound than the naive method, as typically the number of observational data $n$ is much larger than the number of interventional data $m$ in causal inference, due to the higher cost of randomized controlled trails.
Unfortunately, oracle density ratio $r(x,y)$ is usually unavailable, and the estimation error of density ratio is of order $\min (n, m)^{-1/2} = m^{-1/2}$ for moment matching or ratio matching~\cite{sugiyama2012density, yamada2013relative} and of order $m^{-1/2}$ for probabilistic classification~\cite{bartlett2006convexity}. 
It seems that wTCP-DR has spent a huge amount of effort while achieving a worse result in the end.

However, we would like to emphasize that the efficiency of conformal prediction methods is quantified by the \textbf{width of the confidence interval}, not by the difference between the probability upper and lower bound.
An upper bound strictly lower than $1$ guarantees that the confidence interval is not arbitrarily large, however there is no guarantee that a smaller upper bound results in a smaller confidence interval.
Intuitively, our method has a smaller interval compared to the naive method, because the regression model $\hat{\mu}$ of wTCP-DR is trained on $n$ observational data while the regression model $\hat{\mu}$ of naive method is trained on $m$ interventional data. Intuitively, there is a higher chance that the conformity scores of wTCP-DR are smaller than the conformity scores of the naive method, which means that $C_{\text{wTCP-DR}}$ is a smaller interval than $C_{\text{naive}}$. 
We formalize the above intuition in the following section for additive Gaussian noise model.

\subsection{Case Study: Additive Gaussian Noise Model}
\label{subsec:case_study}
In this section, we consider an additive Gaussian noise model, which is a simple yet popular setting in causal inference~\cite{hoyer2008nonlinear}.
Recall that we fix $T=t$ and
drop the dependence on $T$ for simplicity of notations. 
Specifically, we make the following assumptions:
\begin{enumerate}[itemsep=0.1pt,topsep=0pt,leftmargin=*]
\item [A1] Additive Gaussian noise. 
$y^O \sim \calN({\theta^O}^\top \varphi(x^O), \sigma^2) $ and $y^I \sim \calN({\theta^I}^\top \varphi(x^I), \sigma^2)$, where $\varphi$ represents the (learned) features of interventional and observational data.
%
% We explain this causal model in details in Appendix~\ref{subsec:causal_model}.

\item [A2] Gaussian features. $\varphi(x^O) \sim \calN(0, \Sigma^O)$ and $\varphi(x^I) \sim \calN(0, \Sigma^I)$.
\item [A3] Upper bounds on the difference between oracle density ratio $r(x,y)$ and estimated density ratio $\hat{r}(x,y)$.
\begin{align*}
    &\E_{p^O(x,y)} \left( r(x,y) - \hat{r}(x, y)\right)^2 < \infty \\
    \Delta_r := &\E_{p^O(x,y)} |r(x,y) - \hat{r}(x, y)| < \frac{1 - \alpha}{\alpha}
\end{align*}
\item [A4] Bounded $\chi^2$ divergence between $p^I(x,y)$ and $p^O(x,y)$.
\begin{align*}
    \chi^2(p^I\| p^O) = \int \left( \frac{p^I(x,y)}{p^O(x,y)} - 1\right)^2 p^O(x,y) dxdy < \infty
\end{align*}
\end{enumerate}
Under these assumptions, the effect of hidden confounding is reflected from the difference of $p^O(y \mid x, t)$ $p^I(y \mid x, t)$ through the difference of $\theta^O$ and $\theta^I$: $\theta^O$ is dependent of hidden confounding $u$ whereas $\theta^I$ is independent of $u$ due to intervention.
Before showing our main theoretical result, let us first discuss the implications of these assumptions.
\begin{enumerate}[itemsep=0.1pt,topsep=0pt,leftmargin=*]
\item [A1] We assume that interventional and observational data share the same feature $\varphi$, a commonly used setting in causal inference especially when $\varphi$ is learned with neural networks~\cite{shi2019adapting}.
We assume the same noise scale for observational and interventional data only for simplicity, which can be relaxed to the more general case that $y^O$ and $y^I$ have different noise scales $\sigma^O, \sigma^I$. 
\item [A2] 
This assumption is satisfied when either the features are designed to have Gaussian distribution, or the features are learned from wide enough neural networks~\cite{lee2017deep}. 
\item [A3] This assumption requires that the error of density ratio estimation is upper bounded, and given that $\alpha$ is typically $0.1$ or $0.05$, this assumption is usually satisfied in practice.
\item [A4] This assumption ensures that $p^I$ and $p^O$ share the same support over $\calX \times \calY$, and is required such that the central limit theorem can be used in the proof.
\end{enumerate}
Now we give the main theoretical result of this paper.
% \vspace{-5pt}
\begin{thm}\label{thm:main}
Assume the above assumptions hold, with probability at least $1 - \delta_1 - \delta_2 - \delta_3 - \delta_4$, the interval $C_{\text{wTCP-DR}}(x_{n+m+1}^I)$ obtained from \Cref{alg:wtcp_dr} will be smaller than the interval $C_{\text{naive}}(x_{n+m+1}^I)$ obtained from \Cref{alg:naive} up to $\calO(\sqrt{log n / n})$, with $\delta_1, \delta_2, \delta_3, \delta_4$ being the following:
\vspace{-5pt}
\begin{align*}
    &\delta_1 = \left( \frac{2}{n} \frac{1 - \alpha - \frac{\Delta_r}{\Delta_r + 1}}{\alpha + \frac{\Delta_r}{\Delta_r + 1}} \frac{p^O(x)}{p^I(x)} \right)^{ 4 \sigma^2 \sqrt{\frac{C_1}{C_2}} } , \delta_2 = \frac{2}{n}, \\
    &\delta_3 = \exp \left(-\frac{1}{2} L_{1-\alpha}^2 \left(\operatorname{erf}^{-1}(1-\alpha)\right)^2 \frac{(d-1)^2 }{m-1} \right),\\
    &\delta_4 = \exp \left(- C_\alpha^2 \frac{ n_{\text{eff}} }{(m-d)^2} \right)
\end{align*}
where $\frac{C_1}{C_2} = \frac{(\theta^I + \theta^O)^\top \Sigma^I (\theta^I + \theta^O)}{(\theta^I - \theta^O)^\top \Sigma^I (\theta^I - \theta^O)}$ represent the dissimilarity distance between $\theta^I$ and $\theta^O$; $\operatorname{erf}^{-1}$ is the inverse error function~\cite{abramowitz1948handbook}, $L_{1-\alpha}$ and $C_\alpha$ are constants that only depend on $\alpha$; and $n_{\text{eff}}$ is the effective sample size defined as below
\begin{align}
    n_{\text{eff}} = \left(\sum_{i=1}^n \hat{r} (x_i^O )\right)^2 \Big/ \sum_{i=1}^n  \hat{r} (x_i^O )^2
\end{align}
\end{thm}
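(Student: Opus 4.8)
The plan is to reduce the comparison of interval \emph{widths} to a comparison of two scalar quantiles. For the Naive method the interval $C_{\text{naive}}$ is by construction symmetric about $\hat\mu(x_{n+m+1}^I)$ with width $2q^{\text{naive}}$, where $q^{\text{naive}}$ is the level-$(1-\alpha)(1+\tfrac1m)$ quantile of the interventional calibration scores. For wTCP-DR the transductive set $\{\bar y : |\hat\mu_{\bar y}(x_{n+m+1}^I) - \bar y| \le q^{\text{wTCP-DR}}\}$ is, up to the dependence of the refit $\hat\mu_{\bar y}$ on $\bar y$, an interval of width $2q^{\text{wTCP-DR}}$. First I would show that, because $\hat\mu_{\bar y}$ is the least-squares fit on $n$ observational points augmented by a single pair $(x_{n+m+1}^I,\bar y)$, its value at the test point varies with $\bar y$ only at order $1/n$, so the target inequality becomes $q^{\text{wTCP-DR}} \le q^{\text{naive}} + \calO(\sqrt{\log n / n})$.

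Next I would characterize $q^{\text{naive}}$ under A1--A2. Writing each calibration residual as $y_i^I - \hat\theta^{I\top}\varphi(x_i^I) = \epsilon_i + (\theta^I - \hat\theta^I)^\top\varphi(x_i^I)$, with $\epsilon_i\sim\calN(0,\sigma^2)$ independent of the training-fold estimate $\hat\theta^I$, the conformity scores are folded Gaussians inflated by an estimation term whose size is governed by a $\chi^2$ fluctuation in dimension $d$ over the $m$ interventional samples. A quantile expansion around $\sigma\sqrt2\,\mathrm{erf}^{-1}(1-\alpha)$ together with a concentration bound on this $\chi^2$ term produces the failure probability $\delta_3$, whose exponent $(d-1)^2/(m-1)$ and factor $\mathrm{erf}^{-1}(1-\alpha)$ reflect exactly this step.

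For wTCP-DR I would use that the density-ratio reweighting transforms $p^O$ into $p^I$, so the weighted empirical quantile of the observational scores approximates the $(1-\alpha)$-quantile of $|y - \hat\theta^{O\top}\varphi(x)|$ under $p^I$. Decomposing this residual as $\epsilon + (\theta^I-\theta^O)^\top\varphi(x) + (\theta^O-\hat\theta^O)^\top\varphi(x)$, the middle term is the confounding bias whose variance under A2 is exactly $C_2=(\theta^I-\theta^O)^\top\Sigma^I(\theta^I-\theta^O)$, while the last term is $\calO(\sqrt{d/n})$ and negligible. I would then control three errors separately: the density-ratio estimation error via $\Delta_r$ and \Cref{prop:ours_coverage}, giving the $\Delta_r/(\Delta_r+1)$ corrections; the deviation of the normalizing weight sum $\tfrac1n\sum_i\hat r(x_i^O,y_i^O)$, controlled by a second-moment (Chebyshev) bound that uses the finite $\chi^2$ divergence of A4 and contributes $\delta_2=2/n$; and the finite-sample fluctuation of the weighted quantile, governed by the effective sample size $n_{\text{eff}}$, which yields $\delta_4=\exp(-C_\alpha^2 n_{\text{eff}}/(m-d)^2)$. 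The final comparison $q^{\text{wTCP-DR}}\le q^{\text{naive}}$ then reduces to a tail comparison between a folded Gaussian carrying the confounding-bias variance $C_2$ and one inflated by the naive estimation variance; surfacing the interpretable dissimilarity ratio $C_1/C_2$ and the test-point ratio $p^O(x)/p^I(x)$ through a Chernoff/Markov step produces the remaining failure probability $\delta_1$, and a union bound over the four events gives probability $1-\delta_1-\delta_2-\delta_3-\delta_4$.

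The hard part will be the wTCP-DR side: making rigorous that the weighted empirical CDF built from the \emph{estimated} weights $\hat p_i$ (nonlinear functions of $\hat r$) concentrates on the population $p^I$-law of the reweighted conformity scores, simultaneously absorbing the density-ratio error $\Delta_r$, the self-normalization through $n_{\text{eff}}$, and the higher-moment correction requiring A4, and then converting this CDF-level bound into a clean quantile inequality. Equally delicate is controlling the $\bar y$-dependence of the transductively refit $\hat\mu_{\bar y}$ so that the wTCP-DR output is genuinely an interval of width $2q^{\text{wTCP-DR}}$; this is precisely where the $\calO(\sqrt{\log n/n})$ slack enters.
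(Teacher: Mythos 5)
Your reduction to a quantile comparison, your use of perturb-one stability of the transductive refit to absorb the $\bar y$-dependence into the $\calO(\sqrt{\log n/n})$ slack, your half-Gaussian-with-inflated-variance characterization of the naive scores yielding $\delta_3$, and your weighted-quantile CLT step with effective sample size $n_{\text{eff}}$ yielding $\delta_4$ all match the paper's Steps two and three. But there is a genuine gap on the wTCP-DR side: you never control the test point's own normalized weight $\hat p_{n+m+1}$. The weighted empirical distribution in \Cref{alg:wtcp_dr} carries a point mass $\hat p_{n+m+1}\,\delta_\infty$, so whenever $\hat p_{n+m+1}>\alpha$ the $(1-\alpha)$-quantile is $+\infty$ and the width comparison fails outright. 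The paper's Step one is devoted exactly to showing $\Pb(\hat p_{n+m+1}\le\alpha)\ge 1-\delta_1$, by bounding $|p_{n+m+1}-\hat p_{n+m+1}|$ by $\tfrac{\Delta_r}{\Delta_r+1}+\calO_p(n^{-1/2})$ and then bounding the tail of the oracle ratio $r(x_{n+m+1},\bar y)$ relative to $\sum_j r(x_j,y_j)$ via the log-Gaussian factorization $\log\bigl(p^I(y\vbar x)/p^O(y\vbar x)\bigr)=-\tfrac{1}{2\sigma^2}(2y-\xi_1)\xi_2$ with $\Var(\xi_1)=C_1$, $\Var(\xi_2)=C_2$ (\Cref{prop:prob_n+m+1}). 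That is the sole source of $\delta_1$, its exponent $4\sigma^2\sqrt{C_1/C_2}$, and the factor $p^O(x)/p^I(x)$; conditional on this event the quantile level is merely shifted from $\alpha$ to $\beta=\alpha-(1-\alpha)\tfrac{\hat p_{n+m+1}}{1-\hat p_{n+m+1}}$. Your proposal instead attributes $\delta_1$ to a terminal "tail comparison between a folded Gaussian carrying the confounding-bias variance $C_2$ and one inflated by the naive estimation variance," which is a different mechanism and would not produce a failure probability decaying in $n$ of the stated form.

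Relatedly, your population target for the reweighted quantile differs from the paper's in a way that would force an extra hypothesis. You take the target to be the $p^I$-law of $|Y-\hat\theta^{O\top}\varphi(X)|$, whose decomposition $\epsilon+(\theta^I-\theta^O)^\top\varphi(x)+(\theta^O-\hat\theta^O)^\top\varphi(x)$ carries the confounding variance $C_2$ in the score values themselves; the resulting comparison against the naive quantile would then hold only when $C_2$ is dominated by the naive inflation $d\sigma^2/(m-d-1)$, a condition absent from the theorem. The paper instead keeps the wTCP-DR scores at the observational residual level ($v_i\sim|\calN(0,\sigma^2)|$ up to $\calO(\sqrt{\log n/n})$ via \Cref{prop:error} and \Cref{prop:perturb_one_stability}, which is also where $\delta_2=2/n$ actually originates, not from a Chebyshev bound on $\tfrac1n\sum_i\hat r$) and lets the distribution shift act only through the quantile level $1-\beta$ and the weights. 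To repair your argument you would need to (i) add the Step-one bound on $\hat p_{n+m+1}$, and (ii) either adopt the paper's score-level normalization or state and justify the additional comparison between $C_2$ and the naive variance inflation.
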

The proof of \Cref{thm:main} can be found in \Cref{appsec:proof_thm1}.
The implications of \Cref{thm:main} can be summarized as below.
\begin{enumerate}[itemsep=0.1pt,topsep=0pt,leftmargin=*]
\item $\delta_1$ quantifies the number of observational data needed to contain sufficient information about the interventional distribution. If $\theta^I$ and $\theta^O$ are very close, which means that the distributions $p^I(x,y)$ and $p^O(x,y)$ are very similar, the exponenet $\frac{C_1}{C_2}$ is bigger so fewer observational data (smaller $n$)  would contain sufficient information of the interventional distributions.
\item $\delta_2$ quantifies the stability of the estimator used. Since we are using the least squared estimator which is known to be stable when $n > d$ and $m > d$, having more $n$ would entail smaller $\delta_2$.
\item $\delta_3$ and $\delta_4$ quantifies the ratio of the effective sample size $n_{\text{eff}}$ and the interventional sample size $m$.
$n_{\text{eff}}$ was first defined by \cite{gretton2009covariate} in covariate shift literature and \cite{tibshirani2019conformal} gives an intuition that the performance of weighted conformal prediction should depend on $n_{\text{eff}}$, our theorem is the first to quantitatively show that $n_{\text{eff}}$ rather than $n$ is the key to measure the performance of weighed conformal prediction when compared against standard conformal prediction.
\end{enumerate}

From \Cref{thm:main}, we can see that our method in \Cref{alg:wtcp_dr} is more efficient than the naive method in \Cref{alg:naive} in terms of width of confidence interval provided, when the interventional distribution is close to the observational distribution, when the dimension $d$ is relatively high compared to the number of interventional data $m$, and when the effective sample size $n_{\text{eff}}$ is larger than $m$.
The theoretical result is further corroborated by empirical findings in Section 6.
% \vspace{-5pt}
%
\setlength{\textfloatsep}{5pt}
\begin{algorithm}[h]
\caption{Two-stage wSCP-DR (Inexact)}
\label{alg:wscp_dr_inexact}
\begin{algorithmic}[1] 
\Require Level $\alpha$, observational data $\mathcal{D}^O=(x_i^O, y_i^O )_{i=1}^n$ and interventional data $\mathcal{D}^I = (x_i^I, y_i^I )_{i=n+1}^{n+m}$, test sample $x_{n+m+1}^I$.  
\State Use $\mathcal{D}^O$ and $\mathcal{D}^I$ to estimate the density ratio $\hat{r}$. \\
\# First stage.
\For{$x_j^I, y_j^I \in \mathcal{D}^I$}
    \State Fit a regression model $\hat{\mu}$ on $\mathcal{D}^O \cup (x_j^I, y_j^I)$.
    \State Compute conformity scores $s_i = |\hat{\mu}(x_i^O) - y_i^O|$.
    \State Compute the normalized weights $\hat{p}_i$ as in Eq.~\eqref{eq:easy_estimated_normalized_weights}.
    \State Construct weighted empirical distribution of conformity scores $\widehat{F} = \sum_{i=1}^{n} \hat{p}_i \delta_{s_i} + \hat{p}_{j} \delta_{\infty}$.
    \State Compute quantile $q_{\widehat{F}} = \operatorname{Quantile} (1-\alpha; \widehat{F})$.
    \State $C_j^L = \hat{\mu}(x_{j}^I)-q_{\widehat{F}} $ and $C_j^R = \hat{\mu}(x_{j}^I) + q_{\widehat{F}}$
\EndFor
\\ \# Second stage.
\State Fit regressor $\hat{m}^L$ on $(x_{n+1}^I, C_{n+1}^L), \cdots, (x_{n+m}^I, C_{n+m}^L)$, and fit regressor $\hat{m}^R$ on $(x_{n+1}^I, C_{n+1}^R), \cdots, (x_{n+m}^I, C_{n+m}^R)$.
\Ensure 
$C_{wSCP-DR}^{Inexact}(x_{n+m+1}^I) = [\hat{m}^L(x_{n+m+1}^I), \hat{m}^R(x_{n+m+1}^I)]$
\end{algorithmic}
\end{algorithm}

\section{Practical Algorithm: WSCP-DR}
In practice, although transductive conformal prediction in \Cref{alg:wtcp_dr} is theoretically well-grounded, it is notoriously expensive to compute, compared to split conformal prediction.
The reason that split conformal prediction cannot be used in \Cref{alg:wtcp_dr} is the density ratio $\hat{r}$ evaluated at test sample, which requires the knowledge of both test covariate $x_{n+m+1}$ and test target value $y_{n+m+1}$ but unfortunately $y_{n+m+1}$ is inaccessible to us.
In this section, we show that we can do two-stage split conformal prediction which is computationally more efficient than transductive conformal prediction \Cref{alg:wtcp_dr} and achieves the same marginal coverage guarantee.

In the first stage, recall that interventional labels $y_{n+1}^I, \cdots, y_{n+m}^I$ are accessible, so the density ratios
$\hat{r}(x_{n+1}^I, y_{n+1}^I), \cdots, \hat{r}(x_{n+m}^I, y_{n+m}^I)$ 
and the normalized conformal weights in Eq.~\eqref{eq:normalized_weights} can be computed for $n+1, \cdots, n+m$.
Therefore, split weighted conformal prediction can be used to construct intervals $(C_{n+1}^L, C_{n+1}^R), \cdots, (C_{n+m}^L, C_{n+m}^R)$ for interventional data $(x_{n+1}^I, y_{n+1}^I), \cdots, (x_{n+m}^I, y_{n+m}^I)$ with marginal coverage guarantee. 
In the second stage, by noticing that the test sample $x_{n+m+1}^I$ shares the same distribution as $x_{n+1}^I, \cdots, x_{n+m}^I$, a standard split conformal prediction can be used to construct confidence interval $[C_{x_{n+m+1}}^L, C_{n+m+1}^R]$ for the test sample $x_{n+m+1}$ with marginal coverage guarantee.
Details of this method are presented in Algorithm~\ref{alg:wscp_dr_exact}.
Additionally, we can further reduce the computational cost of Algorithm~\ref{alg:wscp_dr_exact} by directly fitting a regressor $\hat{\mu}^L$ over the interval lower bounds $(x_{n+1}^I, C_{n+1}^L), \cdots, (x_{n+1}^I, C_{n+m}^L)$ and fitting a regressor $\hat{\mu}^R$  over the interval upper bounds $(x_{n+1}^I, C_{n+1}^R), \cdots, $ $(x_{n+1}^I, C_{n+m}^R)$ in the second stage.
Therefore, we call Algorithm~\ref{alg:wscp_dr_exact} the exact two-stage method which has marginal coverage guarantee and call Algorithm~\ref{alg:wscp_dr_inexact} 
the inexact two-stage method which does not have marginal coverage guarantee but is more efficient.

% \setlength{\textfloatsep}{-10pt}
% \vspace{-5pt}
\begin{algorithm}[h]
\caption{Two-stage wSCP-DR (Exact)}
\label{alg:wscp_dr_exact}
\begin{algorithmic}[1] 
\Require Level $\alpha$, observational data $\mathcal{D}^O=(x_i^O, y_i^O )_{i=1}^n$ and interventional data $\mathcal{D}^I = (x_i^I, y_i^I )_{i=n+1}^{n+m}$, test sample $x_{n+m+1}^I$.  
\State Use $\mathcal{D}^O$ and $\mathcal{D}^I$ to estimate the density ratio $\hat{r}$. \\
\# First stage.
\State Same as the first stage in \Cref{alg:wscp_dr_inexact}
\\ \# Second stage.
\State Split $\calD^I$ into a training set of size $m_1$: $\calD^I_{tr} = (x_i^I, y_i^I )_{i=n+1}^{n+m_1}$ and calibration set of size $m - m_1$: $\calD^I_{cal} = (x_i^I, y_i^I )_{i=m_1 + 1}^{m}$. 
\State Fit regressor $\hat{m}^L$ on $(x_{n+1}^I, C_{n+1}^L), \cdots, (x_{n+m_1}^I, C_{n + m_1}^L)$ and $\hat{m}^R$ on $(x_{n+1}^I, C_{n+1}^R), \cdots, (x_{n+m_1}^I, C_{n + m_1}^R)$.
\State Compute conformity scores on $\calD^I_{cal}$: $s_i = \max \{ \hat{m}^L(x_i^I) - C_{i}^L, C_{i}^R - \hat{m}^R(x_i^I)\}$ for $i = \{m_1 + 1, \cdots, m\}$.
\State Construct empirical distribution of conformity scores $\widehat{F} = \frac{1}{m-m_1} \sum_{i=m_1 + 1}^{m} \delta_{s_i}$.
\State Compute $q_{\widehat{F}} = \operatorname{Quantile} ( (1-\alpha)(1 + \frac{1}{m - m_1 }); \widehat{F} )$.
\small
\Ensure 
$C_{wSCP-DR}^{Exact}(x_{n+m+1}^I) = [\hat{m}^L(x_{n+m+1}^I) - q_{\widehat{F}}, \hat{m}^R(x_{n+m+1}^I) + q_{\widehat{F}}]$
\end{algorithmic}
\end{algorithm}
% \subsection{Density Ratio Estimation without Labels}
% Our methods presented in previous sections are limited by the fact that they need labeled data from interventional distribution.
% %
% In applications such as recommendation systems, features $X$ are not directly available, instead, latent features are learned from the outcome $Y$.
% %
% In other applications, we can use pseudo labels predicted by the regression model $\hat{\mu}$ that is already trained on $\mathcal{D}^O_{tr}$.
\vspace{-15pt}
\section{Conformal Inference of Individual Treatment Effect}
In Section 3 and 4, we focus on conformal inference for counterfactual outcomes $Y(1)$ and $Y(0)$. 
However, offering confidence intervals for individual treatment effects may hold greater practical significance.
% In this section, we introduce the conformal prediction method that constructs confidence intervals for individual treatment effect $\tau = Y(1) - Y(0)$ and is agnostic to the confidence interval for counterfactuals $Y(1)$, $Y(0)$.
Our algorithms wTCP-DR and wSCP-DR can predict confidence intervals $[C_t^L(x_{n+m+1}^I), C_t^R(x_{n+m+1}^I)], t\in\{0,1\}$ that has marginal coverage guarantee for the potential outcome $y_{n+m+1}$ under treatment $t=1$ (or under control $t=0$).
The naive way of construcing intervals for ITE is to use bonferroni correction, i.e., $C_{ITE}^L = C_1^L - C_0^R$ and $C_{ITE}^R = C_1^R - C_0^L$. We demonstrate the empirical result using the naive way in Section 6 for fair comparison among methods that infer counterfactual outcomes, and we also include the results in Appendix~\ref{subsec:app_exp_syn} where intervals for ITE are constructed using the nested methods from \citep[Section 4]{lei2021conformal}.

\section{Experiments}

\subsection{Experiment on Synthetic Data}

% \subsubsection{Counterfactual Outcome Estimation}

Here, we conduct experiments for counterfactual outcome and ITE estimation on synthetic data with hidden confounding and focus on the setting where the number of observational data $n $ is larger than the number of interventional data $m$.
We aim to answer the following research questions:
\textbf{RQ1}: Can our proposed methods achieve the specified level of coverage (0.9) for potential outcomes under the setting with hidden confounding and $n $ larger than $ m$ for counterfactual outcomes and ITEs?
\textbf{RQ2}: Can our proposed methods have better efficiency (smaller confidence interval) than the Naive method which only uses interventional data?
\textbf{RQ3}: How does hidden confounding strength impact the coverage of our methods?
\textbf{RQ4}: How does the size of interventional data ($m$) impact the efficiency of our methods?

\begin{table}[htb!]
\vspace{-10pt}
\caption{Description for synthetic data, Yahoo and Coat}
\label{tab:dataset}
\vspace{-10pt}
\small
\begin{tabular}{l|c|c|c|c|c}
\hline
Dataset   & $n_{tr}$ & $n_{cal}$ & $m_{tr}$ & $m_{cal}$ & $m_{ts}$ \\ \hline
Synthetic & 5,000    & 5,000     & 125      & 125       & 200      \\ \hline
Yahoo     & 103,343  & 25,706    & 10,800   & 10,800    & 32,399   \\ \hline
Coat      & 5,568    & 1,385     & 928      & 928       & 2,784    \\ \hline
\end{tabular}
\vspace{-5pt}
\end{table}

\begin{table*}[t]
\centering
\caption{Results for counterfactual outcomes and ITEs on the synthetic data. We compare our methods wSCP-DR (Inexact), wSCP-DR (Inexact), and wTCP-DR with baselines. Results are shown for coverage and confidence interval width on the synthetic data with $n=10,000$ and $m=250$.
Boldface and underlining are used to highlight the top and second-best interval width among the methods with coverage close to 0.9.}
\label{table:main_res_cf_outcome}
\vspace{-10pt}
\small
\begin{tabular}{l|c|c|c|c|c|c}
\hline
Method & Coverage $Y(0)$ \textuparrow
 & Interval Width $Y(0)$ \textdownarrow
 & Coverage $Y(1)$ \textuparrow
 & Interval Width $Y(1)$ \textdownarrow
 & Coverage ITE \textuparrow &  Interval Width ITE \textdownarrow
 \\
\hline
wSCP-DR(Inexact) & \(0.891 \pm 0.026\) & \(\underline{0.414} \pm 0.008\) & \(0.889 \pm 0.019\) & \(\textbf{0.421} \pm 0.013\) & \(0.942 \pm 0.017\) & \(\textbf{0.835} \pm 0.016\)\\
wSCP-DR(Exact) & \(0.934 \pm 0.026\) & \(0.496 \pm 0.010\) & \(0.935 \pm 0.023\) & \(\underline{0.503} \pm 0.010\) & \(0.957 \pm 0.018\) & \(0.998 \pm 0.015\) \\
wTCP-DR & \(0.899 \pm 0.028\) & \(\textbf{0.386} \pm 0.013\) & \(0.923 \pm 0.015\) & \(0.576 \pm 0.066\) & \(0.953 \pm 0.015\) & \(\underline{0.962} \pm 0.074\)\\
WCP & \(0.572 \pm 0.039\) & \(0.222 \pm 0.007\) & \(0.608 \pm 0.042\) & \(0.227 \pm 0.009\) & \(0.710 \pm 0.027\) & \(0.449 \pm 0.012\)\\
Naive & \(0.932 \pm 0.018\) & \(0.508 \pm 0.042\) & \(0.930 \pm 0.023\) & \(0.560 \pm 0.049\) & \(0.952 \pm 0.018 \) & \(1.068 \pm 0.098\) \\
\hline
\end{tabular}
\vspace{-15pt}
\end{table*}

\noindent\textbf{Dataset.} For synthetic data, we use the following data-generating process for the observables $X, T, Y$ with hidden confounding $U$.
\begin{align}\label{eq:DGP}
\vspace{-20pt}
\begin{aligned}
    U, Z & \sim \mathcal{N}(\mathbf{0},\mathbf{I}), \epsilon_1, \epsilon_0 \sim \mathcal{N}(0,1) \\
    X & = Z \odot (a^2(1-U)+b^2 U) + U  \\
    \rho & = c \bar{U} + (1-c)(1 - \bar{U}) , \quad T \sim \text{Bern}(\rho) \\
    Y(1) & = \frac{1}{1+\exp(-3(\bar{U} + 2))} + 0.1\epsilon_1 \\ 
    Y(0) & = \frac{1}{1+\exp(-3(\bar{U} - 2))} + 0.1\epsilon_0 \\
    Y & = T Y(1) + (1-T) Y(0) 
\end{aligned}
\end{align}
$\mathbf{I}$ is $d \times d$ identity matrix, $d$ is the dimensionality of $X$, $\odot$ is the hadamard product, $\bar{U}$ is the mean of each dimension of $U$, and $a=5, b=3, c=0.9$.
When $c$ is close to $1$, $\rho$ is close to $0$ as $\bar{U}$ is close to $0$, leading to more controlled samples (less treated samples) in the observational data.

% We generate $n=10,000$ observational data and $m=100$
\noindent\textbf{Baselines.} \textit{Naive}: it uses interventional data for standard split conformal prediction, as detailed in Algorithm~\ref{alg:naive}.
\textit{WCP}: the algorithm proposed in~\cite{lei2021conformal} that uses propensity score as the reweighting function in WCP. 
For all the methods we use the same Gradient Boosting Tree from scikit-learn as the base model $\hat{\mu}$.
% Note that the conformal meta-learners~\cite{alaa2023conformal} do not apply to this task as they can only estimate confidence intervals for ITEs instead of potential outcomes. We will compare with it in the ITE estimation task.

\noindent\textbf{Data Splitting Details.}
We split the observational and interventional data into training $\mathcal{D}^O_{tr},\mathcal{D}^I_{tr}$, calibration $\mathcal{D}^O_{cal},\mathcal{D}^I_{cal}$, and test $\mathcal{D}_{ts}$.
%
% The test set is $\mathcal{D}_{ts}$.
%
For the Naive method, we train the base model $\hat{\mu}$ on $\mathcal{D}^I_{tr}$ and compute conformity scores on $\mathcal{D}^I_{cal}$.
For WCP, we train the base model $\hat{\mu}$ on $\mathcal{D}^O_{tr}$ and compute conformity scores on $\mathcal{D}^O_{cal}$.
The propensity model is trained on $\mathcal{D}^O_{tr}$.
For our methods, we train the base model $\hat{\mu}$ on $\mathcal{D}^O_{tr}$ and compute conformity scores on $\mathcal{D}^O_{cal}$.
The density ratio estimator $\hat{r}$ is trained on $\mathcal{D}^O_{tr} \cup \mathcal{D}^I_{tr}$.
The size of each split can be found in Table~\ref{tab:dataset}.

\noindent\textbf{Evaluation Metrics.}
We use the evaluation metrics from~\cite{lei2021conformal,alaa2023conformal} for both counterfactual outcomes and ITEs. \textit{Coverage} measures the probability of the true counterfactual outcome falling in predicted confidence interval 
% $\frac{\sum_{i\in\mathcal{D}_{ts}} \mathds{1}[Y_i(t)\in \hat{C}(x_i)]}{|\mathcal{D}_{ts}|}$
, where $\mathds{1}$ is the indicator function.
\textit{Interval width} is the average size of the confidence interval ${C}(x_i)$ on test samples $i \in \mathcal{D}_{ts}$, which represents the efficiency of conformal inference methods.

\noindent\textbf{Comparison Results (RQ1-2).} 
Table~\ref{table:main_res_cf_outcome} shows results under the setting of $n=10,000$ and $m=250$ under strong hidden confounding ($d=1$).
We make the following observations:
\begin{itemize}
[itemsep=0.1pt,topsep=0pt,leftmargin=*]
\item In terms of coverage, our methods wSCP-DR (Exact) and wTCP-DR achieve the specified level of coverage ($0.9$) for $Y(0)$, $Y(1)$ and ITE. wSCP-DR (Inexact) has coverage slightly lower than 0.9 for $Y(1)$ and $Y(0)$ as it trades coverage guarantee for lower computational cost. 
The coverage results verify that our proposed reweighting function based on density ratio estimation can accurately adapt the conformity scores computed on observational data to the interventional distribution even under hidden confounding.
In contrast, coverage of WCP is much lower than $0.9$, because WCP does not take hidden confounding into consideration, which leads to biased estimates of propensity scores so even after reweighting, the interventional data is not exchangeable with the observational data.
Therefore, the confidence interval constructed by WCP does not have coverage guarantee.
\item Considering interval width, wSCP-DR (Inexact) achieves much better efficiency (narrower interval widths) than Naive for counterfactual outcomes and ITE. 
As wSCP-DR (Exact) expands the confidence interval to gain guaranteed coverage and has slightly smaller interval width than the Naive method. 
WCP has the smallest interval width, however, its confidence intervals cannot contain the ground truth with $0.9$ probability as desired.
In practice, we recommend using wSCP-DR (Inexact) for its enhanced efficiency, if there is no strict requirement on coverage.
\item There is a imbalance of the number of treated and controlled samples in the observational data.
Notice that $c=0.9$ in Eq.~\eqref{eq:DGP} means that the size of controlled group is larger than the size of treated group in observational data.
As a result, compared to Naive method, wTCP-DR has smaller interval width for $Y(0)$, but it has a similar interval width for $Y(1)$, due to the fact that only the number of controlled samples is larger than $m$ while the number of treated samples is at the same scale as $m$. 
This observation verifies the theory of \Cref{thm:main}.
Nevertheless, wTCP-DR's ITE interval is still smaller than Naive.
\end{itemize}
% Our method Inexact can  the Naive* method while maintaining desired coverage in most of the cases, under the setting of $n \gg m$.
%

% \noindent\textbf{Results on ITEs.}
%
% Fig.~\ref{fig:cevae_main_ite} shows results of coverage and interval width for ITE.
% %

% \begin{figure}[ht]
%     \centering
%     \subfloat[Empirical coverage of ITE]{
% \includegraphics[width=0.45\textwidth]{}
%         \label{fig:cevae_cover_ite}
%     }
%     \hfill
%     \subfloat[Interval width of ITE]{
% \includegraphics[width=0.45\textwidth]{}
%         \label{fig:cevae_width_ite}
%     }
%     \caption{Results of ITE on synthetic data}
%     \label{fig:cevae_main_ite}
% \end{figure}

\noindent\textbf{Impact of Hidden Confounding Strength on Coverage (RQ3).}
% Here, we study the impact of hidden confounding strength.
Here, we modify the dimensionality of observed covariates $d \in \{1,3,5,10\}$ where larger $d$ means weaker hidden confounding.
Fig.~\ref{fig:impact_dim_x} shows the results with varying hidden confounding strengths.
We make the following observations.
% \begin{itemize}
% [itemsep=0.1pt,topsep=0pt,leftmargin=*]
% \item 
At varying levels of hidden confounding strength, wSCP-DR (Exact) and Naive can maintain the specified level of coverage. In contrast, coverage of wSCP-DR (Inexact) is slightly lower than the specified level.
% \item
When hidden confounding is stronger ($d$ is lower), WCP has lower coverage because it ignores hidden confounders and hence its propensity score reweighted conformal prediction does not have guaranteed coverage.
When hidden confounding gets weaker (larger $d$), the coverage of WCP starts to improve, because propensity scores gets closer to the true density ratio that accounts for the distribution shift.
% \end{itemize}

%
 % The naive method is supposed to be not impacted by the strength of hidden confounding, but in practice, with lower dimension $X$, it is also harder to fit a nonlinear estimator to accurately predict the outcome.

\begin{figure*}[ht]
\vspace{-10pt}
    \centering
    \subfloat[Coverage of $Y(0)$]{
\includegraphics[width=0.31\textwidth]{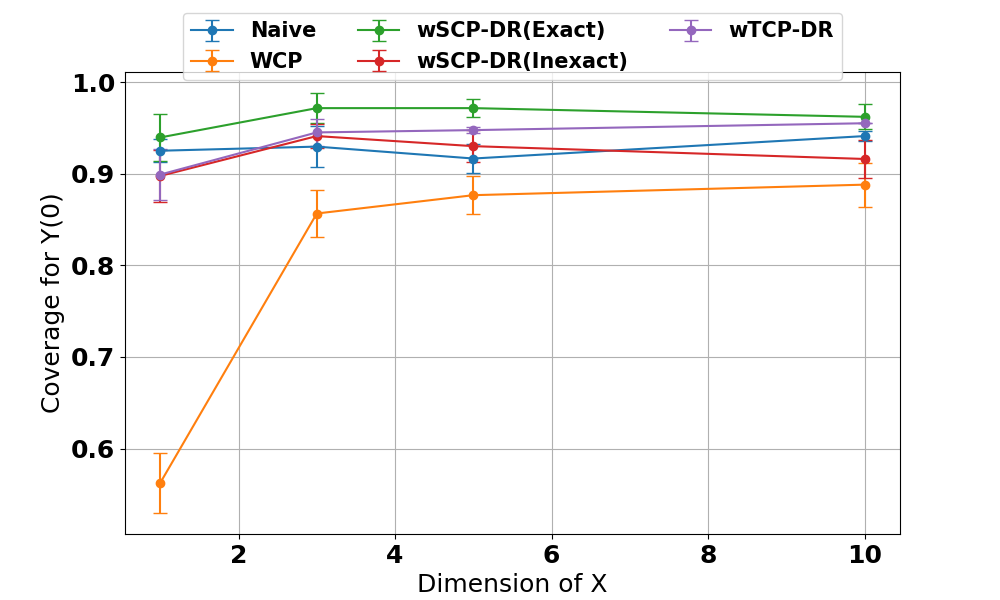}
        \label{fig:cevae_cover_conf_str_y0}
    }
    \hfill
    \subfloat[Coverage of $Y(1)$]{
\includegraphics[width=0.31\textwidth]{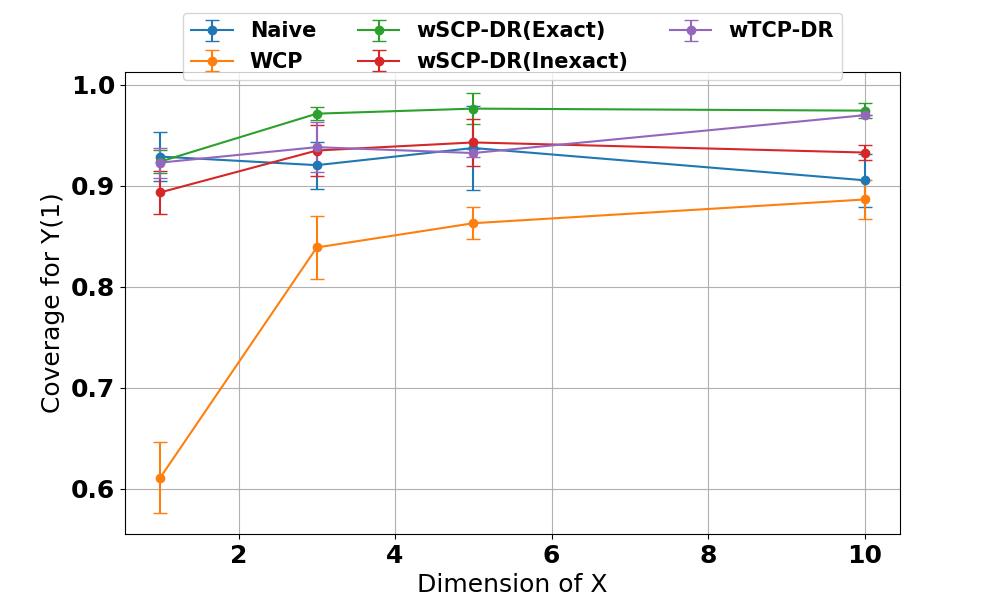}
        \label{fig:cevae_cover_conf_str_ite}
    }
    \hfill
    \subfloat[Coverage of ITE]{
\includegraphics[width=0.31\textwidth]{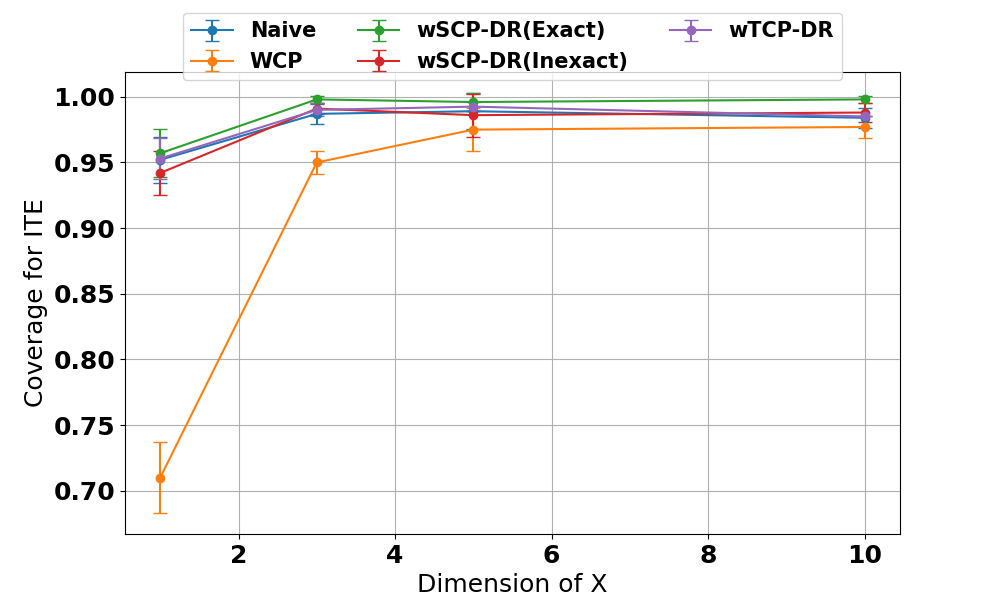}
        \label{fig:cevae_cover_conf_str_ite}
    }
    \vspace{-10pt}
    \caption{Coverage results of counterfactual outcomes and ITE with varying hidden confounding strength. Higher dimensional $X$ carries more information of the hidden confounders, leading to weaker hidden confounding. Their interval width results are in Fig.~\ref{fig:impact_dim_x_interval} of Appendix~\ref{subsec:app_exp_syn}.}
    \label{fig:impact_dim_x}
    \vspace{-10pt}
\end{figure*}

\noindent\textbf{Impact of Interventional Data Size $m$ on Interval Width (RQ4).}
Here, we study the impact of the size of interventional data $m=m_{tr}+m_{cal}$ on interval width, under strong hidden confounding $d=1$.
Fig.~\ref{fig:cevae_m} shows results with different $m$.
The interval width (efficiency) of the Naive method benefit the most from increasing $m$ as its has more training samples and also a larger calibration set for split conformal prediction, which agrees with Eq.~\eqref{eq:naive_coverage}.
%
% The improvement is larger on $Y(0)$ than $Y(1)$ due to its larger scale (see the structural equations outcomes in Eq.~\eqref{eq:DGP}).
%
Increasing $m$ has no significant impact on the efficiency of our methods, which agrees with Eq.~\eqref{eq:ours_coverage}.
The reason is that our methods only use interventional data for density ratio estimation, so larger $m$ only improves the quality of estimated density ratios, which does not impact the conformity scores because the scores are computed on the observational data.
For WCP, it does not use interventional data at all, so increasing $m$ also has no impact.
As we discussed before, due to the sample size difference between treatment group and control group, wTCP-DR's efficiency is worse for $Y(1)$ but its interval width for ITE can still be narrower than that of Naive.
\vspace{-10pt}

\begin{figure}[htb!]
    \centering
    \subfloat[Interval width of $Y(0)$ with different $m$]{
\includegraphics[width=0.31\textwidth]{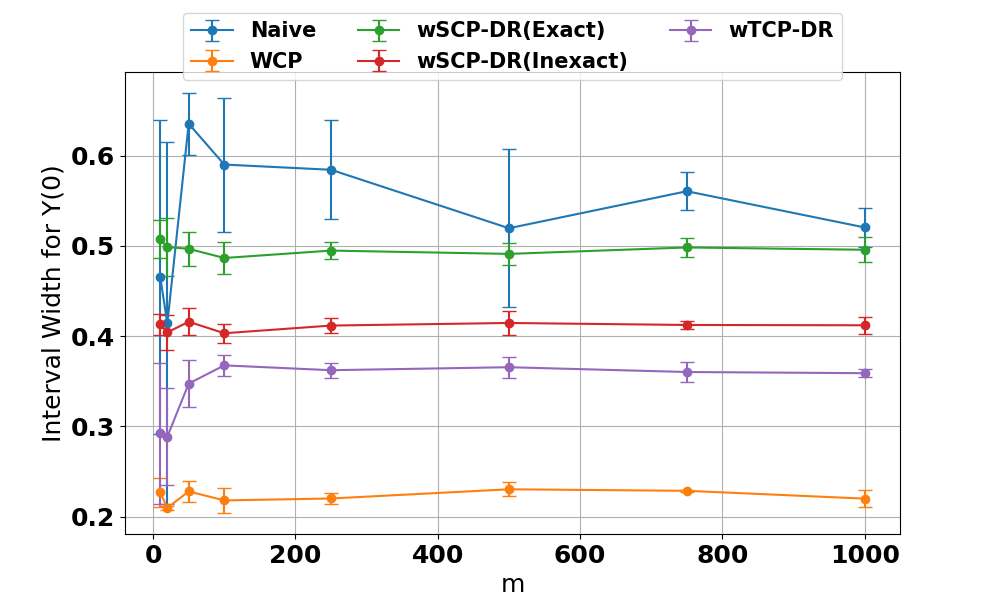}
        \label{fig:cevae_cover_y0_n_int}
    }
    \hfill
    \subfloat[Interval width of $Y(1)$ with different $m$]{
\includegraphics[width=0.31\textwidth]{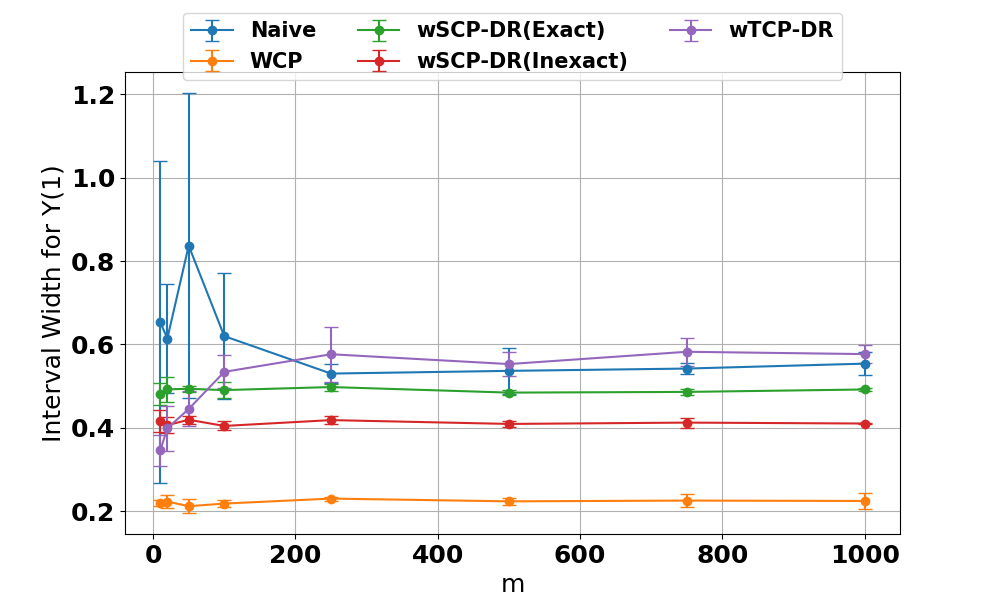}
        \label{fig:cevae_width_y1_n_int}
    }
%        \subfloat[Interval width of ITE with different $m$]{
% \includegraphics[width=0.31\textwidth]{}
%         \label{fig:cevae_width_y1_n_int}
%     }
    \vspace{-10pt}
    \caption{Impact of interventional data size $m$ on efficiency of conformal inference methods. See Appendix~\ref{subsec:app_exp_syn} for coverage results.}
    \label{fig:cevae_m}
\end{figure}

% \begin{figure}[ht]
%     \centering
%     \subfloat[Empirical coverage of $Y(1)$ with varying confounding strength]{
% \includegraphics[width=0.45\textwidth]{figs/cevae/coverage_1_conf_strength_dr_use_Y_0.png}
%         \label{fig:cevae_cover_conf_str}
%     }
%     \hfill
%     \subfloat[Interval width of $Y(1)$ with varying confounding strength]{
% \includegraphics[width=0.45\textwidth]{figs/cevae/interval_width_1_conf_strength_dr_use_Y_0.png}
%         \label{fig:cevae_width_conf_str}
%     }
%     \caption{Results with different hidden confounding strength}
%     \label{fig:impact_lbd}
% \end{figure}

% \noindent\textbf{Impact of Using $Y$ in Density Ratio Estimation.} Here, we consider three methods for density ratio estimation: using true label $Y$, using pseudo label $\hat{Y}$, and not using any labels.

% \subsubsection{ITE Estimation}
% We follow Lei and Candes~\cite{lei2021conformal} to infer confidence intervals for ITEs based on predicted intervals of potential outcomes.
% %
% Given the covariates $x_i$ of a test sample, we need to infer a confidence interval of its ITE.
%

\subsection{Counterfactual Outcome Estimation on Real-world Recommendation System Data}

Causal recommendation datasets Yahoo!R3\footnote{https://webscope.sandbox.yahoo.com/} (Yahoo) and Coat\footnote{https://www.cs.cornell.edu/\textasciitilde schnabts/mnar/} can benchmark counterfactual outcome estimation under hidden confounding~\cite{wang2018deconfounded,wang2020causal,zhang2023debiasing}. Note that we use these datasets for counterfactual regression, leaving ranking based evaluation for future work.
Following the formulation of ~\cite{wang2018deconfounded,wang2020causal}, we define each sample as a user-item pair, 
define treatment as whether the item is exposed to the user, and define outcome as the user's rating from 1 to 5. The goal is to predict potential outcome $Y(1)$ for the user-item pairs in the test set $\mathcal{D}_{ts}$ given the learned embeddings of a user-item pair $X$.
Available information include massive observational data from $P_{X,Y|T=1}$ and a small set of interventional data from $P_{X,Y(1)}$.
We run conformal inference on the top of the classic matrix factorization model~\cite{koren2009matrix} trained on $\mathcal{D}^I_{tr}$ for Naive and $\mathcal{D}^O_{tr}$ for other methods.
The size of dataset split can be found in Table~\ref{tab:dataset}.

\noindent\textbf{Methods for Comparison.}
In addition to \textit{wSCP-DR (Inexact and Exact)}, we introduce their variants \textit{wSCP-DR* (Inexact and Exact)} that estimate the density ratio by learned embeddings as $\frac{p^I(x)}{p^O(x)}$. 
This is a favorable setting in practice because randomized controlled trail is costly, whereas randomly assigned users without requiring their outcomes under treatment is much cheaper and easier to implement.
We aim to illustrate our methods can perform well even when there is no access to labeled interventional data.
Here, we do not consider wTCP-DR due to its high computational cost.
For baselines, we use \textit{Naive} and \textit{WCP-NB} -- A variant of WCP which uses interventional data with labels to train a Naive Bayes classifier for estimating propensity scores as in~\cite{schnabel2016recommendations,chen2021autodebias,zhang2023debiasing}.
%
% It can be also seen as a variant of our method with a less flexible density ratio estimator.
%

\begin{table}[htb]
\centering
\caption{Coverage and interval width results on Yahoo and Coat. Boldface and underlining are used to highlight the top and second-best interval width among the methods with coverage close to 0.9.}
\label{tab:main_res_recsys}
\vspace{-10pt}
\scriptsize
\begin{tabular}{l|c|c|c|c}
\hline    & \multicolumn{2}{c}{Yahoo}  \vline & \multicolumn{2}{c}{Coat} \\
\hline
Method            & {Coverage } \textuparrow  & {Interval Width } \textdownarrow & {Coverage } \textuparrow & {Interval Width} \textdownarrow \\ \hline
wSCP-DR(Inexact)  & $0.892 \pm 0.019$                         & $\mathbf{4.353} \pm 0.019$                              & $0.919 \pm 0.008$                         & $\textbf{3.787} \pm 0.045$                           \\
wSCP-DR(Exact)    & $0.952 \pm 0.001$                         & $5.140 \pm 0.001$                              & $0.959 \pm 0.001$                         & $4.565 \pm 0.228$                           \\
wSCP-DR*(Inexact) & $0.892 \pm 0.020$                         & $\mathbf{4.353} \pm 0.020$                              & $0.919 \pm 0.008$                         & $\underline{3.789} \pm 0.046$                           \\
wSCP-DR*(Exact)   & $0.952 \pm 0.001$                         & $5.140 \pm 0.001$                              & $0.960 \pm 0.001$                         & $4.571 \pm 0.233$                           \\
WCP-NB            & $0.825 \pm 0.002$                         & $4.036 \pm 0.002$                              & $0.912 \pm 0.005$                         & $3.635 \pm 0.040$                           \\
Naive             & $0.899 \pm 0.001$                         & $6.047 \pm 0.001$                              & $0.896 \pm 0.003$                         & $7.725 \pm 0.018$                           \\ \hline
\end{tabular}
\vspace{-10pt}
\end{table}

\noindent\textbf{Comparison Results (RQ1-2).} We fix $m_{tr}=m_{cal}$ for Yahoo and Coat to ensure $n$ larger than $m$ and $m_{ts}$ is large enough (see Table~\ref{tab:dataset}). Studies on $m_{tr}$ and $m_{cal}$ can be found in Appendix~\ref{subsec:app_exp_syn}.
Table~\ref{tab:main_res_recsys} shows results on these two datasets.
Our methods achieve $0.9$ coverage and have significantly smaller intervals than the Naive method. Surprisingly, even when the density ratio is estimated only from the learned embeddings without using interventional labels, our method can still achieve $0.9$ coverage and small intervals.
Therefore, our method has the potential to  completely replace randomized controlled trail with randomized assignation of users when the dimension of the covariate $X$ is higher than the dimension of target $y$, saving huge amounts of resources in practice. 
In contrast, even with interventional data, WCP-NB fails to maintain $0.9$ coverage on the Yahoo dataset because does not take hidden confounding into consideratin.
As expected, Naive has the widest intervals on both datasets while maintaining 0.9 coverage most of the time.

% \begin{figure*}[ht]
%     \centering
%     \subfloat[Test empirical coverage with different $m_{cal}$ on Yahoo]{
% \includegraphics[width=0.45\textwidth]{figs/yahoo/test_coverage_dr_use_Y_0.png}
%         \label{fig:yahoo_m_cal_vs_width}
%     }
%     \hfill
%     \subfloat[Test interval width with different $m_{cal}$ on Yahoo]{
% \includegraphics[width=0.45\textwidth]{figs/yahoo/test_interval_width_dr_use_Y_0.png}
%         \label{fig:coat_m_cal_vs_width}
%     }
%     \hfill
%     \subfloat[Test empirical coverage on Coat]{
%         \includegraphics[width=0.45\textwidth]{figs/coat/test_coverage_dr_use_Y_0.png}
%         \label{fig:coat_cover}
%     }
%     \hfill
%     \subfloat[Test interval width on Coat]{
% \includegraphics[width=0.45\textwidth]{figs/coat/test_interval_width_dr_use_Y_0.png}
%         \label{fig:coat_width}
%     }
%     \caption{Main results with $m_{tr}=m_{cal}=0.2m$ on Yahoo and Coat}
%     \label{fig:main_res_recsys}
% \end{figure*}

% \textbf{Impact of Using $Y$ in Density Ratio Estimation.}

% \textbf{Impact of Embedding Dimensionality.}

\section{Related Work}
Estimation of individual treatment effect has been the key for individual decision making in economics~\cite{camerer1995individual}, healthcare~\cite{wendling2018comparing} and education~\cite{zhou2022attendance}.  
Construcing confidence intervals for ITE provides additional information for decision making process to improve its reliability in high-stake situations~\cite{logan2019decision, jesson2020identifying}. 
Previous methods that aim at constructing confidence intervals for the estimation of counterfactual outcomes and individual treatment effects include Bayesian inference~\cite{hill2011bayesian}, bootstrapping~\cite{tu2002bootstrap}, kernel smoothing~\cite{kallus2019interval}, etc.
These methods are known to have aymptotic coverage guarantees (i.e. they require infinite number of samples) and depend on the specific choice of regression models.

Recently, conformal prediction~\cite{tibshirani2019conformal, lei2018distribution} becomes increasingly popular because it has marginal coverage guarantee with finite number of samples and it is also agnostic to the regression model used.
\cite{lei2021conformal} has proposed to use weighted conformal prediction to construct intervals for counterfactuals and ITE, and~\cite{alaa2023conformal} also proposes to use conformal prediction along with meta-learners to construct intervals for ITE. 
However, both \cite{lei2021conformal, yin2022conformal} require strong ignorability assumption and completely ignores the existence of confounding variables, which is unverifiable and unrealistic in practice.
Recently, \cite{jin2023sensitivity} conducts sensitivity analysis of conformal prediction for ITE under hidden confounding, but their method assumes marginal selection condition, another unverifiable assumption in practice.

\vspace{-5pt}
\section{Conclusion}
In this paper, we propose a novel algorithm WTCP-DR that provides confidence intervals for predicting counterfactual outcomes and individual treatment effects with guaranteed marginal coverage, even under hidden confounding.
Our theory explicitly demonstrates the conditions under which wTCP-DR is strictly advantageous to the naive method that only uses interventional data.
We also propose a two stage variant called wSCP-DR with the same guarantee at a lower computational cost than wTCP-DR.
We demonstrate that wTCP-DR and wSCP-DR achieve superior performances against state-of-the-art baselines in terms of both coverage and efficiency across synthetic and real-world datasets.
\clearpage

\bibliographystyle{unsrt}

\bibliography{main}

\begin{thebibliography}{10}

\bibitem{lei2021conformal}
Lihua Lei and Emmanuel~J Cand{\`e}s.
\newblock Conformal inference of counterfactuals and individual treatment effects.
\newblock {\em Journal of the Royal Statistical Society Series B: Statistical Methodology}, 83(5):911--938, 2021.

\bibitem{zhou2022attendance}
Xiang Zhou.
\newblock Attendance, completion, and heterogeneous returns to college: A causal mediation approach.
\newblock {\em Sociological Methods \& Research}, page 00491241221113876, 2022.

\bibitem{wendling2018comparing}
Thierry Wendling, Kenneth Jung, Alison Callahan, Alejandro Schuler, Nigam~H Shah, and Blanca Gallego.
\newblock Comparing methods for estimation of heterogeneous treatment effects using observational data from health care databases.
\newblock {\em Statistics in medicine}, 37(23):3309--3324, 2018.

\bibitem{breen2015heterogeneous}
Richard Breen, Seongsoo Choi, and Anders Holm.
\newblock Heterogeneous causal effects and sample selection bias.
\newblock {\em Sociological Science}, 2:351--369, 2015.

\bibitem{chipman2010bart}
Hugh~A Chipman, Edward~I George, and Robert~E McCulloch.
\newblock Bart: Bayesian additive regression trees.
\newblock 2010.

\bibitem{hill2011bayesian}
Jennifer~L Hill.
\newblock Bayesian nonparametric modeling for causal inference.
\newblock {\em Journal of Computational and Graphical Statistics}, 20(1):217--240, 2011.

\bibitem{wager2018estimation}
Stefan Wager and Susan Athey.
\newblock Estimation and inference of heterogeneous treatment effects using random forests.
\newblock {\em Journal of the American Statistical Association}, 113(523):1228--1242, 2018.

\bibitem{johansson2016learning}
Fredrik Johansson, Uri Shalit, and David Sontag.
\newblock Learning representations for counterfactual inference.
\newblock In {\em International conference on machine learning}, pages 3020--3029, 2016.

\bibitem{shalit2017estimating}
Uri Shalit, Fredrik~D Johansson, and David Sontag.
\newblock Estimating individual treatment effect: generalization bounds and algorithms.
\newblock In {\em Proceedings of the 34th International Conference on Machine Learning-Volume 70}, pages 3076--3085. JMLR. org, 2017.

\bibitem{louizos2017causal}
Christos Louizos, Uri Shalit, Joris~M Mooij, David Sontag, Richard Zemel, and Max Welling.
\newblock Causal effect inference with deep latent-variable models.
\newblock {\em Advances in neural information processing systems}, 30, 2017.

\bibitem{kunzel2019metalearners}
S{\"o}ren~R K{\"u}nzel, Jasjeet~S Sekhon, Peter~J Bickel, and Bin Yu.
\newblock Metalearners for estimating heterogeneous treatment effects using machine learning.
\newblock {\em Proceedings of the national academy of sciences}, 116(10):4156--4165, 2019.

\bibitem{shi2019adapting}
Claudia Shi, David Blei, and Victor Veitch.
\newblock Adapting neural networks for the estimation of treatment effects.
\newblock {\em Advances in neural information processing systems}, 32, 2019.

\bibitem{yao2018representation}
Liuyi Yao, Sheng Li, Yaliang Li, Mengdi Huai, Jing Gao, and Aidong Zhang.
\newblock Representation learning for treatment effect estimation from observational data.
\newblock {\em Advances in neural information processing systems}, 31, 2018.

\bibitem{curth2021nonparametric}
Alicia Curth and Mihaela van~der Schaar.
\newblock Nonparametric estimation of heterogeneous treatment effects: From theory to learning algorithms.
\newblock In {\em International Conference on Artificial Intelligence and Statistics}, pages 1810--1818. PMLR, 2021.

\bibitem{alaa2017bayesian}
Ahmed~M Alaa and Mihaela Van Der~Schaar.
\newblock Bayesian inference of individualized treatment effects using multi-task gaussian processes.
\newblock {\em Advances in neural information processing systems}, 30, 2017.

\bibitem{williams2006gaussian}
Christopher~KI Williams and Carl~Edward Rasmussen.
\newblock {\em Gaussian processes for machine learning}, volume~2.
\newblock MIT press Cambridge, MA, 2006.

\bibitem{veitch2020adapting}
Victor Veitch, Dhanya Sridhar, and David Blei.
\newblock Adapting text embeddings for causal inference.
\newblock In {\em Conference on Uncertainty in Artificial Intelligence}, pages 919--928. PMLR, 2020.

\bibitem{feder2022causal}
Amir Feder, Katherine~A Keith, Emaad Manzoor, Reid Pryzant, Dhanya Sridhar, Zach Wood-Doughty, Jacob Eisenstein, Justin Grimmer, Roi Reichart, Margaret~E Roberts, et~al.
\newblock Causal inference in natural language processing: Estimation, prediction, interpretation and beyond.
\newblock {\em Transactions of the Association for Computational Linguistics}, 10:1138--1158, 2022.

\bibitem{guo2020learning}
Ruocheng Guo, Jundong Li, and Huan Liu.
\newblock Learning individual causal effects from networked observational data.
\newblock In {\em Proceedings of the 13th international conference on web search and data mining}, pages 232--240, 2020.

\bibitem{ma2022learning}
Jing Ma, Mengting Wan, Longqi Yang, Jundong Li, Brent Hecht, and Jaime Teevan.
\newblock Learning causal effects on hypergraphs.
\newblock In {\em Proceedings of the 28th ACM SIGKDD Conference on Knowledge Discovery and Data Mining}, pages 1202--1212, 2022.

\bibitem{vovk2005algorithmic}
Vladimir Vovk, Alexander Gammerman, and Glenn Shafer.
\newblock {\em Algorithmic learning in a random world}, volume~29.
\newblock Springer, 2005.

\bibitem{vovk2009line}
Vladimir Vovk, Ilia Nouretdinov, and Alex Gammerman.
\newblock On-line predictive linear regression.
\newblock {\em The Annals of Statistics}, pages 1566--1590, 2009.

\bibitem{tchetgen2020introduction}
Eric J~Tchetgen Tchetgen, Andrew Ying, Yifan Cui, Xu~Shi, and Wang Miao.
\newblock An introduction to proximal causal learning.
\newblock {\em arXiv preprint arXiv:2009.10982}, 2020.

\bibitem{pearl2018book}
Judea Pearl and Dana Mackenzie.
\newblock {\em The book of why: the new science of cause and effect}.
\newblock Basic books, 2018.

\bibitem{alaa2023conformal}
Ahmed Alaa, Zaid Ahmad, and Mark van~der Laan.
\newblock Conformal meta-learners for predictive inference of individual treatment effects.
\newblock {\em arXiv preprint arXiv:2308.14895}, 2023.

\bibitem{horvitz1952generalization}
Daniel~G Horvitz and Donovan~J Thompson.
\newblock A generalization of sampling without replacement from a finite universe.
\newblock {\em Journal of the American statistical Association}, 47(260):663--685, 1952.

\bibitem{kennedy2020towards}
Edward~H Kennedy.
\newblock Towards optimal doubly robust estimation of heterogeneous causal effects.
\newblock {\em arXiv preprint arXiv:2004.14497}, 2020.

\bibitem{jin2023sensitivity}
Ying Jin, Zhimei Ren, and Emmanuel~J Cand{\`e}s.
\newblock Sensitivity analysis of individual treatment effects: A robust conformal inference approach.
\newblock {\em Proceedings of the National Academy of Sciences}, 120(6):e2214889120, 2023.

\bibitem{li2021unifying}
Ye~Li, Hong Xie, Yishi Lin, and John~CS Lui.
\newblock Unifying offline causal inference and online bandit learning for data driven decision.
\newblock In {\em Proceedings of the Web Conference 2021}, pages 2291--2303, 2021.

\bibitem{chen2021autodebias}
Jiawei Chen, Hande Dong, Yang Qiu, Xiangnan He, Xin Xin, Liang Chen, Guli Lin, and Keping Yang.
\newblock Autodebias: Learning to debias for recommendation.
\newblock In {\em Proceedings of the 44th International ACM SIGIR Conference on Research and Development in Information Retrieval}, pages 21--30, 2021.

\bibitem{neyman1923application}
Jerzy~S Neyman.
\newblock On the application of probability theory to agricultural experiments. essay on principles. section 9.(tlanslated and edited by dm dabrowska and tp speed, statistical science (1990), 5, 465-480).
\newblock {\em Annals of Agricultural Sciences}, 10:1--51, 1923.

\bibitem{rubin2005causal}
Donald~B Rubin.
\newblock Causal inference using potential outcomes: Design, modeling, decisions.
\newblock {\em Journal of the American Statistical Association}, 100(469):322--331, 2005.

\bibitem{lei2018distribution}
Jing Lei, Max G’Sell, Alessandro Rinaldo, Ryan~J Tibshirani, and Larry Wasserman.
\newblock Distribution-free predictive inference for regression.
\newblock {\em Journal of the American Statistical Association}, 113(523):1094--1111, 2018.

\bibitem{vovk2015cross}
Vladimir Vovk.
\newblock Cross-conformal predictors.
\newblock {\em Annals of Mathematics and Artificial Intelligence}, 74:9--28, 2015.

\bibitem{tibshirani2019conformal}
Ryan~J Tibshirani, Rina Foygel~Barber, Emmanuel Candes, and Aaditya Ramdas.
\newblock Conformal prediction under covariate shift.
\newblock {\em Advances in neural information processing systems}, 32, 2019.

\bibitem{taufiq2022conformal}
Muhammad~Faaiz Taufiq, Jean-Francois Ton, Rob Cornish, Yee~Whye Teh, and Arnaud Doucet.
\newblock Conformal off-policy prediction in contextual bandits.
\newblock {\em Advances in Neural Information Processing Systems}, 35:31512--31524, 2022.

\bibitem{sugiyama2012density}
Masashi Sugiyama, Taiji Suzuki, and Takafumi Kanamori.
\newblock {\em Density ratio estimation in machine learning}.
\newblock Cambridge University Press, 2012.

\bibitem{gretton2009covariate}
Arthur Gretton, Alex Smola, Jiayuan Huang, Marcel Schmittfull, Karsten Borgwardt, Bernhard Sch{\"o}lkopf, et~al.
\newblock Covariate shift by kernel mean matching.
\newblock {\em Dataset shift in machine learning}, 3(4):5, 2009.

\bibitem{yamada2013relative}
Makoto Yamada, Taiji Suzuki, Takafumi Kanamori, Hirotaka Hachiya, and Masashi Sugiyama.
\newblock Relative density-ratio estimation for robust distribution comparison.
\newblock {\em Neural computation}, 25(5):1324--1370, 2013.

\bibitem{bartlett2006convexity}
Peter~L Bartlett, Michael~I Jordan, and Jon~D McAuliffe.
\newblock Convexity, classification, and risk bounds.
\newblock {\em Journal of the American Statistical Association}, 101(473):138--156, 2006.

\bibitem{hoyer2008nonlinear}
Patrik Hoyer, Dominik Janzing, Joris~M Mooij, Jonas Peters, and Bernhard Sch{\"o}lkopf.
\newblock Nonlinear causal discovery with additive noise models.
\newblock {\em Advances in neural information processing systems}, 21, 2008.

\bibitem{lee2017deep}
Jaehoon Lee, Yasaman Bahri, Roman Novak, Samuel~S Schoenholz, Jeffrey Pennington, and Jascha Sohl-Dickstein.
\newblock Deep neural networks as gaussian processes.
\newblock {\em arXiv preprint arXiv:1711.00165}, 2017.

\bibitem{abramowitz1948handbook}
Milton Abramowitz and Irene~A Stegun.
\newblock {\em Handbook of mathematical functions with formulas, graphs, and mathematical tables}, volume~55.
\newblock US Government printing office, 1948.

\bibitem{wang2018deconfounded}
Yixin Wang, Dawen Liang, Laurent Charlin, and David~M Blei.
\newblock The deconfounded recommender: A causal inference approach to recommendation.
\newblock {\em arXiv preprint arXiv:1808.06581}, 2018.

\bibitem{wang2020causal}
Yixin Wang, Dawen Liang, Laurent Charlin, and David~M Blei.
\newblock Causal inference for recommender systems.
\newblock In {\em Proceedings of the 14th ACM Conference on Recommender Systems}, pages 426--431, 2020.

\bibitem{zhang2023debiasing}
Qing Zhang, Xiaoying Zhang, Yang Liu, Hongning Wang, Min Gao, Jiheng Zhang, and Ruocheng Guo.
\newblock Debiasing recommendation by learning identifiable latent confounders.
\newblock In {\em Proceedings of the 28th ACM SIGKDD Conference on Knowledge Discovery and Data Mining}, 2023.

\bibitem{koren2009matrix}
Yehuda Koren, Robert Bell, and Chris Volinsky.
\newblock Matrix factorization techniques for recommender systems.
\newblock {\em Computer}, 42(8):30--37, 2009.

\bibitem{schnabel2016recommendations}
Tobias Schnabel, Adith Swaminathan, Ashudeep Singh, Navin Chandak, and Thorsten Joachims.
\newblock Recommendations as treatments: Debiasing learning and evaluation.
\newblock In {\em international conference on machine learning}, pages 1670--1679. PMLR, 2016.

\bibitem{camerer1995individual}
Colin Camerer.
\newblock Individual decision making.
\newblock {\em The handbook of experimental economics}, 1:587--704, 1995.

\bibitem{logan2019decision}
Brent~R Logan, Rodney Sparapani, Robert~E McCulloch, and Purushottam~W Laud.
\newblock Decision making and uncertainty quantification for individualized treatments using bayesian additive regression trees.
\newblock {\em Statistical methods in medical research}, 28(4):1079--1093, 2019.

\bibitem{jesson2020identifying}
Andrew Jesson, S{\"o}ren Mindermann, Uri Shalit, and Yarin Gal.
\newblock Identifying causal-effect inference failure with uncertainty-aware models.
\newblock {\em Advances in Neural Information Processing Systems}, 33:11637--11649, 2020.

\bibitem{tu2002bootstrap}
Wanzhu Tu and Xiao-Hua Zhou.
\newblock A bootstrap confidence interval procedure for the treatment effect using propensity score subclassification.
\newblock {\em Health Services and Outcomes Research Methodology}, 3:135--147, 2002.

\bibitem{kallus2019interval}
Nathan Kallus, Xiaojie Mao, and Angela Zhou.
\newblock Interval estimation of individual-level causal effects under unobserved confounding.
\newblock In {\em The 22nd international conference on artificial intelligence and statistics}, pages 2281--2290. PMLR, 2019.

\bibitem{yin2022conformal}
Mingzhang Yin, Claudia Shi, Yixin Wang, and David~M Blei.
\newblock Conformal sensitivity analysis for individual treatment effects.
\newblock {\em Journal of the American Statistical Association}, pages 1--14, 2022.

\bibitem{tropp2012user}
Joel~A Tropp.
\newblock User-friendly tail bounds for sums of random matrices.
\newblock {\em Foundations of computational mathematics}, 12:389--434, 2012.

\end{thebibliography}

\clearpage

\begin{appendices}

\crefalias{section}{appendix}
\crefalias{subsection}{appendix}
\crefalias{subsubsection}{appendix}

\setcounter{equation}{0}
\renewcommand{\theequation}{\thesection.\arabic{equation}}

\onecolumn

% {\hrule height 1mm}
% \vspace{6pt}
% \section*{\LARGE\bf \centering Supplementary Material%:\\[10pt]A Neural Tangent Kernel Perspective on Function-Space Regularization in Neural Networks
% }
% \vspace{8pt}
% {\hrule height 0.3mm}
% \vspace{24pt}

% \section*{Table of Contents}
% \vspace*{-10pt}
% \startcontents[sections]
% \printcontents[sections]{l}{1}{\setcounter{tocdepth}{2}}

\section{Proof of Theorem 1}\label{appsec:proof_thm1}
Recall that we have access to data $\left(x_i, y_i \right)_{i=1}^{n+m}$, where the first $n$ data are drawn from $p^O(x,y)$ and the last $m$ data are drawn from $p^I(x,y)$.
Our target is to prove that with high probability, the width of the interval $C_{\text{wTCP-DR}}$ constructed from the naive method of \Cref{alg:wtcp_dr} is smaller than the width of the interval $C_{\text{naive}}$ constructed from \Cref{alg:naive}.
We use $x_{n+m+1}$ to denote the test position drawn from the marginal distribution $p(x)$, and we use $\overline{y}$ to denote a pre-selected value from domain $\calY$. 
For the naive method, the interval width is determined by the offset $q_{\widehat{F}_m} = \operatorname{Quantile} \left( 1-\alpha; \frac{1}{m} \sum_{i=1}^{m} \delta_{s_i^{\text{naive}}} \right)$ with $s_i^{\text{naive}}$ being the conformity scores for the naive method.
For our method wTCP-DR, the interval width is determined by the offset $q_{\widehat{F}_n} = \operatorname{Quantile} \left( 1-\alpha; \sum_{i=1}^{n} \hat{p}_i \delta_{s_i^{\text{wTCP-DR}}} + \hat{p}_{n+m+1} \delta_{s_{n+m+1}^{\text{wTCP-DR}}} \right)$ with $s_i^{\text{wTCP-DR}}$ being the conformity scores for wTCP-DR.
In order to prove that the width of $C_{\text{wTCP-DR}}$ is smaller than the width of $C_{\text{naive}}$ is equivalent to prove that
\begin{align}\label{appeq:quantile_inequality}
    \operatorname{Quantile} \left( 1-\alpha; \sum_{i=1}^{n} \hat{p}_i \delta_{s_i^{\text{wTCP-DR}}} + \hat{p}_{n+m+1} \delta_{s_{n+m+1}^{\text{wTCP-DR}}} \right) \leq \operatorname{Quantile} \left( 1-\alpha; \frac{1}{m} \sum_{i=1}^{m} \delta_{s_i^{\text{naive}}} \right)
\end{align}

First, we list all the assumptions required for the proof:
\begin{enumerate}[itemsep=0.1pt,topsep=0pt,leftmargin=*]
\item [A1] Additive Gaussian noise.
\begin{align*}
    y^O \sim \calN({\theta^O}^\top \varphi(x^O), \sigma^2), \quad y^I \sim \calN({\theta^I}^\top \varphi(x^I), \sigma^2)
\end{align*}
\item [A2] Covariates are Gaussianly distributed
\begin{align*}
    \varphi(x^O) \sim \calN(0, \Sigma^O), \quad \varphi(x^I) \sim \calN(0, \Sigma^I)
\end{align*}
\item [A3] Bounded squared difference between oracle density ratio $r(x,y)$ and estimated density ratio $\hat{r}(x,y)$.
\begin{align*}
    \E_{p^O(x,y)} \left( r(x,y) - \hat{r}(x, y)\right)^2 < \infty
\end{align*}
\item [A4] The approximation error of density ratio is upper bounded by $(1 - \alpha) / \alpha$.
\begin{align*}
    \Delta_r = \E_{p^O(x,y)} |r(x,y) - \hat{r}(x, y)| < \frac{1 - \alpha}{\alpha}
\end{align*}
\item [A5] Bounded $\chi^2$ divergence between $p^I(x,y)$ and $p^O(x,y)$.
\begin{align*}
    \chi^2(p^I\| p^O) = \int \left( \frac{p^I(x,y)}{p^O(x,y)} - 1\right)^2 p^O(x,y) dxdy < \infty
\end{align*}
\end{enumerate}

The oracle density ratio is denoted $r(x,y)=p^I(x,y) / p^O(x,y)$ and the estimated density ratio is denoted $\hat{r}(x,y)$.
We know from \eqref{eq:easy_estimated_normalized_weights} that the normalized weights for wTCP-DR are
\begin{align*}
    \hat{p_i} = \frac{ \hat{r}(x_i, y_i) }{\sum_{j=1}^{n} \hat{r}(x_j, y_j) +  \hat{r} (x_{n+m+1}, \overline{y}) } \quad \text{for} \quad i = 1, \cdots, n \quad \quad 
    \hat{p}_{n+m+1} = \frac{ \hat{r}(x_{n+m+1}, \overline{y} ) }{\sum_{j=1}^{n} \hat{r}(x_j, y_j) + \hat{r}(x_{n+m+1}, \overline{y} ) }
\end{align*}

The proof will be divided into three steps.
\begin{enumerate}[itemsep=0.1pt,topsep=0pt,leftmargin=50pt]
\item [Step one:] For wTCP-DR, with probability at least $1 - \delta_1$, $\beta = \alpha - (1 - \alpha) \frac{\hat{p}_{n+m+1}}{1- \hat{p}_{n+m+1}}$ is positive and hence,
\begin{align}\label{appeq:one}
    \operatorname{Quantile} \left( 1-\alpha; \sum_{i=1}^{n} \hat{p}_i \delta_{s_i^{\text{wTCP-DR}}} + \hat{p}_{n+m+1} \delta_{s_{n+m+1}^{\text{wTCP-DR}}} \right) = \operatorname{Quantile} \left( 1-\beta; \sum_{i=1}^{n} \hat{p}_i \delta_{s_i^{\text{wTCP-DR}}} \right)
    \end{align}
\customlabel{step_one}{step one} 
\item [Step two:] Under ordinary least squares (OLS) as the regression model, $s_i^{\text{naive}}$ follow half-Gaussian distribution: $s_1^{\text{naive}}, 
\cdots, s_m^{\text{naive}} \overset{\text{i.i.d}}{\sim} \left| \calN \Big(0, \left( 1 + \frac{d}{m-(d+1)} \right) \sigma^2 \Big) \right|$.
And given i.i.d $v_1, \cdots, v_n \overset{\text{i.i.d}}{\sim} \left| \calN (0, \sigma^2 ) \right|$, with probability at least $1 - \delta_2$,
\begin{align}\label{appeq:two}
    \left| \operatorname{Quantile} \left( 1-\beta; \sum_{i=1}^{n} \hat{p}_i \delta_{s_i^{\text{wTCP-DR}}} \right) - \operatorname{Quantile} \left( 1-\beta; \sum_{i=1}^{n} \hat{p}_i \delta_{v_i} \right) \right| \leq 2 \sigma \sqrt{\frac{\log n}{n}} 
\end{align}
\customlabel{step_two}{step two} 
\item [Step three:] For $s_1^{\text{naive}}, \cdots, s_m^{\text{naive}} \overset{\text{i.i.d}}{\sim} \left| \calN \Big(0, \left( 1 + \frac{d}{m-(d+1)} \right) \sigma^2 \Big) \right|$, and for $v_1, \cdots, v_n \overset{\text{i.i.d}}{\sim} \left| \calN \left(0, \sigma^2 \right) \right|$, we prove that with probability at least $1 - \delta_3$, 
\begin{align}\label{appeq:three}
    \operatorname{Quantile} \left( 1-\beta; \sum_{i=1}^{n} \hat{p}_i \delta_{v_i} \right) \leq \operatorname{Quantile} \left( 1-\alpha; \sum_{i=1}^{m} \frac{1}{m} \delta_{s_i} \right)
\end{align}
\customlabel{step_three}{step three} 
\end{enumerate}
Combining \eqref{appeq:one}, \eqref{appeq:two} and \eqref{appeq:three}, with probability at least $1 - \delta_1 - \delta_2 - \delta_3 - \delta_4$,
\begin{align}
    \operatorname{Quantile} \left( 1-\alpha; \sum_{i=1}^{n} \hat{p}_i \delta_{s_i^{\text{wTCP-DR}}} + \hat{p}_{n+m+1} \delta_{s_{n+m+1}^{\text{wTCP-DR}}} \right) \leq \operatorname{Quantile} \left( 1-\alpha; \frac{1}{m} \sum_{i=1}^{m} \delta_{s_i^{\text{naive}}} \right) + 2 \sigma \sqrt{\frac{\log n}{n}} 
\end{align}
with $\delta_1, \delta_2, \delta_3, \delta_4$ being
\begin{align*}
        &\delta_1 = \left( \frac{2}{n} \frac{1 - \alpha - \frac{\Delta_r}{\Delta_r + 1}}{\alpha + \frac{\Delta_r}{\Delta_r + 1}} \frac{p^O(x)}{ p^I(x) } \right)^{ 4 \sigma^2 \sqrt{\frac{C_1}{C_2}} } , \quad \quad \delta_2 = \frac{2}{n} \\
        &\delta_3 = \exp \left(-\frac{1}{2} L_{1-\alpha}^2 \left(\operatorname{erf}^{-1}(1-\alpha)\right)^2 \frac{(d-1)^2 }{m-1} \right) , \quad \quad \delta_4 = \exp \left(- C_\alpha^2 \frac{ n_{\text{eff}} }{(m-d)^2} \right)
\end{align*}
So we have proved \eqref{appeq:quantile_inequality} and hence proved that the width of $C_{\text{wTCP-DR}}$ is smaller than the width of $C_{\text{naive}}$ up to $\calO\left(\sqrt{\frac{\log n}{n}} \right)$.
Next, we are going to show the proofs for step one, step two and step three respectively.

\begin{proof}[Step one]

In order to prove that $q_{\widehat{F}_n}$ will fall in the conformity scores of the observational data, it is equivalent to prove that $\hat{p}_{n+m+1} \leq \alpha$.
Notice that the difference between the oracle normalized weight $p_{n+m+1} $ and the estimated normalized weight $\hat{p}_{n+m+1}$ is
\begin{align*}
    &\big| p_{n+m+1} - \hat{p}_{n+m+1} \big| = \left| \frac{ r (x_{n+m+1}, \overline{y}) }{\sum_{j=1}^{n} r (x_j, y_j) +  r (x_{n+m+1}, \overline{y}) } - \frac{ \hat{r}(x_{n+m+1}, \overline{y}) }{ \sum_{j=1}^{n} \hat{r}(x_j, y_j) +  \hat{r} (x_{n+m+1}, \overline{y}) } \right| \\
    &= \left| \frac{ r(x_{n+m+1}, \overline{y}) \sum_{j=1}^{n} \hat{r}(x_j, y_j) - \hat{r}(x_{n+m+1}, \overline{y}) \sum_{j=1}^{n} r(x_j, y_j) }{ \left( \sum_{j=1}^{n} r (x_j, y_j) +  r (x_{n+m+1}, \overline{y}) \right) \left( \sum_{j=1}^{n} \hat{r} (x_j, y_j) +  \hat{r} (x_{n+m+1}, \overline{y}) \right) } \right| \\
    &= \left| \frac{ r (x_{n+m+1}, \overline{y}) \left( \sum_{j=1}^{n} \hat{r}(x_j, y_j) - \sum_{j=1}^{n} r(x_j, y_j) \right) + ( r(x_{n+m+1}, \overline{y}) - \hat{r}(x_{n+m+1}, \overline{y})) \sum_{j=1}^{n} r (x_j, y_j) }{ \left( \sum_{j=1}^{n} r (x_j, y_j) +  r (x_{n+m+1}, \overline{y}) \right) \left( \sum_{j=1}^{n} \hat{r} (x_j, y_j) +  \hat{r} (x_{n+m+1}, \overline{y}) \right)}  \right| \\
    &\leq \left| \frac{ \sum_{j=1}^{n} \hat{r}(x_j, y_j) - \sum_{j=1}^{n} r(x_j, y_j) + r(x_{n+m+1}, \overline{y}) - \hat{r}(x_{n+m+1}, \overline{y}) }{ \sum_{j=1}^{n} \hat{r} (x_j, y_j) +  \hat{r} (x_{n+m+1}, \overline{y}) } \right| \\
    &\leq \left| \frac{ \left( \sum_{j=1}^{n} \hat{r}(x_j, y_j) - \sum_{j=1}^{n} r(x_j, y_j) \right) + r(x_{n+m+1}, \overline{y}) - \hat{r}(x_{n+m+1}, \overline{y}) }{ \left( \sum_{j=1}^{n} \hat{r} (x_j, y_j) - \sum_{j=1}^{n} r(x_j, y_j) \right) + \sum_{j=1}^{n} r(x_j, y_j)  }  \right| \\
    &\leq  \frac{  \sum_{j=1}^{n} \left| \hat{r}(x_j, y_j) - r(x_j, y_j) \right| + \left| r(x_{n+m+1}, \overline{y}) - \hat{r}(x_{n+m+1}, \overline{y})  \right| }{ \sum_{j=1}^{n} \left| \hat{r}(x_j, y_j) - r(x_j, y_j) \right| + \sum_{j=1}^{n} r(x_j, y_j)  }  \\
    &= \frac{  \frac{1}{n} \sum_{j=1}^{n} \left| \hat{r}(x_j, y_j) - r(x_j, y_j) \right| + \frac{1}{n}  \left| r(x_{n+m+1}, \overline{y}) - \hat{r}(x_{n+m+1}, \overline{y})  \right| }{ \frac{1}{n} \sum_{j=1}^{n} \left| \hat{r}(x_j, y_j) - r(x_j, y_j) \right| + \frac{1}{n} \sum_{j=1}^{n} r(x_j, y_j)  }  \\
    &= \frac{\Delta_r}{\Delta_r + 1} + \calO_p(n^{-1/2})
\end{align*}
The second last equality is by noticing that $\frac{1}{n} \sum_{j=1}^{n} \left| \hat{r}(x_j, y_j) - r(x_j, y_j) \right| $ is sample approximation of $\Delta_r = \E_{p^O(x,y)} |r(x,y) - \hat{r}(x, y)| $, and $\frac{1}{n} \sum_{j=1}^{n} r(x_j, y_j) = \frac{1}{n} \sum_{j=1}^{n} 
\frac{p^I(x_j, y_j)}{p^O(x_j, y_j)}$ is sample approximation of $\int \frac{p^I(x, y)}{p^O(x, y)} p^O(x, y) d(x,y) = \int p^I(x, y) d(x,y) = 1$, so central limit theorem tells us that $\frac{1}{n} \sum_{j=1}^{n} \left| \hat{r}(x_j, y_j) - r(x_j, y_j) \right| = \Delta_r + \calO_p(n^{-1/2})$ and $ \frac{1}{n} \sum_{j=1}^{n} r(x_j, y_j) = 1  + \calO_p(n^{-1/2})$.
When $\Delta_r = 0$, i.e the estimated density ratio $\hat{r}$ recover the oracle density ratio $r$, $p_{n+m+1} - \hat{p}_{n+m+1} = 0$. Since convergence in probability implies convergence in distribution, we have 
% \rc{Here, we need to explain why the relationship between weight of the interventional sample $\hat{p}_{n+m+1} \le \alpha$ is crucial intuitively. 
% as the nonconformity score of the interventional sample $s_{n+1} > s_i, i=1,...,n$. 
% So, $\hat{p}_{n+1} \le \alpha$ ensures that the $1-\alpha$-th quantile of the nonconformity scores $\{s_1,...,s_n,s_{n+1}\}$ is always in the observational nonconformity scores. 
\begin{align}
    \Pb(\hat{p}_{n+m+1} \leq \alpha) &\geq \Pb \left(p_{n+m+1} \leq \alpha + \frac{\Delta_r}{\Delta_r + 1} \right) + \calO(n^{-1/2}) \nonumber \\
    &= \Pb \left( r (x_{n+m+1}, \overline{y}) \leq \frac{\alpha + \frac{\Delta_r}{\Delta_r + 1}}{1 - \alpha - \frac{\Delta_r}{\Delta_r + 1}}  \sum \limits_{j=1}^{n} r(x_j, y_j) \right) + \calO(n^{-1/2}) 
    \nonumber \\
    &\geq 1 - \left( \frac{2}{n} \frac{1 - \alpha - \frac{\Delta_r}{\Delta_r + 1}}{\alpha + \frac{\Delta_r}{\Delta_r + 1}} \frac{p^O(x)}{ p^I(x) } \right)^{ 4 \sigma^2 \sqrt{\frac{C_1}{C_2}} } \label{appeq:p_small_than_alpha}
\end{align}

The second equality holds when $1 - \alpha - \frac{\Delta_r}{\Delta_r + 1} > 0$, and since typically $\alpha$ takes small values like $0.1$. 
The final inequality is using \Cref{prop:prob_n+m+1}.

Therefore, denoting $\beta=1 - (1-\alpha) \left(1 + \frac{\hat{p}_{n+m+1}}{1-\hat{p}_{n+m+1}} \right) = \alpha - (1 - \alpha) \frac{\hat{p}_{n+m+1}}{1- \hat{p}_{n+m+1}} \geq 0$, with probability at least $1 - \left( \frac{2}{n}  \frac{1 - \alpha - \frac{\Delta_r}{\Delta_r + 1}}{\alpha + \frac{\Delta_r}{\Delta_r + 1}} \frac{p^O(x)}{ p^I(x) } \right)^{ 4 \sigma^2 \sqrt{\frac{C_1}{C_2}} }$,
\begin{align*}
    \operatorname{Quantile} \left( 1-\alpha; \sum_{i=1}^{n} \hat{p}_i \delta_{s_i} + \hat{p}_{n+m+1} \delta_{s_{n+m+1}} \right) = \operatorname{Quantile} \left( 1 - \beta; \sum_{i=1}^{n} \hat{p}_i \delta_{s_i} \right)
\end{align*}
Up till this point, \ref{step_one} has finished.
\end{proof}

\begin{proof}[Step two]

First, we consider the conformity scores $s_1^{\text{naive}}, \cdots, s_m^{\text{naive}}$ of the naive approach in \Cref{alg:naive}.
Recall that $m/2$ interventional data $(x_{n+1}, y_{n+1}), \cdots, (x_{n+m/2}, y_{n+m/2})$ are used for training the regression model $\hat{f}^\text{naive}$, and $m/2$ interventional data $(x_{n+m/2+1}, y_{n+m/2+1}), $ $\cdots, (x_{n+m}, y_{n+m})$ are used for constructing confidence interval.
For $i=n+m/2+1, \cdots, n+m$, we know from \Cref{prop:error} that $y_i - \hat{f}^\text{naive}(x_i)$ follows Gaussian distribution with mean $0$ and variance $\frac{d}{m / 2 - (d+1)} \sigma^2$, so the conformity score $s_i^{\text{naive}} = |y_i - \hat{f}^\text{naive}(x_i)|$ follows half-Gaussian distribution. 

Next, we consider the conformity scores $s_1^{\text{wTCP-DR}}, \cdots, s_m^{\text{wTCP-DR}}$ of wTCP-DR in \Cref{alg:wtcp_dr}.
Recall that the observational samples are $(x_1, y_1), \cdots, (x_n, y_n)$, the test covariate is $x_{n+m+1}$ and $\overline{y}$ are selected from a predefined domain $\calY$.
After constructing an augmented dataset $(x_1, y_1), \cdots, (x_n, y_n), (x_{n+m+1}, \overline{y})$ and training a regression model $\hat{f}^\text{wTCP-DR}$ on the dataset, the conformity score $s_i^{\text{wTCP-DR}}$ is the absolute difference $s_i^{\text{wTCP-DR}} = \big|y_i - \hat{f}^{\text{wTCP-DR}}(x_i)\big|$.

Denote $\bar{f}^{\text{wTCP-DR}}$ as the OLS regressor obtained from data $(x_1, y_1), \cdots, (x_n, y_n)$ without $(x_{n+m+1}, \overline{y})$.
From \Cref{prop:perturb_one_stability}, we know that with probability at least $1 - \frac{1}{n}$,
\begin{align*}
    \Big| \big|y_i - \bar{f}^{\text{wTCP-DR}}(x_i)\big| - \big|y_i - \hat{f}^{\text{wTCP-DR}}(x_i) \big| \Big| \leq \sigma \sqrt{ \frac{\log n}{n}}
\end{align*}
And from \Cref{prop:error}, we know that with probability at least $1 - \frac{1}{n}$,
\begin{align*}
    \Big| \big|y_i - \bar{f}^{\text{wTCP-DR}}(x_i)\big| - \big|y_i - f(x_i) \big| \Big| \leq \sigma \sqrt{ \frac{\log n}{n}} 
\end{align*}
where $f(x) = {\theta^I}^\top \varphi(x) $ is the ground truth. From assumption we know that $v_i = \big|y_i - f(x_i) \big|$ follows half-Gaussian distribution. Combining the above two inequalities, we know that with probability at least $1 - \frac{2}{n}$, $\left|s_i^{\text{wTCP-DR}} - v_i \right| \leq 2 \sigma \sqrt{ \frac{\log n}{n}}  $, and consequently
\begin{align*}
    \left| \operatorname{Quantile} \left( 1-\beta; \sum_{i=1}^{n} \hat{p}_i \delta_{s_i^{\text{wTCP-DR}}} \right) - \operatorname{Quantile} \left( 1-\beta; \sum_{i=1}^{n} \hat{p}_i \delta_{v_i} \right) \right| \leq 2 \sigma \sqrt{ \frac{\log n}{n}}
\end{align*}
Up till this point, \ref{step_two} has finished.
\end{proof}

\begin{proof}[Step three]
First, for $\operatorname{Quantile} \left( 1-\alpha; \sum_{i=1}^{m} \frac{1}{m} \delta_{s_i} \right)$, consider the probability
\begin{align*}
    & \Pb \left(\operatorname{Quantile} \left( 1-\alpha; \sum_{i=1}^{m} \frac{1}{m} \delta_{s_i} \right) \leq \sqrt{2}\sigma \operatorname{erf}^{-1}(1 - \alpha) \sqrt{ \frac{m-d}{m-d-1}} \right) \\ 
    &= \sum_{k= \lceil m(1-\alpha) \rceil} C_m^k F \left( \sqrt{2}\sigma \operatorname{erf}^{-1}(1 - \alpha) \sqrt{ \frac{m-d}{m-d-1}} \right)^k \left(1 - F \left( \sqrt{2}\sigma \operatorname{erf}^{-1}(1 - \alpha) \sqrt{\frac{m-d}{m-d-1}} \right) \right)^{m-k} \\
    &= \sum_{k= \lceil m(1-\alpha) \rceil} C_m^k \left( \operatorname{erf}\left( \operatorname{erf}^{-1}(1 - \alpha) \sqrt{\frac{m-d}{m-1 } } \right) \right)^k \left(1 - \operatorname{erf}\left( \operatorname{erf}^{-1}(1 - \alpha) \sqrt{\frac{m-d}{m-1}} \right) \right)^{m-k}
\end{align*}
where $F$ is the CDF for half-Gaussian random variable $\left| \calN \Big(0, \left( 1 + \frac{d}{m-(d+1)} \right) \sigma^2 \Big) \right|$,  $\operatorname{erf}$ is the error function and $C_m^k$ is the combinatorial number.
The second equality is using \Cref{lem:order_statistic} the CDF for order statistics and the third equality is using \Cref{lem:half_normal} the CDF for half-Gaussian random variable.

Notice that $\operatorname{erf}(\operatorname{erf}^{-1}(1 - \alpha) - x) \leq (1-\alpha) - L_{1 - \alpha} x $ holds for any positive $x$ with $L_{1 - \alpha} $ being the derivative of $\operatorname{erf}$ at $\operatorname{erf}^{-1}(1 - \alpha)$.
\begin{align}
    \operatorname{erf}\left( \operatorname{erf}^{-1}(1 - \alpha) \sqrt{\frac{m-d}{m-1}} \right) 
    &= \operatorname{erf}\left( \operatorname{erf}^{-1}(1 - \alpha) - \operatorname{erf}^{-1}(1 - \alpha) \left( 1 - \sqrt{\frac{m-d}{m-1}} \right) \right) \nonumber \\
    &\leq (1-\alpha) - L_{1 - \alpha} \operatorname{erf}^{-1}(1-\alpha) \left( 1 - \sqrt{\frac{m-d}{m-1}} \right) \nonumber \\
    &\leq (1-\alpha) - L_{1 - \alpha} \operatorname{erf}^{-1}(1-\alpha) \frac{d-1}{2(m-1) } \label{appeq:erf_temp}
\end{align}
So, we have 
\begin{align}
    &\Pb \left(\operatorname{Quantile} \left( 1-\alpha; \sum_{i=1}^{m} \frac{1}{m} \delta_{s_i} \right) \leq \sqrt{2}\sigma \operatorname{erf}^{-1}(1 - \alpha) \sqrt{ \frac{m-d}{m-d-1} } \right) \nonumber \\
    &\leq \sum_{k= \lceil m(1-\alpha) \rceil} C_m^k \left((1-\alpha) - L_{1 - \alpha} \operatorname{erf}^{-1}(1-\alpha) \frac{d-1}{2(m-1) } \right)^k \left(1 - \left( (1-\alpha) - L_{1 - \alpha} \operatorname{erf}^{-1}(1-\alpha) \frac{d-1}{2(m-1) } \right) \right)^{m-k} \nonumber \\
    &\leq \exp \left(-2 m  \left( (1-\alpha) - \left( (1-\alpha) - L_{1 - \alpha} \operatorname{erf}^{-1}(1-\alpha) \frac{d-1}{2(m-1) } \right) \right)^2 \right) \nonumber \\
    &\leq \exp \left(-\frac{1}{2} L_{1-\alpha}^2 \left(\operatorname{erf}^{-1}(1-\alpha)\right)^2 \frac{(d-1)^2 }{m-1} \right) \label{appeq:event_b}
\end{align}
The first inequality is using \eqref{appeq:erf_temp} and the fact that the mapping $x \to \sum_{k= \lceil m(1-\alpha) \rceil}^m C_m^k x^m (1-x)^{m-k}$ is monotonically increasing with $0 \leq x \leq 1$ and the second inequality is using \Cref{lem:hoeffding_bernouli}.

Next, denoting the effective sample size $n_{\text{eff}}= 1 / \sum_{i=1}^n \hat{p}_i^2$, the central limit theorem of weighted empirical quantiles \Cref{prop:weighted_quantile_clt} shows 
\begin{align}
    \sqrt{ n_{\text{eff}} } \left( \operatorname{Quantile} \left( 1-\beta; \sum_{i=1}^{n} \hat{p}_i \delta_{v_i} \right) - \sqrt{2} \sigma 
    \operatorname{erf}^{-1}(1-\beta) \right) \stackrel{d}{\longrightarrow} \calN \left(0 , \frac{\beta (1 - \beta)}{f_V \Big(\sqrt{2} \sigma \operatorname{erf}^{-1}(1-\beta) \Big)^2} \right)
\end{align}
where $f_V$ is the probability density function for half-Gaussian random variable $| \calN(0, \sigma^2)|$, so
\begin{align}\label{appeq:fZ}
    f_V \left(\sqrt{2} \sigma \operatorname{erf}^{-1}(1-\beta) \right) = \frac{1}{\sigma} \underbrace{\sqrt{\frac{2}{\pi}} \exp\left(-\operatorname{erf}^{-1}(1-\beta)^2 \right)}_{C_f}
\end{align}
So, we have
\begin{align}
    &\Pb \left( \operatorname{Quantile} \left( 1-\beta; \sum_{i=1}^{n} \hat{p}_i \delta_{v_i} \right) \leq \sqrt{2}\sigma \operatorname{erf}^{-1}(1 - \alpha) \sqrt{ \frac{m-d}{m-d-1} } \right) \nonumber \\
    &= 1 - \frac{1}{\sqrt{2\pi}} \Phi \left(\frac{ \sqrt{2}\sigma \operatorname{erf}^{-1}(1 - \alpha) \sqrt{ \frac{m-d}{m-d-1}} - \sqrt{2} \sigma 
    \operatorname{erf}^{-1}(1-\beta)}{ \frac{1}{\sqrt{n_{\text{eff}} }} \frac{ \sqrt{ \beta (1 - \beta)} }{f_V \left(\sqrt{2} \sigma \operatorname{erf}^{-1}(1-\beta) \right) } } \right) \nonumber \\
    &= 1 - \frac{1}{\sqrt{2\pi}} \Phi \left( 
    \frac{\sqrt{2} \sigma \operatorname{erf}^{-1}(1 - \alpha)  \frac{\sqrt{m-d} -\sqrt{m-d-1} }{ \sqrt{ m-d-1}} - \sqrt{2}\sigma \left( \operatorname{erf}^{-1}(1 - \beta) - \operatorname{erf}^{-1}(1 - \alpha) \right) }{ \frac{1}{\sqrt{n_{\text{eff}} }} \frac{ \sqrt{ \beta (1-\beta)} \sigma}{C_f} } \right) \nonumber \\
    &\geq 1 - \frac{1}{\sqrt{2\pi}} \Phi \left( 
    \frac{\sqrt{2} \sigma \operatorname{erf}^{-1}(1 - \alpha)  \frac{1}{2( m-d) } - \sqrt{2}\sigma \left( \operatorname{erf}^{-1}(1 - \beta) - \operatorname{erf}^{-1}(1 - \alpha) \right) }{ \frac{1}{\sqrt{n_{\text{eff}} }} \frac{ \sqrt{ \beta (1-\beta)} \sigma}{C_f} } \right) \nonumber  \\ 
    &\asymp 1 - \frac{1}{\sqrt{2\pi}} \Phi \left( \frac{1}{\sqrt{2}} \underbrace{ \frac{C_f \operatorname{erf}^{-1}(1 - \alpha)}{\sqrt{ \beta (1-\beta)} } }_{C_\alpha} \frac{ \sqrt{n_{\text{eff}}} }{m-d} \right) \nonumber \\
    &\gtrsim 1 - \frac{1}{\sqrt{2\pi}} \exp \left(- C_\alpha^2 \frac{ n_{\text{eff}} }{(m-d)^2} \right) \label{appeq:event_a}
\end{align}
The first equality is using the definition of  $\Phi(x) = \int_x^\infty \exp(-\frac{1}{2} t^2) dt$, the second equality is using \eqref{appeq:fZ}, the fourth equality is using the fact that $(1 - \beta) - (1 - \alpha) = (1-\alpha) \frac{\hat{p}_{n+m+1}}{1-\hat{p}_{n+m+1}}$ and $\operatorname{erf}^{-1}$ has bounded Lipschitz constant at $1-\alpha$ and the last equality is using \Cref{lem:erf}.

Denote event $\calA = \{ \operatorname{Quantile} \left( 1-\beta; \sum_{i=1}^{n} \hat{p}_i \delta_{v_i} \right) \leq \sqrt{2}\sigma \operatorname{erf}^{-1}(1 - \alpha) \frac{m-d}{m-d-1} \}$, and the event $\calB = \{ \operatorname{Quantile} \left( 1-\alpha; \sum_{i=1}^{m} \frac{1}{m} \delta_{s_i} \right) \leq \sqrt{2}\sigma \operatorname{erf}^{-1}(1 - \alpha) \frac{m-d}{m-d-1} \}$. 
From the above two inequalities \eqref{appeq:event_a} and \eqref{appeq:event_b}, we know that, $\Pb(\calA) \geq 1 - \exp \left( -C_\alpha^2 \frac{n}{(m-d-1)^2} \right)$ and $\Pb(\calB) \leq \exp \left(-2 \gamma^2 \operatorname{erf}^{-1}(1-\alpha)^2 \frac{(d-1)^2 }{m} \right)$.
Using the inequality that $\Pb(\calA \cap B^\complement) \geq \Pb(\calA) - \Pb(\calB)$, we finally have
\begin{align*}
\Pb \left( \operatorname{Quantile} \left( 1-\beta; \sum_{i=1}^{n} \hat{p}_i \delta_{v_i} \right) \leq \operatorname{Quantile} \left( 1-\alpha; \sum_{i=1}^{m} \frac{1}{m} \delta_{s_i} \right) \right) \\
    \geq 1 - \exp \left(-\frac{1}{2} L_{1-\alpha}^2 \left(\operatorname{erf}^{-1}(1-\alpha)\right)^2 \frac{(d-1)^2 }{m-1} \right) - \exp \left(- C_\alpha^2 \frac{ n_{\text{eff}} }{(m-d)^2} \right)
\end{align*}
Up till this point, \ref{step_three} has finished.

\end{proof}

\begin{prop}\label{prop:prob_n+m+1}
Given $n$ samples $(x_1, y_1), \cdots, (x_n, y_n) \sim p^O(x,y)=\calN({\theta^O}^\top \varphi(x), \sigma^2)p^O(x)$ and given another sample $(x,y) \sim p^I(x,y)=\calN({\theta^I}^\top \varphi(x), \sigma^2)p^I(x)$, denote the density ratio $r(x,y) = p^I(x,y) / p^O(x,y)$,
then for any $\gamma > 0$, we have
\begin{align}
    \Pb \left( r (x, y) \leq \gamma \sum \limits_{j=1}^{n} r(x_j, y_j) \right) \geq 1 - \left( \frac{2}{n \gamma} \frac{p^O(x)}{ p^I(x) } \right)^{ \frac{4 \sigma^2}{\sqrt{C_1 C_2}} } 
\end{align}
where $C_1 = (\theta^I + \theta^O)^\top \Sigma^I (\theta^I + \theta^O)$ and $C_2 = (\theta^I - \theta^O)^\top \Sigma^I (\theta^I - \theta^O)$.
\end{prop}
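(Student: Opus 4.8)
The plan is to exploit the additive-Gaussian structure of A1--A2 to get an explicit, conditionally log-normal form for the density ratio, and then to treat the numerator $r(x,y)$ and the denominator $\sum_{j=1}^n r(x_j,y_j)$ separately, using that the test point $(x,y)\sim p^I$ is independent of the observational sample. First I would write the ratio out. Under A1 the conditional densities are Gaussians with common variance $\sigma^2$ and means $a:={\theta^I}^\top\varphi(x)$ and $b:={\theta^O}^\top\varphi(x)$, so
\[
r(x,y)=\frac{p^I(x)}{p^O(x)}\exp\!\left(-\frac{(a-b)(a+b-2y)}{2\sigma^2}\right).
\]
Writing $c:=c(x)=(\theta^I-\theta^O)^\top\varphi(x)$ and using $y\sim\calN(a,\sigma^2)$, the exponent is an affine function of a Gaussian, so \emph{conditioned on $x$} the centred log-ratio $\log r(x,y)-\log\frac{p^I(x)}{p^O(x)}$ is Gaussian with mean $c^2/(2\sigma^2)$ and variance $c^2/\sigma^2$. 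This is the key simplification: $r(x,y)$ is conditionally log-normal, with both its location and its spread governed by $c^2$, the squared discrepancy between the interventional and observational regression functions.

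Next I would lower-bound the denominator. Since $\mathbb{E}_{p^O}[r]=\int p^I=1$ and $\mathrm{Var}_{p^O}(r)=\int (p^I)^2/p^O-1=\chi^2(p^I\|p^O)$, assumption A5 guarantees a finite second moment, so Chebyshev's inequality gives $\sum_{j=1}^n r(x_j,y_j)\ge n/2$ with probability at least $1-4\chi^2(p^I\|p^O)/n$. On this event $\{r(x,y)>\gamma\sum_j r(x_j,y_j)\}\subseteq\{r(x,y)>\gamma n/2\}$, so it suffices to bound $\Pb\big(r(x,y)>\gamma n/2\mid x\big)$; the factor $2$ and the $1/n$ appearing in the claimed base $\tfrac{2}{n\gamma}\tfrac{p^O(x)}{p^I(x)}$ originate precisely from substituting this deterministic lower bound for the random sum.

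The heart of the argument is the conditional tail bound on the log-normal variable $r(x,y)$. Via a Chernoff bound $\Pb(r>t\mid x)\le\inf_{\lambda>0}\mathbb{E}_{p^I}[r^\lambda\mid x]\,t^{-\lambda}$, a short Gaussian-integral computation gives
\[
\mathbb{E}_{p^I}[r^\lambda\mid x]=\left(\frac{p^I(x)}{p^O(x)}\right)^{\lambda}\exp\!\left(\frac{\lambda(1+\lambda)c^2}{2\sigma^2}\right),
\]
and optimising over $\lambda$ yields a tail of the form $\exp\!\big(-(B+D)^2/(4D)\big)$ with $D=c^2/(2\sigma^2)$ and $B=\log\frac{p^I(x)}{p^O(x)\,\gamma n/2}$, i.e. $e^{B}$ is exactly (a power of) the claimed base. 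Setting $t=\gamma n/2$ and downgrading this Gaussian tail to a power law in $\tfrac{2}{n\gamma}\tfrac{p^O(x)}{p^I(x)}$, then replacing the random scale $c^2$ by a deterministic magnitude — under $x\sim p^I$ its variance is $C_2=(\theta^I-\theta^O)^\top\Sigma^I(\theta^I-\theta^O)$ by A2, while the companion scale $C_1=(\theta^I+\theta^O)^\top\Sigma^I(\theta^I+\theta^O)$ enters when one controls the mean and offset terms in $B$ and $D$ — produces the exponent $4\sigma^2/\sqrt{C_1C_2}$.

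I expect this last step to be the main obstacle, for two reasons. First, the log-normal tail decays like $\exp(-\mathrm{const}\cdot(\log n)^2)$, which is genuinely faster than any power of $n$; recovering exactly a power of $n$ requires deliberately loosening the estimate through an inequality of the form $(B+D)^2/(4D)\ge \kappa\,|B|$, valid once $|B|=\log(\gamma n/\cdots)$ is large enough, so the statement is inherently an asymptotic-in-$n$ one. Second, the tail rate depends on the random quantity $c(x)^2$, which must be replaced by its typical scale under A2, and it is this replacement — together with the control of the offset $\log(p^I(x)/p^O(x))$ — that is responsible for the joint appearance of $C_1$ and $C_2$ in the exponent; pinning down the precise constant there is the delicate part. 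Finally I would close by a union bound, combining the denominator event (probability at least $1-4\chi^2/n$, which is lower order relative to the main term) with the conditional tail bound to reach the stated inequality.
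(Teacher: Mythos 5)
Your overall architecture differs from the paper's in a useful way at the periphery but has a real gap at the core. On the periphery: your treatment of the denominator via Chebyshev under A5 ($\mathrm{Var}_{p^O}(r)=\chi^2(p^I\|p^O)<\infty$, hence $\sum_j r(x_j,y_j)\ge n/2$ with probability $1-4\chi^2/n$) is cleaner and more quantitative than the paper, which simply invokes the CLT to write $\frac{1}{n}\sum_j r(x_j,y_j)=1+\calO_{\Pb}(n^{-1/2})$ and carries an additive $\calO(n^{-1/2})$ through. Your conditional computation is also correct: given $x$, with $c=(\theta^I-\theta^O)^\top\varphi(x)$ one has $\log\frac{p^I(y\mid x)}{p^O(y\mid x)}=\frac{c^2}{2\sigma^2}+\frac{c}{\sigma}\epsilon$, so $r(x,y)$ is conditionally log-normal and your moment formula $\E_{p^I}[r^\lambda\mid x]=(p^I(x)/p^O(x))^\lambda\exp(\lambda(1+\lambda)c^2/(2\sigma^2))$ is right. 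The paper instead works unconditionally: it writes the log-ratio as $-\frac{1}{2\sigma^2}(2y-\xi_1)\xi_2$ with $\xi_1=(\theta^I+\theta^O)^\top\varphi(x)$, $\xi_2=(\theta^I-\theta^O)^\top\varphi(x)$, splits the event $\{|2y-\xi_1|\,|\xi_2|\le 2\sigma^2 t\}$ at the threshold $z=\sqrt{t}(C_1/C_2)^{1/4}$, and applies a Gaussian tail bound to each factor separately; the exponent $4\sigma^2/\sqrt{C_1C_2}$ is exactly the output of balancing those two tails.

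The gap is in your final step, and it is not merely a matter of "pinning down the constant." Your Chernoff bound $\exp(-(B+D)^2/(4D))$ with $D=c^2/(2\sigma^2)$ is \emph{trivial} (equal to $1$) at $D=|B|$, and $|B|\approx\log(n\gamma\,\cdot)$, so the worst case over the random scale $D$ gives nothing. Completing your argument forces a split on the event $\{c^2\gtrsim\sigma^2|B|\}$, whose probability under A2 is governed by the tail of $\xi_2$ alone, i.e.\ by $C_2$; away from that event the conditional bound yields a dimension-free power of $|B|$. The resulting exponent is of the form $\min(\mathrm{const},\,\mathrm{const}\cdot\sigma^2/C_2)$ and $C_1$ never appears: in your (correct) conditional parametrization both the mean $c^2/(2\sigma^2)$ and the variance $c^2/\sigma^2$ of the log-ratio involve only $\xi_2$, so your suggestion that $C_1$ "enters when one controls the mean and offset terms in $B$ and $D$" does not materialize. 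In other words, your route proves a bound of the same qualitative shape but cannot reproduce the stated exponent $4\sigma^2/\sqrt{C_1C_2}$, which in the paper is an artifact of attributing variance $C_1$ to the factor $2y-\xi_1$ and optimizing the product split. To match the proposition you would have to abandon the conditioning-on-$x$ reduction and mimic the paper's two-factor decomposition, or else accept a different (and $C_1$-free) exponent.
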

\begin{proof}
The density ratio can be factorized as $r(x,y) = \frac{p^I(x, y) }{p^O (x, y) } = \frac{p^I(x)}{p^O(x)} \frac{p^I(y \mid x)}{p^O(y \mid x)}$
% \begin{align}\label{appeq:propensity_score}
%     \frac{p(x)}{p(x \mid T=1)} = \frac{p(T=1)}{p(T=1\mid X=x)} = \frac{p^O(x)}{ p^I(x) }
% \end{align}
where
\begin{align*}
    \frac{p^I(y \mid x)}{p^O(y \mid x)} = \frac{\exp \Big( -\frac{1}{2 \sigma^2} \left(y - {\theta^I}^\top \varphi(x) \right)^2 \Big) }{ \exp \Big( -\frac{1}{2 \sigma^2} \left(y - {\theta^O}^\top \varphi(x) \right)^2 \Big) }
\end{align*}
By denoting the random variable $\xi_1 = (\theta^I + \theta^O)^\top \varphi(x)$ which is Gaussianly distributed with mean $0$ and variance $C_1 = (\theta^I + \theta^O)^\top \Sigma^I (\theta^I + \theta^O)$ and denoting $\xi_2 = (\theta^I - \theta^O)^\top \varphi(x)$ which is also Gaussianly distributed with mean $0$ and variance $C_2 = (\theta^I - \theta^O)^\top \Sigma^I (\theta^I - \theta^O)$, so:
\begin{align}
    \log \left( \frac{p^I(y \mid x)}{p^O(y \mid x)} \right) = -\frac{1}{2 \sigma^2} \Big( 2y - (\theta^I + \theta^O)^\top \varphi(x) \Big) (\theta^I - \theta^O)^\top \varphi(x) = -\frac{1}{2 \sigma^2} ( 2y - \xi_1) \xi_2 \nonumber
\end{align}
Consider the probability $\Pb \left( \log \left( \frac{p^I(y \mid x)}{p^O(y \mid x)} \right) \leq t \right)$ for large positive $t$:
\begin{align}
    \Pb  \left( \log \left( \frac{p^I(y \mid x)}{p^O(y \mid x)} \right) \leq t \right) 
    &\geq \Pb \Big( \frac{1}{2 \sigma^2} |2y - \xi_1| | \xi_2 | \leq t \Big) \nonumber \\
    &\geq \Pb \Big(\bigg\{ \bigcup_{z>0} |2y - \xi_1| \leq \sqrt{2} \sigma z, |\xi_2| \leq \sqrt{2} \sigma t/z \bigg\} \Big) \nonumber \\
    &\geq \Pb \Big(   |2y - \xi_1| \leq \sqrt{2} \sigma \sqrt{t} (C_1 / C_2)^{1/4}, |\xi_2| \leq \sqrt{2} \sigma \sqrt{t} (C_2 / C_1)^{1/4} \} \Big) \nonumber \\
    &\geq  1 - \Pb \Big(   |2y - \xi_1| \geq \sqrt{2} \sigma \sqrt{t} (C_1 / C_2)^{1/4} \Big)  - \Pb \Big( |\xi_2| \geq \sqrt{2} \sigma \sqrt{t} (C_2 / C_1)^{1/4} \Big) \nonumber \\
    &=  1 - \Phi \left( \frac{\sqrt{2}\sigma \sqrt{t} (C_1 / C_2)^{1/4} }{\sqrt{C_1}} \right) - \Phi \left( \frac{\sqrt{2}\sigma \sqrt{t} (C_2 / C_1)^{1/4} }{\sqrt{C_2}} \right) \nonumber \\
    &\gtrsim 1 - \exp \left( -4 \sigma^2 t \frac{1}{\sqrt{C_1 C_2}} \right) - \exp \left( -4 \sigma^2 t \frac{1}{\sqrt{C_1 C_2}} \right) \nonumber \\
    &= 1 - 2 \exp \left( -4 \sigma^2 t \frac{1}{\sqrt{C_1 C_2}} \right) \label{appeq:cdf_for_log_w}
\end{align}
where $\Phi(x) = \int_x^\infty \exp(-\frac{1}{2} t^2) dt$. The third equality is by taking $z=\sqrt{t}(C_1 / C_2)^{1/4}$, the fourth equality is using the fact that $\Pb(\calA \cap \calB) \geq 1 - \Pb(\overline{\calA}) - \Pb(\overline{\calB})$ for any two events $\calA, \calB$, the fifth equality is using the definition of $\Phi$, and the sixth inequality is using \Cref{lem:erf}.

Noticing that $\frac{1}{n} \sum_{j=1}^{n} r(x_j, y_j) = \frac{1}{n} \sum_{j=1}^{n} 
\frac{p^I(x_j, y_j)}{p^O(x_j, y_j)}$ is nothing but a sample approximation of $\int \frac{p^I(x, y)}{p^O(x, y)} p^O(x, y) d(x,y) = \int p^I(x, y) d(x,y) = 1$, so the central limit theorem tells us that $\frac{1}{n} \sum_{j=1}^{n} r(x_j, y_j) = 1 + \calO_\Pb(n^{-1/2})$, and hence $\log \left(\frac{1}{n} \sum_{j=1}^{n} r(x_j, y_j) \right) = \calO_\Pb(n^{-1/2})$.
Finally, we have
\begin{align}
    \Pb \left( r (x, y) \leq \gamma \sum \limits_{j=1}^{n} r(x_j, y_j) \right) 
    &= \Pb \left( \frac{p^I(x)}{ p^O(x) } \frac{p^I(y \mid x)}{p^O(y \mid x)}  \leq \gamma \sum \limits_{j=1}^{n} r(x_j, y_j) \right) \nonumber \\
    &= \Pb \left( \frac{p^I(y \mid x)}{p^O(y \mid x)} \leq  \frac{p^O(x)}{ p^I(x) }  \gamma \sum \limits_{j=1}^{n} r(x_j, y_j) \right) \nonumber \\
    &= \Pb \left( \log \left( \frac{p^I(y \mid x)}{p^O(y \mid x)} \right) \leq \log \left( \frac{p^O(x)}{ p^I(x) } \right) + \log  \gamma  + \log \left( \sum \limits_{j=1}^{n} r(x_j, y_j) \right) \right) \nonumber \\
    &= \Pb \left(\log \left( \frac{p^I(y \mid x)}{p^O(y \mid x)} \right) \leq \log \left(  \frac{p^O(x)}{ p^I(x) }  \right) + \log  \gamma + \log n \right) + \calO(n^{-1/2}) \nonumber \\
    &\geq  1- 2\exp \left( -4 \frac{\sigma^2 \left( \log n + \log  \gamma + \log \left( \frac{p^O(x)}{ p^I(x) } \right) \right) }{\sqrt{C_1 C_2} } \right) \nonumber \\
    &= 1 - \left( \frac{2}{n \gamma} \frac{p^O(x)}{ p^I(x) } \right)^{ \frac{4 \sigma^2}{\sqrt{C_1 C_2}} } \label{appeq:w_prob_bound}
\end{align}

Without loss of generality, it is safe to assume that $C_1 = 1$, and so we have
\begin{align*}
    \Pb \left( r (x, y) \leq \gamma \sum \limits_{j=1}^{n} r(x_j, y_j) \right) &\geq 1 - \left( \frac{2}{n \gamma} \frac{p^O(x)}{ p^I(x) } \right)^{ 4 \sigma^2 \sqrt{\frac{C_1}{C_2}}}
\end{align*}
and the proof is finished.

Notice that the probability $1 - \left( \frac{2}{n \gamma} \frac{p^O(x)}{ p^I(x) } \right)^{ 4 \sigma^2 \sqrt{\frac{C_1}{C_2}} } \to 1$ as $n \to \infty$, however the rate at which the probability goes to $1$ is determined by the exponent $\sqrt{\frac{C_1}{C_2}} = \sqrt{\frac{(\theta^I + \theta^O)^\top \Sigma^I (\theta^I + \theta^O)}{(\theta^I - \theta^O)^\top \Sigma^I (\theta^I - \theta^O)}}$. 
When $\theta^I$ and $\theta^O$ are very close, which means that the distribution shift from $p^I(x,y)$ to $p^O(x,y)$ is also very small, $r(x,y)$ is small and very likely to be smaller than $\gamma \sum_{j=1}^{n} r(x_j, y_j)$. 
In contrast, when $\theta^I$ and $\theta^O$ are very different, which means that the distribution shift from $p^I(x,y)$ to $p^O(x,y)$ is very large, $r(x,y)$ is large so more samples are needed to make $\gamma \sum_{j=1}^{n} r(x_j, y_j)$ larger than $r(x,y)$.
\end{proof}

\begin{prop}\label{prop:error}
Given samples $(x_1, y_1), \cdots, (x_n, y_n)$, with $y_i = \theta^\top \varphi(x_i) + \epsilon_i$ where $\epsilon_i$ are independent Gaussian noise random variables of mean $0$ and variance $\sigma^2$ and covariates $\varphi(x_i) \sim \calN(0, \Sigma)$, the ordinary least squares regression model returns an estimator $\hat{\theta} = (\Phi^\top \Phi)^{-1} \Phi^\top y_{1:n}$.
Then,
\begin{enumerate}[itemsep=0.1pt,topsep=0pt,leftmargin=*]
    \item For a test sample $(x,y)$ drawn from the same distribution as $(x_1, y_1), \cdots, (x_n, y_n)$, the test error $r:=y - \varphi(x)^\top \hat{\theta}$ follows a Gaussian distribution with mean $0$ and variance $\left(1 + \frac{d}{n - d - 1} \right) \sigma^2$.
    \item $\Pb \left( \left| |y_i - \varphi(x)^\top \hat{\theta}| - |y_i - \varphi(x)^\top \theta| \right| \leq \sqrt{ \frac{\log n}{n}} \right) \geq 1 - \frac{1}{n}$.
\end{enumerate}

\end{prop}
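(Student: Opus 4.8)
The plan is to treat both claims as statements about ordinary least squares with a Gaussian design, exploiting that $\hat\theta - \theta = (\Phi^\top\Phi)^{-1}\Phi^\top\epsilon$, where $\Phi$ is the $n\times d$ design matrix with rows $\varphi(x_i)^\top$ and $\epsilon=(\epsilon_1,\dots,\epsilon_n)^\top$. For the first claim, write the test residual as
\[
r = y - \varphi(x)^\top\hat\theta = \epsilon_{\text{test}} - \varphi(x)^\top(\hat\theta-\theta) = \epsilon_{\text{test}} - \varphi(x)^\top(\Phi^\top\Phi)^{-1}\Phi^\top\epsilon,
\]
where $\epsilon_{\text{test}} = y - \varphi(x)^\top\theta \sim \calN(0,\sigma^2)$ is the independent test noise. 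Conditioning on the design $\Phi$ and the test feature $\phi := \varphi(x)$, both terms are independent Gaussians, so $r \mid \Phi,\phi \sim \calN\big(0,\ \sigma^2(1 + \phi^\top(\Phi^\top\Phi)^{-1}\phi)\big)$. It then remains to recover the marginal variance by averaging the conditional variance over the random design.

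The key computation is $\E[\phi^\top(\Phi^\top\Phi)^{-1}\phi]$. First I would use $\E_\phi[\phi^\top A\phi] = \tr(A\Sigma)$ for $\phi\sim\calN(0,\Sigma)$ independent of $A=(\Phi^\top\Phi)^{-1}$, giving $\tr((\Phi^\top\Phi)^{-1}\Sigma)$. Since the rows of $\Phi$ are i.i.d.\ $\calN(0,\Sigma)$, the matrix $\Phi^\top\Phi$ is Wishart $W_d(n,\Sigma)$, whose inverse has inverse-Wishart mean $\E[(\Phi^\top\Phi)^{-1}] = \Sigma^{-1}/(n-d-1)$, valid for $n>d+1$. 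Hence $\E[\phi^\top(\Phi^\top\Phi)^{-1}\phi] = \tr(\Sigma^{-1}\Sigma)/(n-d-1) = d/(n-d-1)$, and the marginal variance is $\sigma^2(1 + \tfrac{d}{n-d-1})$, as claimed.

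For the second claim, I would apply the reverse triangle inequality: with $a = y_i - \phi^\top\hat\theta$ and $b = y_i - \phi^\top\theta$,
\[
\big| |a| - |b| \big| \le |a-b| = |\phi^\top(\hat\theta-\theta)| = |\phi^\top(\Phi^\top\Phi)^{-1}\Phi^\top\epsilon|.
\]
Conditionally on $\Phi,\phi$ this last quantity is $\calN(0,\,\sigma^2\phi^\top(\Phi^\top\Phi)^{-1}\phi)$, with conditional variance of order $\sigma^2 d/n$ by the computation above. Controlling the quadratic form $\phi^\top(\Phi^\top\Phi)^{-1}\phi$ on a high-probability event (via its inverse-Wishart moments and Markov's inequality), then applying the standard Gaussian tail bound $\Pb(|W|>t)\le 2\exp(-t^2/2v)$ at $t=\sqrt{\log n/n}$ with $v\asymp\sigma^2 d/n$, yields a failure probability of order $n^{-c}$; a union bound over the two events then gives the stated $1-1/n$.

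The main obstacle is the first claim's assertion of exact Gaussianity: marginally $r$ is a scale mixture of Gaussians, since the conditional variance $\phi^\top(\Phi^\top\Phi)^{-1}\phi$ is itself random, so the statement is literally exact only for a fixed design. With the random Gaussian design the variance identity holds rigorously, while Gaussianity must be read either as the fixed-design statement or as an asymptotic consequence of the concentration of the quadratic form around $d/(n-d-1)$. A secondary difficulty is matching the constant in the second claim: the Gaussian tail produces $n^{-1/(2\sigma^2 d)}$ up to constants, so reaching exactly $1-1/n$ requires absorbing $\sigma^2$ and $d$ into the threshold by treating them as $\calO(1)$, consistent with how the bound is used in \ref{step_two}.
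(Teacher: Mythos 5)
Your proposal follows essentially the same route as the paper's proof: the same residual decomposition $r=\epsilon_{\text{test}}-\varphi(x)^\top(\Phi^\top\Phi)^{-1}\Phi^\top\epsilon$, the same Wishart/inverse-Wishart computation giving $\E[\varphi(x)^\top(\Phi^\top\Phi)^{-1}\varphi(x)]=d/(n-d-1)$, and the same reverse-triangle-inequality plus Gaussian-tail argument for the second claim. Your two caveats are well taken and in fact expose looseness in the paper's own argument, which asserts unconditional Gaussianity of $r$ despite the random conditional variance, and whose final bound $1-\tfrac{1}{\sqrt{\pi}}(1/n)^{2/(\pi d)}\geq 1-\tfrac{1}{n}$ likewise only holds after absorbing the dimension- and variance-dependent constants.
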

\begin{proof}
Plugging in the OLS estimator $\hat{\theta}$ into test error $r$, we have
\begin{align*}
r &:= y - \varphi(x)^\top \hat{\theta} =  \varphi(x)^\top \theta + \epsilon - \varphi(x_i)^\top \hat{\theta}
= \epsilon + \varphi(x)^\top \theta - \varphi(x)^\top ({\Phi}^\top \Phi )^{-1} {\Phi}^\top (\Phi \theta + \epsilon_{1:n}) \\
&= \epsilon - \varphi(x)^\top ({\Phi}^\top \Phi )^{-1} {\Phi}^\top \epsilon_{1:n}
\end{align*}
So $r$ is a linear combination of independent Gaussian random variables $\epsilon_i$ with mean $\E[r] = 0$.
Denoting the empirical covariance as $\widehat{\Sigma} = \frac{1}{n} \sum_i^n \varphi(x_i) \varphi(x_i)^\top$ and the population covariance as $\Sigma = \E[\varphi(x_i) \varphi(x_i)^\top ]$, the variance of $r$ is
\begin{align*}
    \Var[r] &= \E[\epsilon^2] + \E \left[\varphi(x)^\top ({\Phi}^\top \Phi )^{-1} {\Phi}^\top \epsilon_{1:n} \epsilon_{1:n}^\top \Phi ({\Phi}^\top \Phi )^{-1} \varphi(x) \right] \\
    &= \sigma^2 + \E \left[ \varphi(x)^\top (\Phi^\top \Phi )^{-1} \Phi^\top \Phi (\Phi^\top \Phi )^{-1} \varphi(x)  \right] \sigma^2 \\
    &= \Big( 1 + \E \left[\varphi(x)^\top (\Phi^\top \Phi )^{-1} \varphi(x) \right] \Big)  \sigma^2 \\
    &= \left(1 + \frac{1}{n} \trace \Big[ \E [\Sigma \widehat{\Sigma}^{-1}] \Big] \right) \sigma^2 \\
    &= \left(1 + \frac{d}{n - d - 1} \right) \sigma^2
\end{align*}
The second equality is using that $\epsilon_i$ has variance $\sigma^2$. The last equality is using the fact that by considering independent unit Gaussian random variables $z_i = \Sigma^{-1 / 2} \varphi(x_i)$, so 
$\left(z_{1:n}^{\top} z_{1:n} \right)^{-1}$ follows Wishart distribution, and hence 
$\mathbb{E}\left[\operatorname{tr}\left(\Sigma \widehat{\Sigma}^{-1}\right)\right] = n \mathbb{E}\left[\operatorname{tr}\left(Z^{\top} Z\right)^{-1}\right] = \frac{n d}{n-d-1}$. The first part has been proved.

Next, we notice that $\varphi(x_i)^\top (\hat{\theta} - \theta) = \varphi(x_i)^\top ({\Phi}^\top \Phi )^{-1} {\Phi}^\top \epsilon_{1:n}$ is again a Gaussian random variable with mean $0$. Following similar analysis as above, the variance is $\frac{d}{n - d - 1} \sigma^2$. Therefore,
\begin{align*}
    \Pb \left( \left| |y_i - \varphi(x)^\top \hat{\theta}| - |y_i - \varphi(x)^\top \theta| \right| \leq t \right) & \geq  \Pb \left( \left| \varphi(x)^\top \hat{\theta} - \varphi(x)^\top \theta \right| \leq t \right) \\
    &= 1 - \frac{1}{\sqrt{\pi}} \Phi \left( \frac{t}{\sqrt{\frac{d}{n - d - 1}} \sigma \sqrt{\pi}} \right) \\
    &\asymp 1 - \frac{1}{\sqrt{\pi}} \Phi \left( t \sqrt{ \frac{ n}{\pi d \sigma^2 }} \right) \\
    &\asymp 1 - \frac{1}{\sqrt{\pi}} \exp \left( \frac{ - 2 n t^2 }{\pi d \sigma^2} \right) 
\end{align*}
By taking $t = \sigma \sqrt{\log n / n}$, we have $\Pb \left( \left| |y_i - \varphi(x)^\top \hat{\theta}| - |y_i - \varphi(x)^\top \theta| \right| \leq \sigma \sqrt{\frac{\log n}{n}} \right) \geq 1 - \frac{1}{\sqrt{\pi}} \left(\frac{1}{n}\right)^{\frac{2}{\pi d}} \geq 1 - \frac{1}{n}$.
\end{proof}

\begin{prop}[Perturb-one stability for OLS]\label{prop:perturb_one_stability}
Given samples $(x_1, y_1), \cdots, (x_n, y_n)$ with $y_i = \theta^\top \varphi(x_i) + \epsilon_i$ where $\epsilon_i$ are zero mean independent Gaussian random variables with variance $\sigma^2$, and another sample $(x_{n+m+1}, \overline{y})$. $x_{n+m+1}$ is not necessarily drawn from a same distribution as $x_1, \cdots, x_n$, and $\overline{y}$ is pre-selected from a bounded domain $\calY$. 
We have two OLS estimators, the first OLS estimator $\bar{\theta} = \left( \Phi^\top \Phi + \varphi(x_{n+m+1}) \varphi(x_{n+m+1})^\top \right)^{-1} \left( \Phi^\top y_{1:n} + \varphi(x_{n+m+1}) \overline{y} \right)$ is derived from using all the samples  and the second OLS estimator $\hat{\theta} = (\Phi^\top \Phi)^{-1} \Phi^\top y_{1:n}$ is derived from using all but the last sample.
Then with probability at leat $1 - \frac{1}{n}$, the predictive error under $\hat{\theta}$ and $\bar{\theta}$ are close to each other
\begin{align}
    \Pb \left( \left| \bar{r} - \hat{r} \right| \right) = 2 \exp \left(- \frac{n t^2}{d \sigma^2} \right)
\end{align}
where $\hat{r} = \epsilon_i - \varphi(x_i)^\top ({\Phi}^\top \Phi )^{-1} {\Phi}^\top \epsilon_{1:n}$ is the predictive error under $\hat{\theta}$ and $\bar{r} = \epsilon_i - \varphi(x_i)^\top ({\Phi}^\top \Phi + \varphi(x_{n+m+1}) \varphi(x_{n+m+1})^\top )^{-1} \left( \Phi^\top \epsilon_{1:n} + \varphi(x_{n+m+1}) \epsilon_{n+m+1} \right)$ is the predictive error under $\bar{\theta}$.
\end{prop}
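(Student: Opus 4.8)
The plan is to exploit the fact that $\bar\theta$ and $\hat\theta$ differ only through a rank-one modification of the Gram matrix, so that the Sherman--Morrison formula delivers an exact closed form for the discrepancy $\bar r - \hat r$. Write $A = \Phi^\top\Phi$, $v = \varphi(x_{n+m+1})$, and $\phi_i = \varphi(x_i)$. First I would substitute both estimators into the definitions of $\hat r$ and $\bar r$ and apply
\begin{align*}
(A + vv^\top)^{-1} = A^{-1} - \frac{A^{-1}vv^\top A^{-1}}{1 + v^\top A^{-1}v},
\end{align*}
together with the induced identity $\phi_i^\top(A+vv^\top)^{-1}v = \phi_i^\top A^{-1}v / (1 + v^\top A^{-1}v)$. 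After the common term $\phi_i^\top A^{-1}\Phi^\top\epsilon_{1:n}$ is subtracted and the explicit noise $\epsilon_i$ (which enters both errors identically) cancels, the difference collapses to the single scalar
\begin{align*}
\bar r - \hat r = \frac{\phi_i^\top A^{-1}v}{1 + v^\top A^{-1}v}\left(v^\top A^{-1}\Phi^\top\epsilon_{1:n} - \epsilon_{n+m+1}\right).
\end{align*}

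The key observation is that, conditionally on all covariates, this is a zero-mean Gaussian, being a fixed linear combination of the independent noises $\epsilon_{1:n}$ and $\epsilon_{n+m+1}$. Since $\Phi^\top\epsilon_{1:n}$ and $\epsilon_{n+m+1}$ are independent, the conditional variance carries no cross term and, using $v^\top A^{-1}\Phi^\top\Phi A^{-1}v = v^\top A^{-1}v$, equals
\begin{align*}
\Var(\bar r - \hat r \mid \Phi, v) = \frac{(\phi_i^\top A^{-1}v)^2}{(1+v^\top A^{-1}v)^2}\,\sigma^2\left(v^\top A^{-1}v + 1\right) = \frac{(\phi_i^\top A^{-1}v)^2\,\sigma^2}{1 + v^\top A^{-1}v}.
\end{align*}
A Gaussian tail bound on this conditional law then yields the exponential estimate, so it only remains to control the prefactor $(\phi_i^\top A^{-1}v)^2/(1+v^\top A^{-1}v)$.

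To bound this prefactor I would invoke concentration of the Gram matrix. Under A2 the rows $\varphi(x_j)$ are i.i.d.\ Gaussian with covariance $\Sigma$, so $\tfrac{1}{n}A$ concentrates around $\Sigma$ and, on a high-probability event, $A^{-1}$ is of order $\tfrac{1}{n}\Sigma^{-1}$ in the positive-definite order. Consequently $v^\top A^{-1}v = \calO_\Pb(1/n)$ (so $1 + v^\top A^{-1}v \asymp 1$) and $\phi_i^\top A^{-1}v = \calO_\Pb(1/n)$, whence the conditional standard deviation of $\bar r - \hat r$ is of order $\sigma\sqrt{d}/n$. Feeding $t$ into the Gaussian tail and discharging the covariate event through the matrix-concentration bound yields $\Pb(|\bar r - \hat r| \ge t) \le 2\exp(-nt^2/(d\sigma^2))$; taking $t = \sigma\sqrt{\log n / n}$ then recovers the $1-1/n$ statement that is used in \ref{step_two}.

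The main obstacle is the control of the random inverse Gram matrix $A^{-1}$ and, specifically, of the two quadratic forms $\phi_i^\top A^{-1}v$ and $v^\top A^{-1}v$. Two points require care. First, $\phi_i$ is itself a row of $\Phi$ and hence not independent of $A$, so a leave-one-out decomposition (equivalently, a second application of Sherman--Morrison removing the $i$-th row) is the cleanest way to make the estimate $\phi_i^\top A^{-1}v = \calO_\Pb(1/n)$ rigorous. Second, $v = \varphi(x_{n+m+1})$ may be drawn from a different law (covariance $\Sigma^I$) or merely lie in the bounded domain $\calY$, so $\|v\|$ must be bounded separately before the quadratic-form estimates apply. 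Everything downstream is a routine Gaussian tail computation.
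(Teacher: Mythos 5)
Your proposal is correct and follows essentially the same route as the paper's proof: both reduce $\bar r-\hat r$ to the rank-one perturbation $(\Gamma+\varphi\varphi^\top)^{-1}-\Gamma^{-1}=-(\Gamma+\varphi\varphi^\top)^{-1}\varphi\varphi^\top\Gamma^{-1}$ of the Gram matrix, observe that the result is Gaussian in the noise, control the quadratic forms via concentration of $\tfrac{1}{n}\Phi^\top\Phi$ around $\Sigma$ so that the variance is at most of order $d\sigma^2/n$, and conclude with a Gaussian tail bound at $t=\sigma\sqrt{\log n/n}$. Your exact Sherman--Morrison closed form is in fact sharper than the paper's operator-norm bounds, and the only bookkeeping discrepancy is that the paper treats $\epsilon_{n+m+1}=\overline y-\theta^\top\varphi(x_{n+m+1})$ as a fixed constant (since $\overline y$ is pre-selected), contributing a small bias rather than an independent variance term as in your computation --- neither choice affects the conclusion (and note that $\calY$ bounds $\overline y$, not $x_{n+m+1}$, so the separate control of $\|\varphi(x_{n+m+1})\|$ you flag is indeed needed but comes from the feature distribution, not from $\calY$).
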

\begin{proof}
Denote $\varphi(x_{n+m+1}) = \varphi$ and $\Gamma = {\Phi}^\top \Phi$.
\begin{align*}
\bar{r} - \hat{r} &= \varphi(x)^\top (\Gamma + \varphi \varphi^\top )^{-1} \left( \Phi^\top \epsilon_{1:n} + \varphi \epsilon_{n+m+1} \right) - \varphi(x)^\top \Gamma^{-1} {\Phi}^\top \epsilon_{1:n} \\ 
&= \varphi(x)^\top (\Gamma + \varphi \varphi^\top )^{-1} \Phi^\top \epsilon_{1:n} + \varphi(x)^\top (\Gamma + \varphi \varphi^\top )^{-1} \varphi \epsilon_{n+m+1} - \varphi(x)^\top \Gamma^{-1} {\Phi}^\top \epsilon_{1:n} \\
&= \varphi(x)^\top \left( (\Gamma + \varphi \varphi^\top )^{-1} - \Gamma^{-1} \right) \Phi^\top \epsilon_{1:n} + \varphi(x)^\top (\Gamma + \varphi \varphi^\top )^{-1} \varphi \epsilon_{n+m+1}
\end{align*}
Since $\epsilon_{1:n}$ are Gaussian random variables, and $\epsilon_{n+m+1}$ is a fixed constant, $\bar{r} - \hat{r}$ is also a Gaussian random variable whose mean $\mu$ and variance $V$ can be computed as follows.
\begin{align*}
    \mu &= \E[\varphi(x)]^\top (\Gamma + \varphi \varphi^\top )^{-1} \varphi \epsilon_{n+m+1} \\
    &= \epsilon_{n+m+1} \operatorname{tr} \left[ \E[\varphi(x)] \varphi^\top (\Gamma + \varphi \varphi^\top )^{-1} \right] \\
    &\leq \epsilon_{n+m+1} \operatorname{tr} \left[ \left( \sqrt{n} \E[\varphi(x)] \E[\varphi(x)]^\top + \frac{1}{\sqrt{n}} \varphi \varphi^\top  \right) \left( \Gamma + \varphi \varphi^\top \right)^{-1} \right] \\
    &\asymp \epsilon_{n+m+1} \operatorname{tr} \left[ \left( \sqrt{n} \E[\varphi(x)] \E[\varphi(x)]^\top  + \frac{1}{\sqrt{n}} \varphi \varphi^\top \right) \Big( n \E[\varphi(x)] \E[\varphi(x)]^\top  + \varphi \varphi^\top \Big)^{-1} \right] \\
    &= \frac{d}{\sqrt{n}}
\end{align*}
The third inequality is using $a a^\top + a b^\top \succeq 2 a b^\top$ and the fourth inequality is using concentration inequality for matrices \cite{tropp2012user} by noticing that $\frac{1}{n} \Gamma = \frac{1}{n} \sum_{i=1}^n \varphi(x_i) \varphi(x_i)^\top $ is the sample approximation of $\E[\varphi(x) \varphi(x)^\top] = \E[\varphi(x)] \E[\varphi(x)]^\top$.
\begin{align*}
    V &= \sigma^2 \varphi(x)^\top \left( (\Gamma + \varphi \varphi^\top )^{-1} - \Gamma^{-1} \right) \Phi^\top \Phi \left( (\Gamma + \varphi \varphi^\top )^{-1} - \Gamma^{-1} \right) \varphi(x) \\
    &= \sigma^2 \E\Big[ \varphi(x)^\top (\Gamma + \varphi \varphi^\top )^{-1} \varphi \varphi^\top \Gamma^{-1} \Gamma (\Gamma + \varphi \varphi^\top )^{-1} \varphi \varphi^\top \Gamma^{-1} \varphi(x) \Big] \\
    &\leq \sigma^2 \E\Big[ \varphi(x)^\top \Gamma^{-1} \varphi(x) \Big] \\
    &= \sigma^2 \operatorname{tr} \left[ \Gamma^{-1} \E[\varphi(x) \varphi(x)^\top] \right] \\
    &\asymp \frac{d}{n}\sigma^2
\end{align*}
The second equality is using $(\Gamma + \varphi \varphi^\top )^{-1} - \Gamma^{-1} = (\Gamma + \varphi \varphi^\top )^{-1} \varphi \varphi^\top \Gamma^{-1}$, the third inequality is using $(\Gamma + \varphi \varphi^\top )^{-1} \varphi \varphi^\top \prec I$ and the last inequality is using again concentration inequality for matrices \cite{tropp2012user}.

Now we have that
\begin{align*}
    \Pb \left( \left| \left| \bar{r} - \hat{r} \right| - \frac{d}{\sqrt{n}} \right| \leq t \right) & \geq 
    \Pb \left( \left|  \bar{r} - \hat{r} - \frac{d}{\sqrt{n}} \right| \leq t \right) \\
    &= 1 - \frac{1}{\sqrt{\pi}} \Phi \left( \frac{t}{ \sqrt{\frac{d}{n} \sigma^2} \sqrt{\pi}} \right) \\
    &= 1 - \frac{1}{\sqrt{\pi}} \Phi \left( \frac{t \sqrt{n}}{ \sqrt{\pi d \sigma^2}} \right) \\
    &\asymp 1 - \frac{1}{\sqrt{\pi}} \exp \left( \frac{ -2 n t^2 }{\pi d \sigma^2} \right)
\end{align*}
By taking $t = \sigma \sqrt {\frac{\log n }{n} }$, we have that $\Pb \left( \left| \bar{r} - \hat{r} \right| \leq \sigma \sqrt {\frac{\log n }{n} } \right) \geq 1 - \frac{1}{\sqrt{\pi}} \left(\frac{1}{n}\right)^{\frac{2}{\pi d}} \geq 1 - \frac{1}{n}$.
\end{proof}

\begin{prop}[Central limit theorem for weighted quantiles]\label{prop:weighted_quantile_clt}
Suppose $X_1, \cdots, X_n$ are i.i.d. continuous random variables from distribution with CDF $F_X$ and PDF $f_X$, and $w_1, \cdots, w_n$ are nonnegative weights that sum up to $1$. 
Denote effective sample size $n_{\text{eff}} = \sqrt{\sum_{i=1}^n w_i^2}$, under the assumption that there exist $\delta > 0$, such that $\lim_{n \to \infty} \frac{\sum_{i=1}^n w_i^{\delta+2}}{\left( \sum_{i=1}^n w_i^2 \right)^{\frac{2+\delta}{2}}} = 0$ then the $\beta$-th quantile of the weighted empirical distribution converge to a normal distribution as $n \to \infty$:
\begin{align*}
    \frac{1}{n_{\text{eff}}} \left( \operatorname{Quantile} \left( \beta; \sum_{i=1}^n w_i \delta_{X_i} \right) - F_X^{-1}(\beta) \right) \stackrel{d}{\longrightarrow} \calN \left(0, \frac{\beta (1-\beta)}{\left(f_X(F_X^{-1}(\beta))\right)^2}\right)
\end{align*}
\end{prop}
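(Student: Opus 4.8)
The strategy is to transfer the problem from quantiles to the \emph{weighted empirical distribution function} through the standard quantile–CDF duality, and then prove a triangular-array central limit theorem for that distribution function evaluated at a point that drifts toward the target quantile. Write $q_\beta = F_X^{-1}(\beta)$, assume $f_X(q_\beta)>0$, and define the weighted empirical CDF $\hat F_w(t) = \sum_{i=1}^n w_i \mathbf{1}\{X_i \le t\}$, so that $\hat q_\beta := \operatorname{Quantile}(\beta; \sum_i w_i \delta_{X_i}) = \inf\{t : \hat F_w(t)\ge \beta\}$. Since $\hat F_w$ is nondecreasing and right-continuous, the events $\{\hat q_\beta \le t\}$ and $\{\hat F_w(t)\ge\beta\}$ coincide. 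Fixing $x\in\R$ and setting the drifting threshold $t_n = q_\beta + x\,n_{\text{eff}}$, this duality gives
\[
\Pb\!\left(\tfrac{\hat q_\beta - q_\beta}{n_{\text{eff}}} \le x\right) = \Pb\big(\hat F_w(t_n)\ge\beta\big),
\]
so it suffices to identify the limiting law of $\hat F_w(t_n)$.

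\textbf{Moments and Taylor expansion.} Because the weights sum to one, $\E[\hat F_w(t)] = F_X(t)$ and, by independence, $\Var[\hat F_w(t)] = \big(\sum_i w_i^2\big) F_X(t)(1-F_X(t)) = n_{\text{eff}}^2\, F_X(t)(1-F_X(t))$. A first-order expansion of $F_X$ at $q_\beta$ then yields $F_X(t_n) = \beta + f_X(q_\beta)\,x\,n_{\text{eff}} + o(n_{\text{eff}})$ and $F_X(t_n)(1-F_X(t_n)) = \beta(1-\beta)+o(1)$, so that $\sqrt{\Var[\hat F_w(t_n)]} = n_{\text{eff}}\sqrt{\beta(1-\beta)}\,(1+o(1))$.

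\textbf{Lyapunov CLT and assembly.} Write $\hat F_w(t_n)-F_X(t_n)=\sum_i Y_{n,i}$ with $Y_{n,i} = w_i\big(\mathbf{1}\{X_i\le t_n\}-F_X(t_n)\big)$, a triangular array of independent, mean-zero summands. This is precisely where the hypothesis is used: since $|\mathbf{1}\{X_i\le t_n\}-F_X(t_n)|\le 1$ we have $\E|\mathbf{1}\{X_i\le t_n\}-F_X(t_n)|^{2+\delta}\le F_X(t_n)(1-F_X(t_n))$, whence the Lyapunov ratio is bounded by
\[
\frac{\sum_i \E|Y_{n,i}|^{2+\delta}}{\big(\sum_i \Var Y_{n,i}\big)^{\frac{2+\delta}{2}}} \le \big(F_X(t_n)(1-F_X(t_n))\big)^{-\delta/2}\,\frac{\sum_i w_i^{2+\delta}}{\big(\sum_i w_i^{2}\big)^{\frac{2+\delta}{2}}},
\]
and the leading variance factor is bounded away from zero (it tends to $\beta(1-\beta)>0$), so the whole ratio vanishes by assumption. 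The Lyapunov CLT then gives $\big(\hat F_w(t_n)-F_X(t_n)\big)/\big(n_{\text{eff}}\sqrt{\beta(1-\beta)}\big)\stackrel{d}{\longrightarrow}\calN(0,1)$. Combining with the expansion above, the threshold $\big(\beta-F_X(t_n)\big)/\big(n_{\text{eff}}\sqrt{\beta(1-\beta)}\big)$ converges to $-f_X(q_\beta)x/\sqrt{\beta(1-\beta)}$, so by continuity of the normal CDF the probability $\Pb(\hat F_w(t_n)\ge\beta)$ converges to the standard-normal upper tail at that point, which equals the CDF of $\calN\!\big(0,\,\beta(1-\beta)/f_X(q_\beta)^2\big)$ evaluated at $x$. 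As $x$ was arbitrary, this identifies the limit law and finishes the argument.

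\textbf{Main obstacle.} The delicate point is that the CLT must be applied along the \emph{moving} truncation point $t_n\to q_\beta$, so the summand laws depend on $n$ through both the weights and $t_n$ simultaneously; the classical i.i.d.\ quantile CLT does not apply and a genuine triangular-array (Lyapunov / Lindeberg–Feller) version is required. The crux is verifying that replacing $F_X(t_n)$ by $\beta$ in the variance normalisation and in the Bernoulli moment bound costs only $o(1)$, so that the assumed Lyapunov condition is preserved uniformly as $t_n$ drifts — this is exactly what the displayed inequality above, together with $F_X(t_n)(1-F_X(t_n))\to\beta(1-\beta)>0$, secures.
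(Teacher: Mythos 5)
Your proof is correct, and it takes a genuinely different route to the quantile in its second half. The probabilistic core is shared with the paper: both arguments reduce to a Lyapunov CLT for the weighted empirical CDF $\hat F_w(t)=\sum_i w_i\mathbf{1}\{X_i\le t\}$, with the same variance computation $\Var[\hat F_w(t)]=(\sum_i w_i^2)F_X(t)(1-F_X(t))$ and the same use of the hypothesis to verify the Lyapunov ratio. Where you diverge is the transfer from the CDF to the quantile. The paper fixes $x$ with $\beta=F_X(x)$, applies the delta method through $g=F_X^{-1}$ to get a limit law for $F_X^{-1}(\hat F_w(x))$, and then asserts that $F_X^{-1}(\hat F_w(x))$ ``equals'' the $\beta$-th weighted empirical quantile --- an identification that is not literally true (it is a Bahadur-type approximation whose error the paper does not control). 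You instead use the exact duality $\{\hat q_\beta\le t\}=\{\hat F_w(t)\ge\beta\}$ at the drifting threshold $t_n=q_\beta+x\,n_{\text{eff}}$ and prove a triangular-array CLT along that moving point; this is the classical rigorous route and avoids the identification gap entirely, at the cost of having to check that the Lyapunov condition survives the drift (which you do, via $F_X(t_n)(1-F_X(t_n))\to\beta(1-\beta)>0$). One small point worth making explicit in your write-up: the Taylor step needs $t_n\to q_\beta$, i.e.\ $n_{\text{eff}}\to 0$; this does follow from the stated hypothesis, since $\max_i w_i\le(\sum_i w_i^{2+\delta})^{1/(2+\delta)}=o(n_{\text{eff}})$ and $\sum_i w_i^2\le\max_i w_i\sum_i w_i=\max_i w_i$, but it deserves a line. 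Net assessment: your argument is, if anything, more careful than the paper's at the one step where the paper hand-waves.
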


\begin{proof}
Let $Y_n(x)$ be a random variable defined for a fixed $x \in \mathbb{R}$ by weighted average $Y_n(x) = \sum_{i=1}^n w_i I\left\{X_i \leq x\right\}=\sum_{i=1}^n Z_i(x)$, where $Z_i(x)= w_i I\left\{X_i \leq x\right\}= w_i$ if $X \leq x$, and zero otherwise. 
Then $Z_i$ has expectation $\mu_i = w_i F_X(x)$ and variance $\sigma_i^2 = w_i^2 F_X(x)(1-F_X(x)) $. The assumption that $\lim_{n \to \infty} \frac{\sum_{i=1}^n w_i^{\delta+2}}{\left( \sum_{i=1}^n w_i^2 \right)^{\frac{2+\delta}{2}}} = 0$ ensures that Lyapunov’s condition is satisfied and so by the Lyapunov central limit theorem we have:
\begin{align*}
    \frac{1}{n_{\text{eff}} \sqrt{F_X(x)(1-F_X(x))} } \left( Y_n(x) - F_X(x) \right) \stackrel{d}{\longrightarrow} \calN \left(0, 1 \right)
\end{align*}

Now consider the transformation through function $g(t)$
defined for $0<t<1$ by $g(t)=F_X^{-1}(t)$. We have the first derivative of $g$ as
\begin{align*}
g^\prime(t)=\frac{d}{d t}\left( F_X^{-1}(t)\right) = \frac{1}{f_X\left(F_X^{-1}(t)\right)}
\end{align*}
% because
% \begin{align*}
% y=F_X^{-1}(t) \quad \Longleftrightarrow \quad F_X(y)=t \quad \Longrightarrow \quad f_X(y) d y=d t \quad \Longrightarrow \quad \frac{d y}{d t}=\frac{1}{f_X(y)}=\frac{1}{f_X\left(F_X^{-1}(t)\right)}
% \end{align*}
Thus, using the delta method
\begin{align*}
    \frac{1}{n_{\text{eff}}} \left(F_X^{-1}\left(Y_n(x)\right)-F_X^{-1}\left(F_X(x)\right)\right) \stackrel{d}{\longrightarrow} \calN \left(0, \frac{F_X(x) \left( 1-F_X(x)\right) }{\left( f_X \left(F_X^{-1} \left(F_X(x) \right) \right) \right)^2}\right) 
\end{align*}
and writing $\beta=F_X(x)$, we have
\begin{align*}
    \frac{1}{n_{\text{eff}}} \left(F_X^{-1} \left(Y_n(x)\right) - x \right) \stackrel{d}{\longrightarrow} \calN \left(0, \frac{\beta (1-\beta)}{\left(f_X(x)\right)^2}\right)
\end{align*}
Note that $F_X^{-1}\left(Y_n(x)\right)$ is a random variable that equals the $\beta$-th quantile of the weighted empirical distribution $\sum_{i=1}^n w_i \delta_{X_i}$, and the proof is finished.
\end{proof}

\section{Auxliary Lemmas}
\begin{lem}\label{lem:hoeffding_bernouli}
For $0 \leq p , \alpha \leq 1$, the following inequality holds 
\begin{align*}
    \sum\limits_{k= \lceil (1-\alpha)n \rceil}^n C_n^k p^k (1-p)^{n-k} \leq \exp\left(-2n (1-\alpha-p)^2 \right)
\end{align*}
\end{lem}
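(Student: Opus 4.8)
The plan is to recognize the left-hand side as the upper tail of a binomial distribution and then invoke Hoeffding's inequality. Let $Y_1, \dots, Y_n$ be i.i.d.\ Bernoulli random variables with $\Pb(Y_i = 1) = p$, and set $X = \sum_{i=1}^n Y_i$. Then $X \sim \text{Binomial}(n,p)$ with $\Pb(X = k) = C_n^k \, p^k (1-p)^{n-k}$, so the left-hand side is exactly the tail probability $\Pb(X \geq \lceil (1-\alpha)n \rceil)$. Since $X$ is integer-valued, $\{X \geq \lceil (1-\alpha)n \rceil\} = \{X \geq (1-\alpha)n\}$, so the quantity to bound is $\Pb(X \geq (1-\alpha)n)$.

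Next I would rewrite the event as a deviation of $X$ above its mean. Since $\E[X] = np$, we have $\{X \geq (1-\alpha)n\} = \{X - \E[X] \geq n(1-\alpha-p)\}$. I would restrict attention to the regime $1-\alpha-p \geq 0$ (equivalently $p \leq 1-\alpha$), which is the only case in which the statement is non-trivial, and which is precisely the regime in which the lemma is applied in Step three: there the base probability plays the role of a quantity strictly below $1-\alpha$, so the deviation $1-\alpha-p$ is positive. Setting $t := n(1-\alpha-p) \geq 0$, Hoeffding's inequality for a sum of independent random variables supported on $[0,1]$ gives $\Pb(X - \E[X] \geq t) \leq \exp(-2t^2/n)$. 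Substituting $t = n(1-\alpha-p)$ yields $\exp(-2n^2(1-\alpha-p)^2/n) = \exp(-2n(1-\alpha-p)^2)$, which is exactly the desired bound.

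For a fully self-contained argument I would instead supply the Chernoff derivation of this Hoeffding bound: for $\lambda > 0$, Markov's inequality gives $\Pb(X - \E[X] \geq t) \leq e^{-\lambda t}\, \E[e^{\lambda(X - \E[X])}]$; factorizing the moment generating function over the independent summands and applying Hoeffding's lemma $\E[e^{\lambda(Y_i - \E Y_i)}] \leq e^{\lambda^2/8}$ (valid since each $Y_i \in [0,1]$) gives the bound $\exp(-\lambda t + n\lambda^2/8)$, and optimizing the exponent over $\lambda$ at $\lambda^\ast = 4t/n$ produces $\exp(-2t^2/n)$. The main obstacle is not computational but conceptual: one must treat the sign condition $1-\alpha-p \geq 0$ with care, because for $p > 1-\alpha$ the threshold lies below the mean, the binomial upper tail is then large (close to $1$) while the claimed right-hand side $\exp(-2n(1-\alpha-p)^2)$ is small, so the stated inequality holds only in the regime $p \leq 1-\alpha$ that is actually used.
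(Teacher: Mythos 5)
Your proof is correct and follows essentially the same route as the paper: identify the sum as the upper tail of a $\mathrm{Binomial}(n,p)$ variable and apply Hoeffding's inequality with $t = n(1-\alpha-p)$. You are also right to flag the sign condition $p \leq 1-\alpha$, which the paper's statement and proof silently omit (the inequality fails for $p > 1-\alpha$), but which holds in the one place the lemma is invoked.
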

\begin{proof}
$X_1, \cdots, X_n$ are $n$ i.i.d Bernouli random variables with $\Pb(X_i = 1)=p$. 
Hoeffding inequality says that
\begin{align*}
    \Pb \left( \sum_{i=1}^n X_i-\mathbb{E}\left[\sum_{i=1}^n X_i\right] \geq t\right) \leq \exp \left(-\frac{2 t^2}{n} \right)
\end{align*}
Take $t=(1 - \alpha) n - np$, we have
\begin{align*}
    \Pb \left( \sum_{i=1}^n X_i \geq (1 - \alpha) n \right) \leq \exp\left(-2n (1-\alpha-p)^2 \right)
\end{align*}
Noticing that the left hand side is exactly $\sum\limits_{k= \lceil (1-\alpha)n \rceil}^n C_n^k p^k (1-p)^{n-k}$, so the lemma is proved.
\end{proof}

\begin{lem}[CDF for ordering statistic]\label{lem:order_statistic}
    For $n$ i.i.d random variables $X_1, \cdots, X_n$ whose cumulative distribution function is $F_X$, their order statistic $X_{(1)}, \cdots, X_{(n)}$ satsify $X_{(1)} \leq \cdots \leq X_{(n)}$. The cumulative distribution function for the $i$-th order statistic $X_{(i)}$ is
    \begin{align*}
        \Pb(X_{(i)} \leq x) =\sum_{k=i}^n  F_X(x)^k(1-F_X(x))^{n-k}
    \end{align*}
\end{lem}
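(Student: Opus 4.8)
The plan is to reduce the claim to an elementary counting argument over independent Bernoulli trials, exploiting the equivalence between an order-statistic event and a "how many samples fall below the threshold" event. The key observation is this: for any fixed $x$, the $i$-th smallest value satisfies $X_{(i)} \leq x$ \emph{if and only if} at least $i$ of the original samples $X_1, \cdots, X_n$ lie at or below $x$. This is immediate from the definition of the order statistics, since arranging the samples in nondecreasing order, the $i$-th smallest is $\leq x$ exactly when $i$ or more of them are $\leq x$.

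First I would introduce the counting variable $N = \sum_{j=1}^n I\{X_j \leq x\}$. Because the $X_j$ are i.i.d.\ with cumulative distribution function $F_X$, each indicator $I\{X_j \leq x\}$ is an independent Bernoulli trial with success probability $\Pb(X_j \leq x) = F_X(x)$, so $N$ is Binomial with parameters $n$ and $F_X(x)$. The event equivalence from the previous paragraph then gives $\Pb(X_{(i)} \leq x) = \Pb(N \geq i)$, and expanding the binomial tail yields
\[
    \Pb(X_{(i)} \leq x) = \sum_{k=i}^n C_n^k\, F_X(x)^k \left(1 - F_X(x)\right)^{n-k}.
\]

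There is essentially no genuine obstacle here; the argument is a direct translation once the event equivalence is stated. The only point that warrants care is the combinatorial factor $C_n^k$, which counts the number of ways to select \emph{which} $k$ of the $n$ samples land at or below $x$; the displayed identity in the statement omits this factor and should read as above (consistent with how the lemma is subsequently applied in \ref{step_three}). I would also note that continuity of the $X_j$ is not required for this CDF identity itself, as the Bernoulli success probability $F_X(x)$ is well defined for an arbitrary distribution; continuity matters only elsewhere, where one needs ties among the samples to occur with probability zero.
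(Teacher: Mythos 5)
Your proof is correct: the paper states this lemma without proof, and your argument --- the event $\{X_{(i)} \leq x\}$ equals $\{N \geq i\}$ for the Binomial count $N = \sum_{j=1}^n I\{X_j \leq x\}$ with success probability $F_X(x)$ --- is the standard derivation. You are also right that the displayed identity in the lemma is missing the binomial coefficient $C_n^k$; the corrected formula $\sum_{k=i}^n C_n^k F_X(x)^k (1-F_X(x))^{n-k}$ is indeed the form actually used where the lemma is invoked in the proof of \Cref{thm:main}, so your observation fixes a typo rather than exposing a gap.
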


\begin{lem}[Properties of half-normal distribution] \label{lem:half_normal} 
1. The $\alpha$-th quantile of $\big| \calN(0, \sigma^2) \big|$ is $\sqrt{2} \sigma \operatorname{erf}^{-1}(\alpha)$.
2. The cumulative distribution function of $\big| \calN(0, \sigma^2) \big|$ is $F(x) = \operatorname{erf}\left(\frac{x}{\sqrt{2} \sigma}\right)$.
3. The probability density function of $\big| \calN(0, \sigma^2) \big|$ is $f(x) = \sqrt{\frac{2}{\pi \sigma^2}} \exp\left( -\frac{x^2}{2\sigma^2} \right)$.
\end{lem}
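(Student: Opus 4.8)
The plan is to work directly from the definition, writing $X = |Z|$ with $Z \sim \calN(0,\sigma^2)$, and to derive the three claims in the order (2), (3), (1), since the CDF yields the PDF by differentiation and the quantile by inversion.

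First I would establish the CDF (claim~2). For $x \geq 0$, symmetry of the centered Gaussian gives
\[
F(x) = \Pb(|Z| \leq x) = 2\,\Pb(0 \leq Z \leq x) = \frac{2}{\sqrt{2\pi}\,\sigma}\int_0^x \exp\!\left(-\frac{t^2}{2\sigma^2}\right) dt.
\]
The substitution $u = t/(\sqrt{2}\,\sigma)$ converts this into $\frac{2}{\sqrt{\pi}}\int_0^{x/(\sqrt{2}\sigma)} \exp(-u^2)\,du$, which is exactly the definition $\operatorname{erf}(z) = \frac{2}{\sqrt{\pi}}\int_0^z \exp(-u^2)\,du$ evaluated at $z = x/(\sqrt{2}\,\sigma)$. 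Hence $F(x) = \operatorname{erf}\!\left(x/(\sqrt{2}\,\sigma)\right)$.

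Next, claim~(3) follows by differentiating the CDF. Using $\frac{d}{dz}\operatorname{erf}(z) = \frac{2}{\sqrt{\pi}}\exp(-z^2)$ together with the chain-rule factor $1/(\sqrt{2}\,\sigma)$ gives $f(x) = \frac{2}{\sqrt{\pi}}\exp\!\left(-x^2/(2\sigma^2)\right)\cdot \frac{1}{\sqrt{2}\,\sigma} = \sqrt{2/(\pi\sigma^2)}\,\exp\!\left(-x^2/(2\sigma^2)\right)$ for $x \geq 0$, after collapsing the constants. Finally, claim~(1) is obtained by inverting the CDF: since $\operatorname{erf}$ is strictly increasing on $[0,\infty)$, solving $F(x) = \alpha$ gives $\operatorname{erf}\!\left(x/(\sqrt{2}\,\sigma)\right) = \alpha$, so $x = \sqrt{2}\,\sigma\,\operatorname{erf}^{-1}(\alpha)$, which is the stated $\alpha$-quantile.

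The computation is entirely routine; the only point requiring care is the change of variables in the CDF integral, so that the normalization constants collapse to precisely the $2/\sqrt{\pi}$ prefactor in the definition of $\operatorname{erf}$. There is no genuine conceptual obstacle here—the lemma merely records standard facts about the half-normal law—so the task is careful bookkeeping of the constants rather than any real difficulty.
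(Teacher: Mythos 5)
Your proposal is correct: the CDF computation via symmetry and the substitution $u = t/(\sqrt{2}\,\sigma)$, the differentiation yielding $\sqrt{2/(\pi\sigma^2)}\,\exp(-x^2/(2\sigma^2))$, and the inversion giving the quantile $\sqrt{2}\,\sigma\,\operatorname{erf}^{-1}(\alpha)$ are all accurate. The paper itself records this lemma as a standard fact without proof, and your derivation is precisely the canonical argument one would supply, with the normalization constants handled correctly throughout.
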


\begin{lem}[Central Limit Theorem for Quantile]\label{lem:clt}
    $X_1, \cdots, X_n$ are n i.i.d sampled drawn from a distribution with cdf $F$ and pdf $f$, 
    then for a fixed $p \in(0,1)$, provided that the following conditions hold: $t \mapsto f\left(F^{-1}(t)\right)$ is continuous at the point $p$ and $f\left(F^{-1}(p)\right)>0$, we have that, as $n \rightarrow \infty$,
    \begin{align}
        \sqrt{n} X_{(n p)} \stackrel{d}{\rightarrow} \calN \left( \sqrt{n} F^{-1}(p), \frac{p(1-p)}{\left[f \left(F^{-1}(p)\right) \right]^2} \right),
    \end{align}
\end{lem}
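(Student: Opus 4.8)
The plan is to obtain this statement as the uniform-weight special case of Proposition~\ref{prop:weighted_quantile_clt}, which is already established. First I would set $w_i = 1/n$ for every $i$ and $\beta = p$, and compute the effective sample size appearing there, $n_{\text{eff}} = \big(\sum_{i=1}^n n^{-2}\big)^{1/2} = n^{-1/2}$, so that its normalization $1/n_{\text{eff}}$ becomes exactly $\sqrt{n}$. Next I would verify that the Lyapunov-type hypothesis of that proposition holds for uniform weights: the ratio $\sum_{i=1}^n w_i^{\delta+2} / \big(\sum_{i=1}^n w_i^2\big)^{(2+\delta)/2}$ equals $n^{-\delta/2}$, which tends to $0$ for any fixed $\delta > 0$. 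Substituting $F_X = F$ and $f_X = f$ into its conclusion and noting that $\operatorname{Quantile}\big(p; \tfrac{1}{n}\sum_{i=1}^n \delta_{X_i}\big)$ is exactly the sample quantile $X_{(np)}$ would then yield
\begin{align*}
    \sqrt{n}\left( X_{(np)} - F^{-1}(p) \right) \stackrel{d}{\longrightarrow} \calN\left( 0, \frac{p(1-p)}{\big(f(F^{-1}(p))\big)^2} \right),
\end{align*}
which is the displayed statement written in its centered form.

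For a self-contained argument I would instead proceed classically. Let $F_n(x) = \frac{1}{n}\sum_{i=1}^n \mathds{1}\{X_i \le x\}$ be the empirical CDF. For each fixed $x$, the count $nF_n(x)$ is $\mathrm{Binomial}(n, F(x))$, so the ordinary central limit theorem gives $\sqrt{n}\big(F_n(x) - F(x)\big) \stackrel{d}{\longrightarrow} \calN\big(0, F(x)(1-F(x))\big)$. Writing the sample quantile as the generalized inverse $X_{(np)} = F_n^{-1}(p)$, I would then apply the delta method with the map $g = F^{-1}$, whose derivative at $p$ is $g'(p) = 1/f(F^{-1}(p))$; evaluating $F(F^{-1}(p))\big(1 - F(F^{-1}(p))\big) = p(1-p)$ and multiplying by $g'(p)^2$ recovers the stated asymptotic variance.

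The main obstacle is the step that converts the pointwise CLT for $F_n$ into a CLT for its inverse $F_n^{-1}(p)$: because $F_n$ is a step function, the delta method cannot be applied to it naively, and a rigorous treatment must instead exploit the equivalence $\{X_{(np)} \le x\} = \{F_n(x) \ge p\}$ evaluated along $x = F^{-1}(p) + t/\sqrt{n}$, or equivalently establish a Bahadur-type representation $X_{(np)} = F^{-1}(p) + \big(p - F_n(F^{-1}(p))\big)/f(F^{-1}(p)) + o_{\Pb}(n^{-1/2})$. This is exactly where the two hypotheses enter: positivity of $f(F^{-1}(p))$ guarantees a finite nonzero slope so the linearization is nondegenerate, while continuity of $t \mapsto f(F^{-1}(t))$ at $p$ controls the remainder term. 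Since Proposition~\ref{prop:weighted_quantile_clt} has already discharged precisely this difficulty for the weighted empirical CDF, I would present the reduction of the first paragraph as the primary proof and keep the direct argument only as a fallback.
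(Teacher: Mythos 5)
Your proof is correct, but note that the paper itself offers \emph{no} proof of this lemma: it is stated in the auxiliary-lemmas section as a classical fact (the asymptotic normality of sample quantiles), so there is no in-paper argument to compare against. Your first route --- specializing Proposition~\ref{prop:weighted_quantile_clt} to uniform weights $w_i = 1/n$ --- is arithmetically sound: with the proposition's (nonstandard) definition $n_{\text{eff}} = \sqrt{\sum_i w_i^2}$ you correctly get $1/n_{\text{eff}} = \sqrt{n}$, the Lyapunov ratio is $n^{-\delta/2} \to 0$, and the conclusion collapses to the centered form of the lemma. Two caveats are worth flagging. First, this reduction inherits whatever looseness is in the proposition's own proof, which applies the delta method to $F_X^{-1}(Y_n(x))$ at a \emph{fixed} $x$ and then asserts that this random variable \emph{is} the weighted sample quantile --- precisely the inversion step you identify as the main obstacle; so the reduction is only as rigorous as that proposition. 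Second, the identification of $\operatorname{Quantile}(p;\tfrac1n\sum_i\delta_{X_i})$ with $X_{(np)}$ holds only up to the usual $\lceil np\rceil$ rounding, which is harmless asymptotically but should be said. Your fallback classical argument (binomial CLT for $F_n(x)$ at $x = F^{-1}(p) + t/\sqrt{n}$, or a Bahadur representation) is the standard rigorous proof and correctly locates where the two hypotheses --- positivity of $f(F^{-1}(p))$ and continuity of $t \mapsto f(F^{-1}(t))$ at $p$ --- are used; if anything, I would present that as the primary proof rather than the fallback, since it does not lean on the weaker link.
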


\begin{lem}[Equation 7.1.13 of  \cite{abramowitz1948handbook}]\label{lem:erf}
    Denote $\Phi(x) = \int_x^\infty \exp^{-\frac{1}{2} t^2 } dt$,  then we have for $x \geq 0$:
    \begin{align}
        \frac{1}{x + \sqrt{x^2 + 1}} \exp(-2 x^2) \leq \Phi(x) \leq  \frac{1}{x + \sqrt{x^2 + 2 / \pi}} \exp(-2 x^2)
    \end{align}
    So when $|x|$ is very large, $\Phi(x) \asymp \exp(-2 x^2)$.
\end{lem}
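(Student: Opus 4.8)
The statement is the classical Abramowitz--Stegun inequality (7.1.13), a two-sided Komatsu-type bound on the Gaussian tail, so rather than reproducing the handbook derivation I would give a self-contained argument through the Mills-ratio ODE. Writing $\Phi(x)=\int_x^\infty e^{-t^2/2}\,dt$ and introducing the Mills ratio $r(x)=e^{x^2/2}\Phi(x)$, one differentiation gives the linear equation $r'(x)=x\,r(x)-1$ together with the boundary condition $r(x)\to 0$ as $x\to\infty$. I will compare $r$ against the rational-plus-radical candidates $b_c(x)=\tfrac{2}{x+\sqrt{x^2+c}}=\tfrac{2}{c}\bigl(\sqrt{x^2+c}-x\bigr)$, which are precisely the functions appearing in the claimed bounds after rescaling. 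The engine of the proof is the defect $E_c(x):=b_c'(x)-x\,b_c(x)+1$ together with the identity obtained by integrating $(e^{-x^2/2}w)'=-e^{-x^2/2}E_c$, where $w=r-b_c$, from $x$ to $\infty$: this yields $r(x)-b_c(x)=e^{x^2/2}\int_x^\infty e^{-u^2/2}E_c(u)\,du$, reducing both inequalities to sign control of $E_c$.

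For the lower bound I would take $c=4$. A short computation collapses the sign of $E_4$ to that of $g_4(u):=u^2+2-u\sqrt{u^2+4}$, and squaring the candidate inequality $u^2+2\ge u\sqrt{u^2+4}$ reduces it to $4\ge 0$; hence $E_4\ge 0$ on $[0,\infty)$ and the integral identity gives $r\ge b_4$ at once. This is the branch that is tight as $x\to\infty$.

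For the upper bound I would take $c=8/\pi$, which is pinned by the exact boundary match $b_{8/\pi}(0)=\sqrt{\pi/2}=\Phi(0)=r(0)$, equivalently $\int_0^\infty e^{-u^2/2}E_{8/\pi}(u)\,du=0$. This is where the main obstacle lies: $b_{8/\pi}$ is \emph{not} a global subsolution, since its defect $E_{8/\pi}$ changes sign, so a naive comparison fails. I would resolve this with a monotonicity argument. In general the sign of $E_c$ is that of $g_c(u)=u^2+c-2-u\sqrt{u^2+c}$, and computing $g_c'(u)=2u-\tfrac{2u^2+c}{\sqrt{u^2+c}}$ shows $g_c'\le 0$ for all $u\ge 0$ (the condition again squares down to $0\le c^2$). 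Thus $g_{8/\pi}$ is decreasing, positive at $0$ and negative at $\infty$, so $E_{8/\pi}$ crosses zero exactly once. Combining this single sign change with the vanishing total integral gives $w(x)=-e^{x^2/2}\int_0^x e^{-u^2/2}E_{8/\pi}\le 0$ before the crossing and $w(x)=e^{x^2/2}\int_x^\infty e^{-u^2/2}E_{8/\pi}\le 0$ after it, so $r\le b_{8/\pi}$ throughout.

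Finally I would transfer the canonical bounds $\tfrac{2e^{-x^2/2}}{x+\sqrt{x^2+4}}\le\Phi(x)\le\tfrac{2e^{-x^2/2}}{x+\sqrt{x^2+8/\pi}}$ to the displayed normalization: a linear substitution $t\mapsto ct$ rewrites $\int_x^\infty e^{-t^2/2}\,dt$ in the $e^{-2x^2}$ form and carries the radical constants $4,\,8/\pi$ into the stated $1,\,2/\pi$, with the asymptotic equivalence $\Phi(x)\asymp e^{-2x^2}$ read off from the common leading behaviour of the two sides as $x\to\infty$. The only genuinely delicate point is the upper bound, where the sign-changing defect forces the integral-identity-plus-monotonicity argument in place of a direct comparison; every remaining step is elementary algebra with the $\sqrt{x^2+c}$ terms.
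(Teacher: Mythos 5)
Your Mills-ratio argument is correct and complete as far as it goes. The ODE $r'(x)=x\,r(x)-1$, the defect identity $r(x)-b_c(x)=e^{x^2/2}\int_x^\infty e^{-u^2/2}E_c(u)\,du$, and the sign reduction are all sound: writing $S=\sqrt{u^2+c}$, the squared comparison indeed collapses $\operatorname{sign}(E_c)$ to $\operatorname{sign}\bigl((c-4)u^2+(c-2)^2\bigr)$, so $E_4\ge 0$ everywhere while $E_{8/\pi}$ crosses zero exactly once; your monotonicity of $g_c$ and the single-crossing-plus-vanishing-total-integral argument for the upper branch (anchored by $b_{8/\pi}(0)=\sqrt{\pi/2}=\Phi(0)$) then give a valid proof of
\[
\frac{2e^{-x^2/2}}{x+\sqrt{x^2+4}}\;\le\;\int_x^\infty e^{-t^2/2}\,dt\;\le\;\frac{2e^{-x^2/2}}{x+\sqrt{x^2+8/\pi}},\qquad x\ge 0.
\]
This is in substance the classical Komatsu/Birnbaum derivation behind Abramowitz--Stegun 7.1.13; note the paper offers no proof at all for this lemma (it is a bare citation), so a self-contained argument of this kind is exactly what a proof would have to look like.

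The genuine gap is your final ``transfer'' step, and it is not fixable: no substitution $t\mapsto ct$ can convert the bounds above into the displayed lemma, because a change of variables inside the integral cannot alter the decay of $\Phi(x)=\int_x^\infty e^{-t^2/2}\,dt$ as a function of $x$, which is $\asymp e^{-x^2/2}/x$ and never $\asymp e^{-2x^2}$. What substitution actually yields is $\int_x^\infty e^{-t^2/2}\,dt=2\int_{x/2}^\infty e^{-2s^2}\,ds$, i.e.\ the $e^{-2(\cdot)^2}$-form bounds apply at argument $x/2$, not $x$. Indeed the lemma as printed is false for the paper's $\Phi$: at $x=1$, $\Phi(1)\approx 0.398$ while the claimed upper bound is $e^{-2}/\bigl(1+\sqrt{1+2/\pi}\bigr)\approx 0.059$. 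The constants $1$ and $2/\pi$ belong to the tail of $e^{-2t^2}$: rescaling your canonical bounds gives
\[
\frac{e^{-2x^2}}{2\bigl(x+\sqrt{x^2+1}\bigr)}\;\le\;\int_x^\infty e^{-2t^2}\,dt\;\le\;\frac{e^{-2x^2}}{2\bigl(x+\sqrt{x^2+2/\pi}\bigr)},
\]
and even in that normalization the display is off by the factor $\tfrac12$ (its lower bound already fails at $x=0$, where $\int_0^\infty e^{-2t^2}\,dt=\tfrac12\sqrt{\pi/2}<1$). So the statement is a mis-scaled transcription of 7.1.13; your two-sided bound is the correct version, but your last paragraph asserts an equivalence that does not exist, and the closing claim ``$\Phi(x)\asymp e^{-2x^2}$'' is likewise wrong for this $\Phi$. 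The honest resolution is to correct the lemma's statement (and its downstream uses of $\Phi(x)\asymp e^{-2x^2}$ in the Theorem~\ref{thm:main} proof), not to look for a substitution that delivers the display.
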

% Note that $\operatorname{erf}(z) = 1 - \frac{\Phi(\sqrt{2}z)}{\sqrt{\pi}}$.

\section{Experiments}

\subsection{Experiments on Synthetic Data}

\label{subsec:app_exp_syn}

\noindent\textbf{Implementation Details.} 
For WCP, the propensity model is implemented as a logistic regression model, which is widely adopted in the causal inference literature. For density ratio estimation, we use the MLP model from scikit-learn\footnote{\url{https://scikit-learn.org/stable/}} to classify whether a given data point $(x,y)$ is from observational or interventional distribution.

\noindent\textbf{Results of Nested Methods for ITE.} 
We skipped the experiment for wTCP-DR as the nested methods from~\cite{lei2021conformal} for ITE requires inferring confidence intervals of potential outcomes on the massive $\mathcal{D}^O_{cal}$, leading to extremely heavy computational cost.
Table~\ref{tab:nested} shows results on ITE with nested inexact and exact methods which can construct ITE intervals from intervals of counterfactual outcomes.
As we can see, under the nested inexact method, none of the methods achieve 0.9 coverage, as this method does not guarantee coverage.
While the nested exact method can significantly expand the confidence interval, leading to low efficiency.

\noindent\textbf{Ablation Study on Density Estimation Method: MLP vs Density Estimator (DR).}
We compare two different density estimators, i.e., MLP from scikit-learn and density estimator densratio\footnote{\url{https://github.com/hoxo-m/densratio_py}} (DR) on the synthetic dataset, where we adopt the same setting as the results shown in Table~\ref{table:main_res_cf_outcome}.
Intuitively, directly modeling the density of the joint distribution (DR) is more challenging than classifying whether a data point is from the observational or the interventional distribution (MLP).
We can observe that the coverage of wTCP-DR drops significantly when DR is used, because an inaccurate estimate of density ratio would result in worse coverage of wTCP-DR. wSCP-DR (Exact and Inexact) are more robust against inaccurate density ratios due to the correction taken from the second-stage inference.

\noindent\textbf{Results with Different Settings.}
Here, we illustrate the results for different dimensionalities of the observed features ($dim(X)$) in Fig.~\ref{fig:impact_dim_x_interval} and results for different sample size of interventional data ($m$) in Fig.~\ref{fig:cevae_m_coverage}.
In Fig.~\ref{fig:impact_dim_x_interval}, we can observe that the coverage rates of all methoeds increase as $dim(X)$ grows, which corresponds to less hidden confounding.
At the same time, the interval widths of most of the methods become narrower when $dim(X)$ increases due to the decrease of calibration error of the underlying regression models given more informative observed features $X$.
For WCP, it only provides expected coverage guarantees when $dim(X)$ is large, which leads to weak hidden confounding and accurate estimates of propensity scores.
Its interval widths increase with $dim(X)$ such that the coverage can be guaranteed.
In Fig.~\ref{fig:cevae_m_coverage}, we show the coverage and interval width with $m$ ranging within $\{10,20,50,100,250,500,750,1,000\}$. For all methods, the coverage is increasing with $m$ and the interval width is decreasing with $m$, as expected. This is because, for small $m$, $m<50$, wTCP-DR cannot achieve the specified level of coverage (0.9) because the density ratio estimator has high variance. As $m$ increases, wTCP-DR reaches the coverage of 0.9 and the smallest interval width.

\begin{table}[ht]
\caption{Results of ITE on synthetic data under the nested inexact and exact methods~\cite{lei2021conformal}.}
\label{tab:nested}
\centering
\footnotesize
\begin{tabular}{lcccc}
\hline
Method & Coverage ITE (Nested Inexact) & Interval Width ITE (Nested Inexact) & Coverage ITE (Nested Exact) & Interval Width ITE (Nested Exact) \\ 
\hline
wSCP-DR(Inexact) & \(0.749 \pm 0.055\) & \(0.422 \pm 0.011\) & \(0.938	\pm 0.012\)	& \(0.767 \pm	0.011\)\\ 
wSCP-DR(Exact)   & \(0.819 \pm 0.033\) & \(0.504 \pm 0.009\) & \(0.948 \pm	0.016\) & 	\(0.847	\pm 0.008\) \\ 
WCP              & \(0.458 \pm 0.062\) & \(0.224 \pm 0.007\) & 
\(0.865 \pm	0.027\) &	\(0.602 \pm	0.006\)\\ 
Naive            & \(0.850 \pm 0.060\) & \(0.558 \pm 0.095\) & \(0.945	\pm 0.019\) &	\(0.943 \pm	0.104\)\\ 
\hline
\end{tabular}

\end{table}

\begin{table}[h]
\centering
\caption{Comparison of MLP and DR as density estimators with wTCP-DR and wSCP-DR (Inexact and Exact). The setting is the same as Table~\ref{table:main_res_cf_outcome}.}
\label{tab:MLP_vs_DR}
\footnotesize
\begin{tabular}{@{}lccccccc@{}}
\toprule
& Method &  Coverage $Y(0)$ \textuparrow
 & Interval Width $Y(0)$ \textdownarrow
 & Coverage $Y(1)$ \textuparrow
 & Interval Width $Y(1)$ \textdownarrow
 & Coverage ITE \textuparrow &  Interval Width ITE \textdownarrow
 \\ 
\midrule
MLP & wSCP-DR(Inexact) & 0.891 $\pm$ 0.026 & 0.414 $\pm$ 0.008 & 0.889 $\pm$ 0.019 & 0.421 $\pm$ 0.013 & 0.942 $\pm$ 0.017 & 0.835 $\pm$ 0.016\\
MLP & wSCP-DR(Exact) & 0.934 $\pm$ 0.026 & 0.496 $\pm$ 0.010 & 0.935 $\pm$ 0.023 & 0.503 $\pm$ 0.010 & 0.957 $\pm$ 0.018 & 0.998 $\pm$ 0.015 \\
MLP & wTCP-DR & 0.899 $\pm$ 0.028 & 0.386 $\pm$ 0.013 & 0.923 $\pm$ 0.015 & 0.576 $\pm$ 0.066 & 0.953 $\pm$ 0.015 & 0.962 $\pm$	0.074\\
DR & wSCP-DR(Inexact) & 0.899 $\pm$ 0.024 & 0.423 $\pm$ 0.013 & 0.874 $\pm$ 0.014 & 0.411 $\pm$ 0.011 & 0.946 $\pm$ 0.020 & 0.834 $\pm$ 0.015 \\
DR & wSCP-DR(Exact) & 0.936 $\pm$ 0.014 & 0.503 $\pm$ 0.009 & 0.934 $\pm$ 0.004 & 0.493 $\pm$ 0.017 & 0.966 $\pm$ 0.014 & 0.996 $\pm$ 0.009\\
DR & wTCP-DR & 0.847 $\pm$ 0.022 & 0.363 $\pm$ 0.011 & 0.853 $\pm$ 0.031 & 0.372 $\pm$ 0.013 & 0.910 $\pm$ 0.020 & 0.735 $\pm$ 0.016 \\
\bottomrule
\end{tabular}
\end{table}

% \noindent\textbf{Results.}
\begin{figure*}[ht]
    \centering
    \subfloat[Coverage of $Y(0)$]{
\includegraphics[width=0.31\textwidth]{figs/cevae/x_dim/coverage_0_dr_use_Y_1_ite_method_naive.png}
        \label{fig:cevae_cover_conf_str_y0}
    }
    \hfill
    \subfloat[Coverage of $Y(1)$]{
\includegraphics[width=0.31\textwidth]{figs/cevae/x_dim/coverage_1_dr_use_Y_1_ite_method_naive.png}
        \label{fig:cevae_cover_conf_str_ite}
    }
    \hfill
    \subfloat[Coverage of ITE]{
\includegraphics[width=0.31\textwidth]{figs/cevae/x_dim/coverage_ITE_dr_use_Y_1_ite_method_naive.png}
        \label{fig:cevae_cover_conf_str_ite}
    } \hfill
    \subfloat[Interval width of $Y(0)$]{
\includegraphics[width=0.31\textwidth]{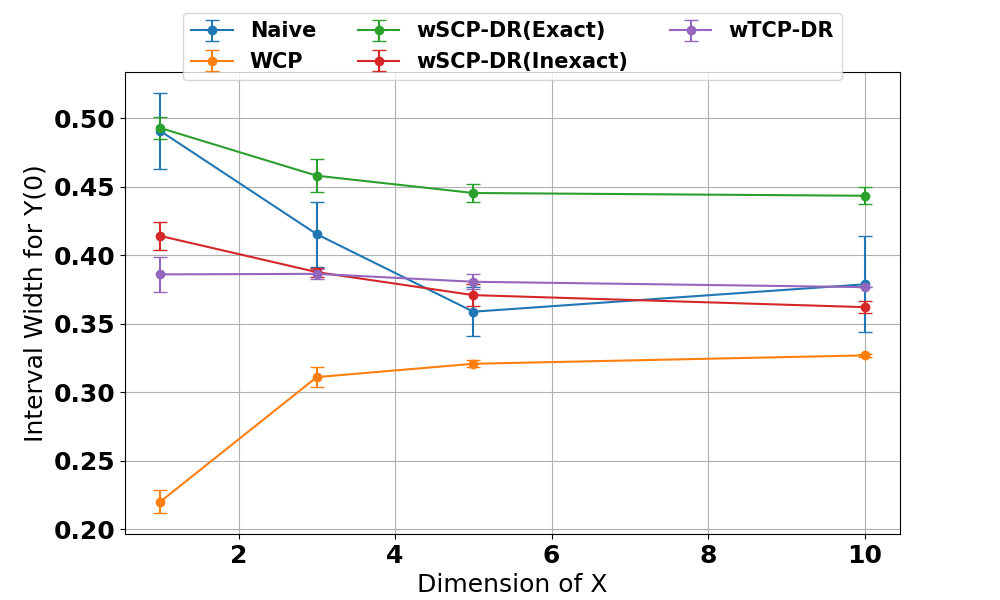}
        \label{fig:cevae_interval_conf_str_y0}
    }
    \hfill
    \subfloat[Interval width of $Y(1)$]{
\includegraphics[width=0.31\textwidth]{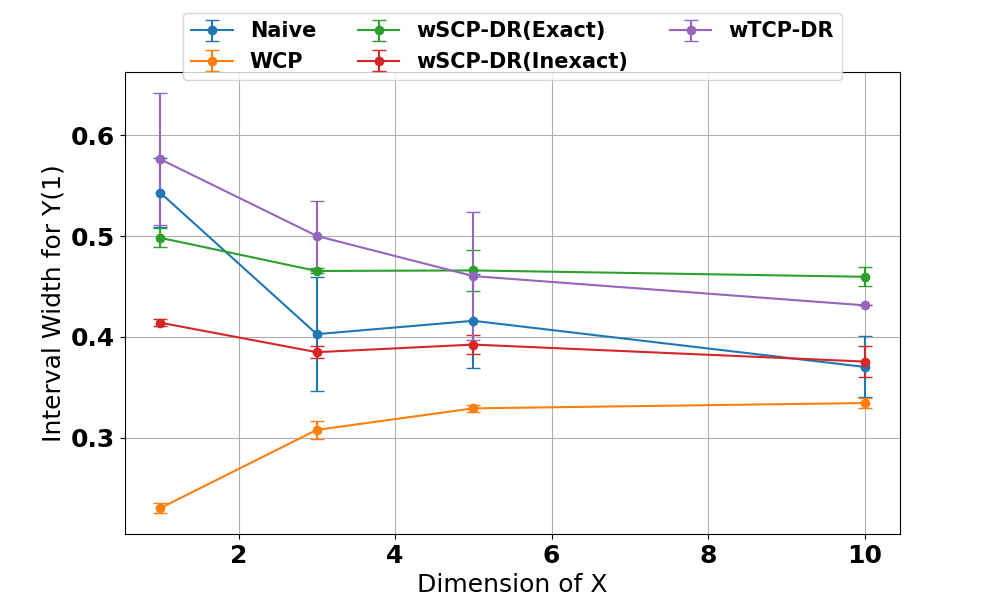}
        \label{fig:cevae_interval_conf_str_ite}
    }
    \hfill
    \subfloat[Interval width of ITE]{
\includegraphics[width=0.31\textwidth]{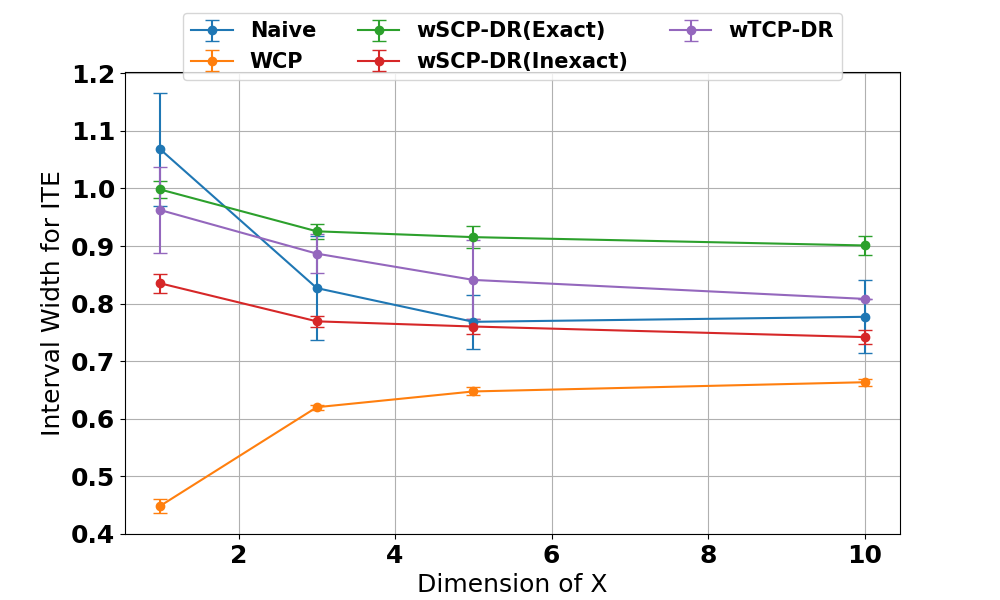}
        \label{fig:cevae_interval_conf_str_ite}
    }
    \caption{Coverage and interval width results of counterfactual outcomes and ITE with varying hidden confounding strength. Higher dimensional $X$ carries more information of the hidden confounders, leading to weaker hidden confounding.}
    \label{fig:impact_dim_x_interval}
\end{figure*}

\begin{figure}[htb!]
    \centering
    \subfloat[Coverage of $Y(0)$ with different $m$]{
\includegraphics[width=0.45\textwidth]{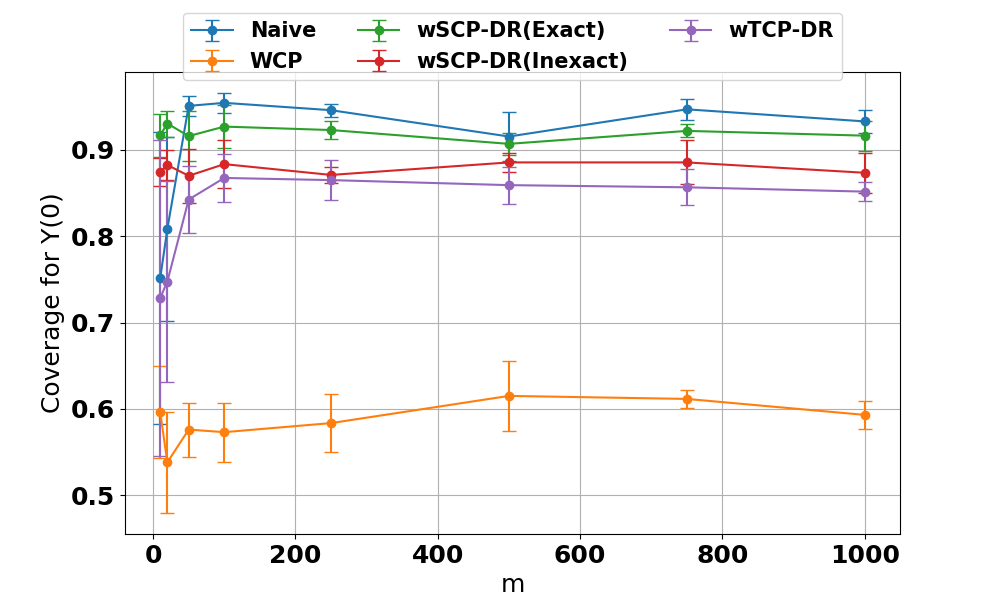}
        \label{fig:cevae_cover_y0_n_int}
    }
    \hfill
    \subfloat[Coverage of $Y(1)$ with different $m$]{
\includegraphics[width=0.45\textwidth]{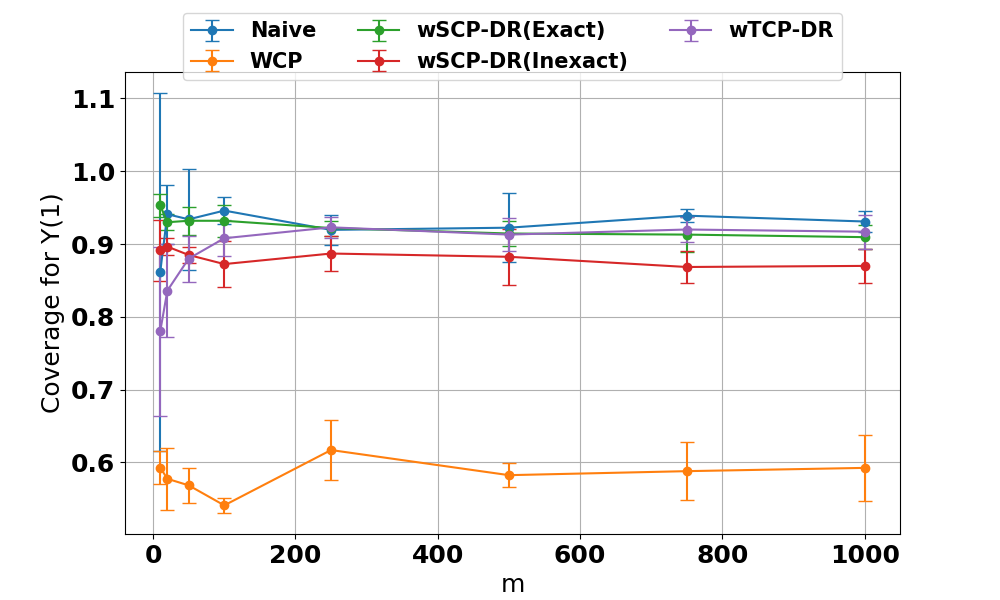}
        \label{fig:cevae_width_y1_n_int}
    }    \hfill
    \subfloat[Interval width of $Y(0)$ with different $m$]{
\includegraphics[width=0.45\textwidth]{figs/cevae/n_int/interval_width_Y0_x_dim_1.png}
        \label{fig:cevae_cover_y0_n_int}
    }
    \hfill
    \subfloat[Interval width of $Y(1)$ with different $m$]{
\includegraphics[width=0.45\textwidth]{figs/cevae/n_int/interval_width_Y1_x_dim_1.png}
        \label{fig:cevae_width_y1_n_int}
    }
%     \hfill
%        \subfloat[Coverage of ITE with different $m$]{
% \includegraphics[width=0.31\textwidth]{}
%         \label{fig:cevae_width_y1_n_int}
%     }
    \caption{Impact of interventional data size $m$ on coverage and efficiency of conformal inference methods.}
    \label{fig:cevae_m_coverage}
    \vspace{10pt}
\end{figure}

\subsection{Experiments on Recommendation System Data}

\label{subsec:app_exp_rec}

\textbf{Implementation Details.} We use MSE loss to train matrix factorization (MF) models~\cite{koren2009matrix} with 64 dimensional embeddings as the base model for rating prediction, which is one of the most popular approaches in recommendation systems~\cite{schnabel2016recommendations,wang2018deconfounded}.
In this setting, the features (user/item embeddings) are learned from the factual outcomes $Y$, leading to their capability to capture part of hidden confounding.
We use the Python version of the package densratio for density ratio estimation of our method to handle the high dimensional.
For WCP-NB, following~\cite{schnabel2016recommendations,chen2021autodebias}, we fit a Naive Bayes classifier to model the propensity $P(T=1|X,Z,Y)$.
It is simplified as $P(T=1|Y)=\frac{P(Y|T=1)P(T=1)}{P(Y)}$. As $P(Y|T=0)$ is not available in the observational data, $P(Y)$ can only be estimated from the interventional data where treatment is randomized ($P(Y)=P^I(Y)=P^I(Y|T)$). So, WCP-NB needs to use interventional data with outcomes.
In this case, WCP-NB can be seens as a variant of our method using a different density ratio estimator based on propensity scores.

%
% The conformity scores $\{V_i(x_i)\}$ are computed on $\mathcal{D}_{cal}^O$.
%
% Then, for wSCP-TR (Inexact and Exact), weights $\hat{w}$ computed from $\mathcal{D}^O_{cal}$ and $\mathcal{D}^I_{cal}$ which is used to reweight the conformity scores. Finally, the confidence interval for the test data $\eta$ is computed using $\hat{w}$ and $\{V_i(x_i)\}$. 

% \noindent\textbf{Impact of $m_{tr}$ and $m_{cal}$.}
% Here, we study the impact of $m_{tr}$ and $m_{cal}$ on the Coat dataset.

% \textit{Impact of $m_{cal}$.}

\begin{figure*}[ht]
    \centering
%     \subfloat[Test empirical coverage with different $m_{cal}$ on Yahoo]{
% \includegraphics[width=0.45\textwidth]{}
%         \label{fig:yahoo_m_cal_vs_width}
%     }
%     \hfill
%     \subfloat[Test interval width with different $m_{cal}$ on Yahoo]{
% \includegraphics[width=0.45\textwidth]{}
%         \label{fig:coat_m_cal_vs_width}
%     }
%     \hfill
    \subfloat[Test empirical coverage with different $m_{cal}$ on Coat]{
        \includegraphics[width=0.45\textwidth]{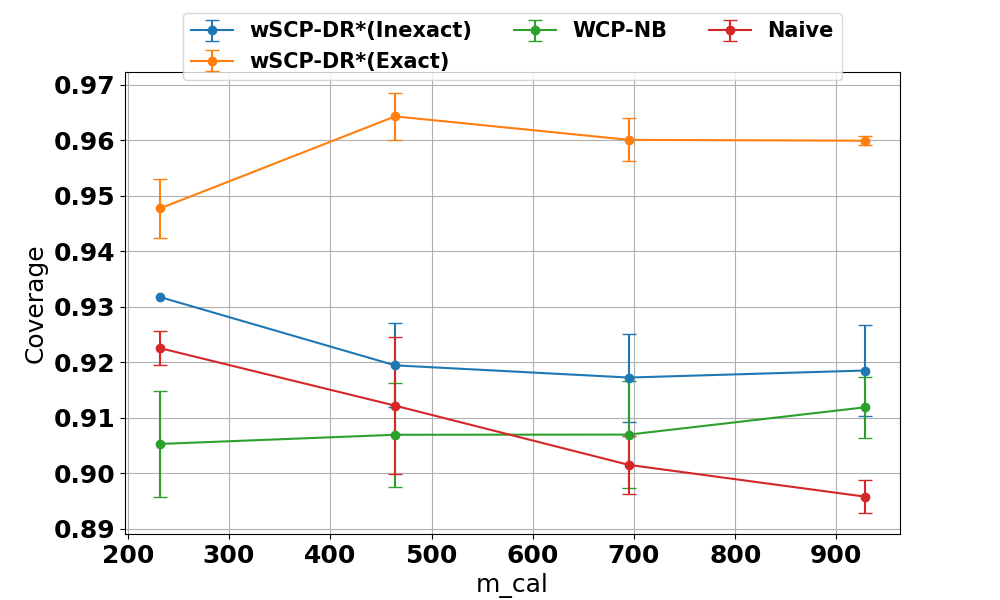}
        \label{fig:coat_m_cal_vs_cover}
    }
    \hfill
    \subfloat[Test interval width with different $m_{cal}$ on Coat]{
\includegraphics[width=0.45\textwidth]{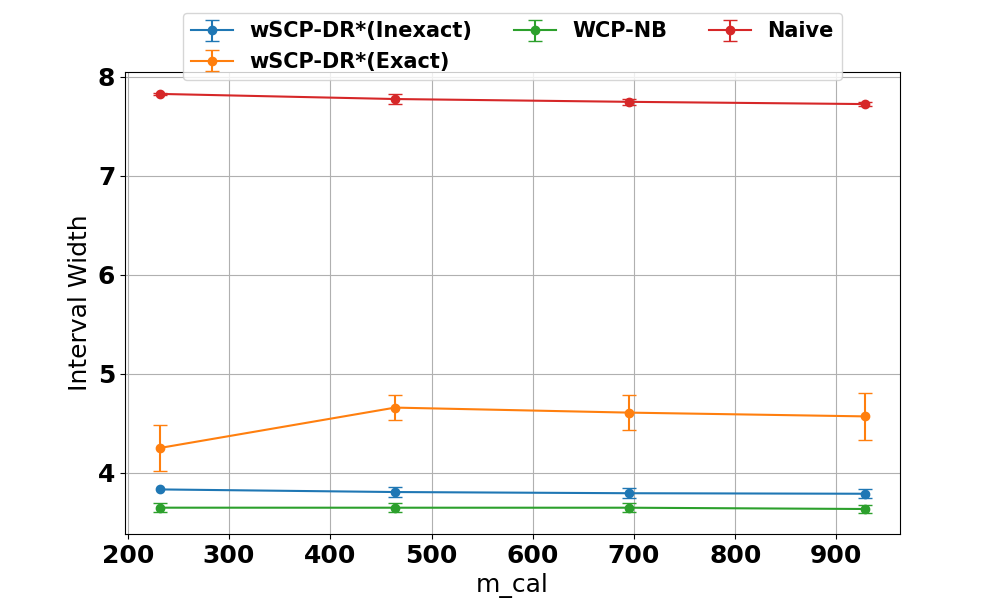}
        \label{fig:coat_m_cal_vs_width}
    }
    \caption{Results on Coat with different $m_{cal}$}
    \label{fig:m_cal}
\end{figure*}

\noindent\textbf{Impact of $m_{cal}$.} We maintain $m_{tr}=0.2m, m_{ts}=0.6m$ and modify $m_{cal}\in\{0.05m, 0.1m, 0.15m, 0.2m\}$. Results are shown in Fig.~\ref{fig:m_cal}. All the methods maintain coverage close or above 0.9 for all cases.
In terms of efficiency, we can observe that the efficiency of Naive gets slightly improved with increasing $m_{cal}$.

\begin{figure}[ht]
    \centering
    \subfloat[Test empirical coverage with different $m_{tr}$ on Coat]{
        \includegraphics[width=0.45\linewidth]{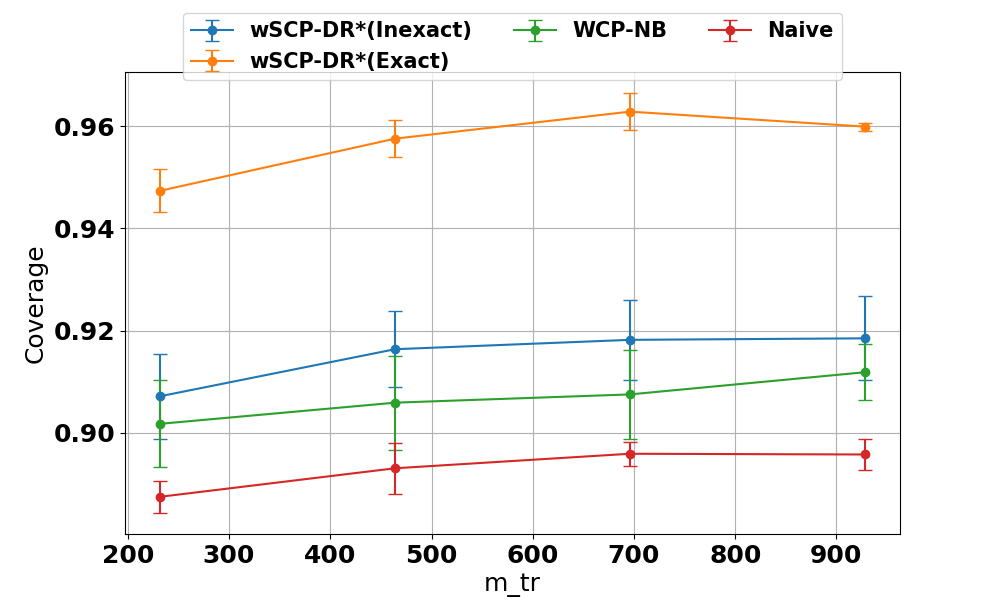}
        \label{fig:coat_m_tr_vs_cover}
    }
    \hfill
    \subfloat[Test interval width with different $m_{tr}$ on Coat]{
\includegraphics[width=0.45\linewidth]{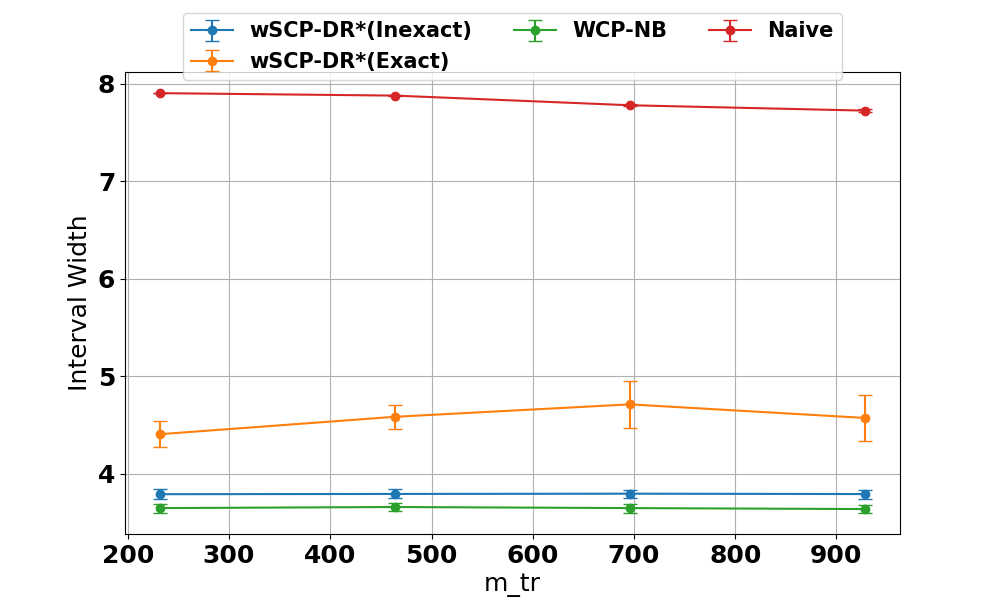}
        \label{fig:coat_m_tr_vs_width}
    }
    \caption{Results on Coat with different $m_{tr}$}
    \label{fig:coat_m_tr}
\end{figure}

\noindent\textbf{Impact of $m_{tr}$.} We maintain $m_{cal}=0.2m, m_{ts}=0.6m$ and modify $m_{tr}\in\{0.05m, 0.1m, 0.15m, 0.2m\}$.
Fig.~\ref{fig:coat_m_tr} shows results on Coat where $m$ is small.
We make the following observations.
First, the efficiency of Naive is improved because its base model has lower MSE with more training data, leading to smaller confidence intervals.
Second, the coverage of all methods are improved, as more trainig samples from the interventional distribution can improve the base model for the Naive method, density ratio estimators for our methods and the propensity model for WCP-NB.

\end{appendices}

\end{document}